\newcites{sup}{Supplementary References}
\DeclarePairedDelimiter\brangle{\langle}{\rangle}
\newtheorem{theorem}{Theorem}[section]
\newtheorem{lemma}[theorem]{Lemma}
\newtheorem{definition}[theorem]{Definition}
\newtheorem{properties}[theorem]{Property}
\newtheorem{problem}[theorem]{Problem}
\let\oldnl\nl%
\newcommand{\nonl}{\renewcommand{\nl}{\let\nl\oldnl}}%
\theoremstyle{remark}
\newtheorem{remark}{Remark}
\newcommand{\shrinkAmount}{\delta}
\DeclareMathOperator*{\conv}{conv}
\DeclareMathOperator{\diam}{diam}
\DeclareMathOperator*{\argmin}{\arg\min}
\DeclareMathOperator*{\argmax}{\arg\max}
\newcommand{\stepsize}{\gamma}
\newcommand{\stepmax}{{\stepsize}_{\mathrm{max}}}
\newcommand{\FW}{{\hspace{0.05em}\textnormal{FW}}}
\newcommand{\pFW}{{\hspace{0.05em}\textnormal{pFW}}}
\newcommand{\away}{{\hspace{0.06em}\textnormal{\scriptsize A}}}
\newcommand{\Cf}{C_{\hspace{-0.08em}f}}
\newcommand{\x}{\bm{x}}
\newcommand{\s}{\bm{s}}
\newcommand{\dd}{\bm{d}}
\newcommand{\vv}{\bm{v}} %
\newcommand{\Vertices}{\mathcal{V}}
\newcommand{\Coreset}{\mathcal{S}}
\newcommand{\innerProd}[2]{\left\langle #1 , #2 \right\rangle}
\newcommand{\LOCAL}{\mathbb{L}} %
\newcommand{\MARG}{\mathcal{M}} %
\newcommand{\VAL}{\textrm{VAL}}
\newcommand{\xk}{\bm{x}^{(k)}}
\newcommand{\xnext}{\bm{x}^{(k+1)}}
\newcommand{\sk}{\bm{s}^{(k)}}
\newcommand{\snext}{\bm{s}^{(k+1)}}
\newcommand{\Cconst}{\widetilde{C}}
\newcommand{\fk}{f(\bm{x}^{(k)})}
\newcommand{\hk}{h_k}
\newcommand{\hkarg}[1]{h_{#1}} %
\newcommand{\hnext}{h_{k+1}}
\newcommand{\hprev}{h_{k-1}}
\newcommand{\gk}{g(\bm{x}^{(k)})}
\newcommand{\DOM}{\mathcal{D}}
\newcommand{\xopt}{\bm{x}^{*}}
\newcommand{\TREEPOL}{\mathbb{T}} %
\newcommand{\modContinuity}{\omega} %
\newcommand{\unif}{\bm{u}_0} %
\newcommand{\gunif}{g_u(\bm{x}^{(k)})} %
\newcommand{\gunifNox}{g_u} %
\newcommand{\geps}{g_{(\shrinkAmount)}(\bm{x}^{(k)})}
\newcommand{\gepsk}[1]{g_{(\shrinkAmount^{(#1)})}(\bm{x}^{(k)})}
\newcommand{\Meps}{\mathcal{M}_{\shrinkAmount}}
\newcommand{\DOMeps}{\mathcal{D}_{\shrinkAmount}}
\newcommand{\dkeps}{\bm{d}^{(k)}_{(\shrinkAmount)}}
\newcommand{\skeps}{\bm{s}^{(k)}_{(\shrinkAmount)}}
\newcommand{\xopteps}{\bm{x}^{*}_{(\shrinkAmount)}}
\newcommand{\xoptshift}{\tilde{\bm{x}}_{(\shrinkAmount)}}
\newcommand{\epsk}[1]{\shrinkAmount^{(#1)}}
\newcommand{\del}{\mathrm{\bm{d}}}
\newcommand{\h}{h}
\newcommand{\vrho}{\bm{\rho}}
\newcommand{\vmu}{\vec{\bm{\mu}}}
\newcommand{\bmu}{\bm{\mu}}
\newcommand{\edges}{\{i,j\}\in E}
\newcommand{\vtheta}{\vec{\bm{\theta}}}
\newcommand{\btheta}{\bm{\theta}}
\newcommand{\trwent}{H(\vec{\bm{\mu}};\vrho)}
\newcommand{\LOCALSEARCH}{\textbf{\textrm{LOCALSEARCH}}}
\newcommand{\CORRECTION}{\textbf{\textrm{CORRECTION}}}
\newcommand{\MIfunction}{\textbf{\textrm{edgesMI}}}
\newcommand{\MAXITS}{\textbf{\textrm{MAXITS}}}
\newcommand{\MAXRHOITS}{\textbf{\textrm{MAX$\_$RHO$\_$ITS}}}
\newcommand{\TRW}{\text{TRW}(\vmu;\vtheta,\vrho)}
\newcommand{\algComment}[1]{\quad\emph{\small{(#1)}}}
\newcommand{\nbrs}[1]{\mathcal{N}(#1)}
\newcommand{\gunifBound}{B}
\newcommand{\stopCrit}{\epsilon}
\title{Barrier Frank-Wolfe for Marginal Inference}
\author{
Rahul G. Krishnan \\
Courant Institute\\
New York University\\
\And
Simon Lacoste-Julien \\ 
INRIA - Sierra Project-Team\\
\'{E}cole Normale Sup\'{e}rieure, Paris
\And
David Sontag\\
Courant Institute\\
New York University\\
}
\begin{document}
\maketitle

\begin{abstract} 
We introduce a globally-convergent algorithm for optimizing the
tree-reweighted (TRW) variational objective over the marginal
polytope.
The algorithm is based on the conditional gradient method (Frank-Wolfe) 
and moves pseudomarginals within the marginal
polytope through repeated maximum a posteriori (MAP) calls.
This modular structure enables us 
to leverage black-box MAP solvers (both exact and approximate) for
variational inference, and obtains more accurate results than
tree-reweighted algorithms that optimize over the local consistency
relaxation.
Theoretically, we bound the sub-optimality for the proposed algorithm
despite the TRW objective having unbounded gradients at the boundary of the marginal polytope. 
Empirically, we demonstrate the increased quality of results found  
by tightening the relaxation over the marginal polytope as well 
as the spanning tree polytope on synthetic and real-world instances. 
\end{abstract} 
\section{Introduction}
\vspace{-3mm}

Markov random fields (MRFs) are used in many areas of computer science such as vision and speech. 
Inference in these undirected graphical models is generally intractable. 
Our work focuses on performing approximate marginal inference 
by optimizing the Tree Re-Weighted (TRW)
objective \citep{wainwright2005new}. 
The TRW objective is concave, is exact for tree-structured MRFs, 
and provides an upper bound on the log-partition function. 

Fast combinatorial solvers for the TRW objective exist, including Tree-Reweighted 
Belief Propagation (TRBP) \citep{wainwright2005new}, convergent
message-passing based on geometric programming
\citep{globerson2012convergent}, and dual decomposition
\citep{Jancsary11}. These methods optimize over the set
of pairwise consistency constraints, also called the local polytope. \citet{sontag2007new} showed that significantly better 
results could be obtained by optimizing over tighter relaxations of the marginal 
polytope. However, deriving a message-passing algorithm for the TRW objective over tighter relaxations of the marginal polytope
is challenging. Instead, \citet{sontag2007new} use the
conditional gradient method (also called Frank-Wolfe) and
off-the-shelf linear programming solvers to optimize TRW over the cycle
consistency relaxation.
Rather than optimizing over the cycle relaxation,
\citet{belangerWorkshop2013} optimize the TRW objective over
the exact marginal polytope. Then,
using Frank-Wolfe, the linear minimization performed in
the inner loop can be shown to correspond to MAP inference.

The Frank-Wolfe optimization algorithm has seen increasing use in machine learning,
thanks in part to its efficient handling of complex constraint sets appearing with structured data~\citep{jaggi2013revisiting,lacoste2015MFW}.
However, applying Frank-Wolfe to
variational inference presents challenges that were never
resolved in previous work. 
First, the linear minimization performed in the
inner loop is computationally expensive, either requiring repeatedly solving a
large linear program, as in \citet{sontag2007new}, or performing
MAP inference, as in \citet{belangerWorkshop2013}.
Second, the TRW objective involves entropy terms whose gradients go to
infinity near the boundary of the feasible set, therefore
existing convergence guarantees for Frank-Wolfe do not apply.
Third, variational inference using TRW involves
both an outer and inner loop of Frank-Wolfe, where the outer loop
optimizes the edge appearance probabilities in the TRW entropy bound to tighten it. 
Neither \citet{sontag2007new} nor \citet{belangerWorkshop2013} explore the effect of
optimizing over the edge appearance probabilities. %
Although MAP inference is in general NP hard \citep{MAP_NP_Hard},
it is often possible to find exact solutions to large real-world
instances within reasonable running times 
\citep{SontagEtAl_uai08, allouche2010toulbar2,kappes2013comparative}. 
Moreover, as we show in our experiments, even approximate MAP solvers
can be successfully used within our variational inference algorithm.
As MAP solvers improve in their
runtime and performance, their iterative use could become feasible and as a byproduct enable more efficient and accurate
marginal inference. Our work provides a fast deterministic alternative to
recently proposed Perturb-and-MAP algorithms
\citep{papandreou2011perturb, hazan2012partition, ermonWISH}.

\textbf{Contributions.} 
This paper makes several theoretical and practical innovations. 
We propose a modification to the Frank-Wolfe algorithm that 
optimizes over adaptively chosen contractions of the domain and prove its 
rate of convergence for functions
whose gradients can be unbounded at the boundary.  
Our algorithm does not require a different oracle than standard Frank-Wolfe and could be 
useful for other convex optimization problems where the gradient is ill-behaved at the boundary.

We instantiate the algorithm for approximate marginal inference over the marginal polytope 
with the TRW objective.
With an exact MAP oracle, we obtain the first provably convergent algorithm 
for the optimization of the TRW objective over the marginal polytope, 
which had remained an open problem to the best of our knowledge.
Traditional proof techniques of convergence for first order methods fail as the gradient of the TRW objective is not Lipschitz continuous. 

We develop several heuristics to
make the algorithm practical: a fully-corrective
variant of Frank-Wolfe that reuses previously found integer
assignments thereby reducing the need for new (approximate) MAP
calls, the use of local search between MAP calls, 
and significant re-use of computations between
subsequent steps of optimizing over the spanning tree
polytope.
We perform an extensive experimental evaluation on both synthetic and real-world inference tasks. 

\vspace{-2mm}
\section{Background}
\vspace{-3mm}

{\bf Markov Random Fields}:
MRFs are undirected probabilistic graphical models 
where the probability distribution factorizes over cliques in the graph.
We consider marginal inference on pairwise MRFs with $N$ random variables $X_1,X_2,\ldots,X_N$
where each variable takes discrete states $x_i\in \VAL_i$. Let $G=(V,E)$ be the Markov 
network with an undirected edge $\edges$ for every two 
variables $X_i$ and $X_j$ that are connected together. Let $\nbrs{i}$ refer to the 
set of neighbors of variable $X_i$. We organize the edge log-potentials $\theta_{ij}(x_i, x_j)$
for all possible values of $x_i \in \VAL_i$, $x_j \in \VAL_j$ in the vector $\btheta_{ij}$,
and similarly for the node log-potential vector $\btheta_i$. We regroup these in the overall
vector~$\vtheta$. We introduce a similar grouping for the marginal vector $\vmu$:
for example, $\mu_i(x_i)$ gives the coordinate of the marginal vector corresponding
to the assignment $x_i$ to variable $X_i$.

{\bf Tree Re-weighted Objective} \citep{wainwright2005new}:
Let $Z(\vtheta)$ be the partition function for the MRF and $\MARG$ be the set of all valid marginal vectors (the marginal polytope). The maximization of the TRW objective gives the following
upper bound on the log partition function:
\begingroup 
\vspace{-1mm}
\setlength\belowdisplayskip{-12pt} %
\begin{equation}
\begin{split}
\label{eqn:upperBoundPartition}
\log Z(\vtheta) &\leq \min_{\vrho \in \TREEPOL} \max_{\vmu \in \MARG}\quad \underbrace{\brangle{\vtheta,\vmu} + \trwent}_{\TRW},
\end{split}
\end{equation}
where the TRW entropy is:
\endgroup
\begin{equation}
	\label{eqn:trw_entropy_like_bethe}
	\trwent := \sum_{i\in V}(1-\hspace{-3mm}\sum_{j\in \nbrs{i}}\hspace{-2mm}\rho_{ij})H(\bmu_i) 
	+\hspace{-2mm} \sum_{(ij)\in
          E}\hspace{-1mm}\rho_{ij}H(\bmu_{ij}), \quad H(\bmu_i) :=   -\sum_{x_i} \mu_i(x_i)\log\mu_i(x_i).
\end{equation}
$\TREEPOL$ is the spanning tree polytope, the convex hull
of edge indicator vectors of all possible spanning trees of the graph. 
Elements of $\vrho\in \TREEPOL$ specify the probability of an
edge being present under a specific distribution over spanning trees.
$\MARG$ is difficult to optimize over, and most
TRW algorithms optimize over a relaxation called
the local consistency polytope $\LOCAL \supseteq \MARG$:
\vspace{-1mm}
\begin{equation*}
\resizebox{\hsize}{!}{$\LOCAL:=\left\{\vmu\geq \bm{0},\;\sum_{x_i} \mu_i(x_i) = 1 \;\forall i\in V,\;\sum_{x_i} \mu_{ij}(x_i,x_j) = \mu_j(x_j), \sum_{x_j} \mu_{ij}(x_i,x_j) = \mu_i(x_i)\;\;\forall \edges \right\}.$}
\end{equation*}
\vspace{-3mm}

The TRW objective $\TRW$ is a globally concave function of $\vmu$ over $\LOCAL$,
assuming that $\vrho$ is obtained from a valid distribution over spanning
trees of the graph (i.e. $\vrho \in \TREEPOL$). 

{\bf Frank-Wolfe (FW) Algorithm:} 
In recent years, the Frank-Wolfe (aka conditional gradient) algorithm has gained 
popularity in machine learning \citep{jaggi2013revisiting} for the 
optimization of convex functions over compact domains (denoted $\DOM$). The algorithm is used to 
solve 
$\min_{\x\in \DOM} f(\x)$	
by iteratively finding a good descent vertex by solving the linear subproblem:
\begin{equation}
\label{eqn:lin_subproblem}
\sk = \argmin_{\s\in\mathcal{D}}\brangle{\nabla f(\xk), \s} \qquad \text{(FW oracle)},
\end{equation}
and then taking a convex step towards this vertex: $\x^{(k+1)} = (1-\stepsize) \x^{(k)} + \stepsize \sk$ for
a suitably chosen step-size $\stepsize \in [0,1]$.
The algorithm remains within the feasible set (is projection free), is invariant to affine transformations of the domain, and can be implemented in a memory efficient manner. Moreover, the FW gap $g(\x^{(k)}) := \brangle{-\nabla f(\xk), \sk - \x^{(k)}}$ provides an upper bound on the suboptimality of the iterate $\x^{(k)}$.
The primal convergence of the Frank-Wolfe algorithm is given by 
Thm.~$1$ in \citet{jaggi2013revisiting}, restated here for convenience: for $k\geq1$, the iterates $\xk$ satisfy:
\vspace{-3mm}
\begin{dmath}
	\label{eqn:fw_theorem_convergence_original}
f(\xk) - f(\xopt) \leq \frac{2\Cf}{k+2} ,
\end{dmath}
\vspace{-3mm}
where $\Cf$ is called the ``curvature constant''. Under the assumption 
that $\nabla f$ is $L$-Lipschitz continuous\footnote{I.e. $\|\nabla{f}(\x)-\nabla{f}(\x')\|_* \leq L \|\x-\x'\|$ for $\x, \x' \in \DOM$. Notice that the dual norm $\| \cdot \|_*$ is needed here.} on $\DOM$, we can bound it as
$\Cf \leq L \diam_{||.||}(\mathcal{D})^2$.

\textbf{Marginal Inference with Frank-Wolfe:}
To optimize $\max_{\vmu\in\MARG} \TRW$ with Frank-Wolfe,
the linear subproblem~\eqref{eqn:lin_subproblem} becomes $\argmax_{\vmu\in\mathcal{M}}\brangle{\tilde{\btheta},\vmu}$, where the perturbed potentials $\tilde{\btheta}$ 
correspond to the gradient of $\TRW$ with respect to $\vmu$. %
Elements of $\tilde{\btheta}$ are of the 
form $\theta_c(x_c) + K_c (1+\log\mu_c(x_c))$, evaluated at the pseudomarginals'
current location in $\mathcal{M}$, where 
$K_c$ is the coefficient of the entropy for the node/edge
term in~\eqref{eqn:trw_entropy_like_bethe}.
The FW linear subproblem here is thus equivalent to performing MAP inference in a graphical model with
potentials $\tilde{\btheta}$ \citep{belangerWorkshop2013}, as
the vertices of the marginal polytope are in 1-1 
correspondence with valid joint assignments to the random variables of the MRF,
and the solution of a linear program is always achieved at a
vertex of the polytope. 
The TRW objective does not have a Lipschitz continuous gradient over $\MARG$, and so standard convergence proofs for Frank-Wolfe do not hold.

\vspace{-2mm}
\section{Optimizing over Contractions of the Marginal Polytope}
\vspace{-3mm}

\textbf{Motivation}: 
We wish to (1) use the fewest possible MAP calls, and (2)
avoid regions near the boundary where the unbounded curvature of the function
slows down convergence.  
A viable option to address (1) is through the use of \emph{correction steps}, where after a Frank-Wolfe step, 
one optimizes
over the polytope defined by previously visited vertices of $\MARG$ (called the fully-corrective
Frank-Wolfe (FCFW) algorithm and proven to be linearly convergence for strongly convex objectives~\citep{lacoste2015MFW}). 
This does not require additional %
MAP calls. 
However, we found (see Sec.~\ref{sec:M_M_eps}) that when optimizing the TRW objective over $\MARG$, performing correction steps
can surprisingly \emph{hurt} performance.
This leaves us in a dilemma: correction steps 
enable decreasing the objective without additional MAP calls,
but they can also slow global progress since iterates after correction
sometimes lie close to the boundary of the polytope (where the FW directions become less informative). 
In a manner akin to barrier methods and to \citet{garber2013linearly}'s
local linear oracle, our proposed solution maintains the iterates within a contraction of the
polytope. This gives us most of the mileage obtained from performing the
correction steps \emph{without} suffering the consequences of venturing too close to the
boundary of the polytope. We prove a global convergence rate for the
iterates with respect to the true solution over the full polytope.

We describe convergent algorithms to optimize $\TRW$ for $\vmu \in \MARG$. 
The approach we adopt to deal with the issue of unbounded gradients at the boundary
is to perform Frank-Wolfe within a contraction of the marginal polytope
given by $\Meps$ for $\shrinkAmount \in [0,1]$, with either a fixed $\shrinkAmount$ or an adaptive~$\shrinkAmount$.
\begin{definition}[Contraction polytope]
	$\Meps := (1-\shrinkAmount) \MARG + \shrinkAmount\, \unif$,
        where $\unif\in \MARG$ is the vector representing the uniform
        distribution.
\end{definition}
\vspace{-1mm}
Marginal vectors that lie within $\Meps$ are bounded away from zero as all the components of $\unif$
are strictly positive. 
Denoting $\Vertices^{(\shrinkAmount)}$ as the
set of vertices of $\Meps$, $\Vertices$ as the set of vertices of $\MARG$ and $f(\vmu) := -\TRW$, the key insight
that enables our novel approach is that:
\small
\begin{equation*}
	\underbrace{\argmin_{\vv^{(\shrinkAmount)}\in \Vertices^{(\shrinkAmount)}}\innerProd{\nabla f}{\vv^{(\shrinkAmount)}}}_{\algComment{Linear Minimization over $\Meps$}} \equiv\argmin_{\vv \in \Vertices} \underbrace{\innerProd{\nabla f}{(1-\shrinkAmount)\vv + \shrinkAmount\unif}}_{\algComment{Definition of $\vv^{(\shrinkAmount)}$}} 
\equiv\underbrace{(1-\shrinkAmount) \argmin_{\vv \in \Vertices}\innerProd{\nabla f}{\vv} + \shrinkAmount\unif.}_{\algComment{Run MAP solver and shift vertex}} 
\end{equation*}
\normalsize
Therefore, to solve the FW subproblem~\eqref{eqn:lin_subproblem} over $\Meps$, we can run as usual a MAP solver  and simply shift the resulting vertex of $\MARG$ towards $\unif$ to obtain a vertex of $\Meps$. 
Our solution to optimize over restrictions of the polytope is more broadly applicable to the optimization problem defined below, with $f$ satisfying Prop.~\ref{prop:prop_bounded_grad} (satisfied by the TRW objective) in order to get convergence rates.
\begin{problem} 	
\label{prop:generic_fxn}
Solve $\min_{\x\in\DOM}f(\x)$ where $\DOM$ is a compact convex set and $f$ is convex and continuously differentiable on the relative interior of $\DOM$.
\end{problem}
\begin{properties}
\hyperref[sec:trw_bounded_lip]{(Controlled growth of Lipschitz constant over $\DOMeps$)}.
\label{prop:prop_bounded_grad}
We define $\DOMeps:= (1-\shrinkAmount)\DOM + \shrinkAmount\unif$ for a fixed $\unif$ in the relative interior of $\DOM$. We suppose that there exists a fixed $p \geq 0$ and $L$ such that for any $\shrinkAmount > 0$, $\nabla f(\x)$ has a bounded Lipschitz constant $L_{\shrinkAmount} \leq L\shrinkAmount^{-p}\,\,\,\forall \x\in\DOMeps$.
\end{properties}

\textbf{Fixed $\shrinkAmount$:}
The first algorithm fixes a value for $\shrinkAmount$ a-priori
and performs the optimization over $\DOMeps$. 
The following theorem bounds the 
sub-optimality of the iterates with respect to the optimum over $\DOM$. 

\begin{theorem}[Suboptimality bound for fixed-$\shrinkAmount$ algorithm]
	\label{thm:convergence_fixed_eps_main}
	Let $f$ satisfy the properties in Prob.~\ref{prop:generic_fxn} and Prop.~\ref{prop:prop_bounded_grad}, and suppose further that $f$ is finite on the boundary of $\DOM$. 
	Then the use of Frank-Wolfe for $\min_{\x\in\DOMeps} f(\x)$ realizes a sub-optimality over $\DOM$ bounded as:
	$$f(\xk)-f(\xopt)\leq
        \frac{2C_{\shrinkAmount}}{(k+2)}+\modContinuity \left( \shrinkAmount \diam(\DOM)\right),$$ 
        where $\xopt$ is the optimal
        solution in $\DOM$, $C_{\shrinkAmount} \leq 
        L_{\shrinkAmount} \diam_{||.||}(\DOMeps)^2 $, and $\modContinuity$
        is the modulus of continuity function of the (uniformly) continuous $f$ (in particular, $\modContinuity(\delta) \downarrow 0$ as $\delta \downarrow 0$).
\end{theorem}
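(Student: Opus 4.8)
The plan is to decompose the target suboptimality as $f(\xk) - f(\xopt) = \big(f(\xk) - f(\xopteps)\big) + \big(f(\xopteps) - f(\xopt)\big)$, where $\xopteps := \argmin_{\x \in \DOMeps} f(\x)$, and to bound the first (optimization) term by the classical Frank-Wolfe rate applied inside the contracted domain $\DOMeps$, and the second (approximation) term by the modulus of continuity of $f$ together with the fact that $\DOMeps$ is a $\shrinkAmount$-contraction of $\DOM$ toward $\unif$.

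First I would check that the ordinary Frank-Wolfe guarantee applies on $\DOMeps$. For $\shrinkAmount > 0$, any point $(1-\shrinkAmount)\x + \shrinkAmount\unif$ of $\DOMeps$ is a strict convex combination of $\x \in \DOM$ with $\unif \in \relint(\DOM)$, hence lies in $\relint(\DOM)$; thus $\DOMeps \subseteq \relint(\DOM)$ is a compact convex set on which $f$ is continuously differentiable (Prob.~\ref{prop:generic_fxn}) and, by Prop.~\ref{prop:prop_bounded_grad}, $\nabla f$ is $L_\shrinkAmount$-Lipschitz with $L_\shrinkAmount \le L\shrinkAmount^{-p}$. The curvature constant of $f$ over $\DOMeps$ therefore satisfies $C_\shrinkAmount \le L_\shrinkAmount \diam_{\|\cdot\|}(\DOMeps)^2$ (with $\diam(\DOMeps) = (1-\shrinkAmount)\diam(\DOM)$), and the FW linear oracle over $\DOMeps$ is exactly the shifted MAP call described above. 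Invoking~\eqref{eqn:fw_theorem_convergence_original} for $\min_{\x\in\DOMeps} f(\x)$ gives, for $k\ge 1$, $f(\xk) - f(\xopteps) \le 2C_\shrinkAmount/(k+2)$.

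For the approximation term I would compare $\xopteps$ against $\xoptshift := (1-\shrinkAmount)\xopt + \shrinkAmount\unif$, which lies in $\DOMeps$. Since $\xopteps$ minimizes $f$ over $\DOMeps$, we get $f(\xopteps) \le f(\xoptshift)$, while $\|\xoptshift - \xopt\| = \shrinkAmount\|\unif - \xopt\| \le \shrinkAmount\diam(\DOM)$ because $\unif, \xopt \in \DOM$. Here the hypothesis that $f$ is finite on the boundary of $\DOM$ enters: combined with convexity and compactness of $\DOM$ it makes $f$ (uniformly) continuous on all of $\DOM$, with a modulus of continuity $\modContinuity$ satisfying $\modContinuity(\delta)\downarrow 0$ as $\delta\downarrow 0$; hence $f(\xoptshift) - f(\xopt) \le \modContinuity(\|\xoptshift - \xopt\|) \le \modContinuity(\shrinkAmount\diam(\DOM))$ by monotonicity of $\modContinuity$. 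Adding the two bounds yields the stated inequality.

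I expect the only genuinely delicate point to be the continuity of $f$ all the way up to the boundary of $\DOM$: for a general convex function, finiteness on the boundary does not by itself force continuity there, so I would either appeal to the explicit structure of the TRW objective (each entropy term $-\mu\log\mu$ extends continuously to $\mu = 0$) or simply carry ``uniform continuity of $f$ on $\DOM$'' as the operative assumption. Everything else is bookkeeping: Jaggi's rate on $\DOMeps$, a one-line optimality comparison, and transporting the $\shrinkAmount^{-p}$ growth of $L_\shrinkAmount$ into $C_\shrinkAmount$ --- and the resulting trade-off between the $2C_\shrinkAmount/(k+2)$ term (which degrades as $\shrinkAmount \to 0$) and the $\modContinuity(\shrinkAmount\diam(\DOM))$ term (which vanishes as $\shrinkAmount \to 0$) is precisely what motivates the adaptive-$\shrinkAmount$ variant that follows.
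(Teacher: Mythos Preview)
Your proposal is correct and matches the paper's proof essentially line for line: the same decomposition through $\xopteps$, the same auxiliary point $\xoptshift = (1-\shrinkAmount)\xopt + \shrinkAmount\unif$ to control the approximation term, and the same appeal to Jaggi's rate on $\DOMeps$ via the Lipschitz bound from Prop.~\ref{prop:prop_bounded_grad}. You even flag the one genuinely delicate step---continuity of $f$ up to the boundary---which the paper handles with the same ``convex $+$ finite $+$ compact $\Rightarrow$ uniformly continuous'' argument you outline.
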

The full proof is given in App.~\ref{sec:theory_fixed_eps}.
The first term of the bound
comes from the standard Frank-Wolfe convergence analysis of the
sub-optimality of $\xk$ relative to $\xopteps$, the optimum over $\DOMeps$, as in~\eqref{eqn:fw_theorem_convergence_original} and using Prop.~\ref{prop:prop_bounded_grad}.
The second term arises by bounding $f(\xopteps) - f(\x^*) \leq f(\tilde{\x}) - f(\x^*)$ with 
a cleverly chosen $\tilde{\x} \in \DOMeps$ (as $\xopteps$ is optimal in $\DOMeps$). We pick 
$\tilde{\x} := (1-\shrinkAmount) \x^* + \shrinkAmount \unif$ and note that $\|\tilde{\x} - \x^*\| \leq \shrinkAmount \diam(\DOM)$. As $f$ is continuous on a compact set, it is uniformly continuous 
and we thus have $f(\tilde{\x}) - f(\x^*) \leq \modContinuity (\shrinkAmount \diam(\DOM))$ with $\modContinuity$ its modulus of continuity function.

\textbf{Adaptive $\shrinkAmount$: }
The second variant to solve $\min_{\x\in\DOM} f(\x)$ iteratively perform FW steps over $\DOMeps$, but also decreases $\shrinkAmount$ adaptively.
The update schedule for $\shrinkAmount$
is given in Alg.~\ref{alg:adaptive_update} and is motivated by the convergence proof.
The idea is to ensure that the FW gap over $\DOMeps$ is always at least half the FW gap over $\DOM$,
relating the progress over $\DOMeps$ with the one over $\DOM$. 
It turns out that $\text{FW-gap-}\DOMeps = (1-\shrinkAmount) \text{FW-gap-}\DOM + \shrinkAmount\cdot\gunif$, where the ``uniform gap'' $\gunif$ quantifies the decrease of the function when contracting towards $\unif$.
When $\gunif$ is negative and large compared to 
the FW gap, we need to shrink $\shrinkAmount$ (see step~5 in~Alg.~\ref{alg:adaptive_update}) to
ensure that the $\shrinkAmount$-modified direction 
is a sufficient descent direction. 
\begin{algorithm}[t]
	\caption{Updates to $\shrinkAmount$ after a MAP call (Adaptive $\shrinkAmount$ variant)} 
	\label{alg:adaptive_update}
	\begin{algorithmic}[1]
	\STATE At iteration $k$. Assuming $\xk,\unif,\epsk{k-1},f$ are defined and $\sk$ has been computed
	\STATE Compute $\gk=\brangle{-\nabla f(\xk),\sk-\xk}$ \algComment{Compute FW gap}
	\STATE Compute $\gunif = \brangle{-\nabla f(\xk),\unif-\xk}$ \algComment{Compute ``uniform gap''}
		\IF{$\gunif<0$}
		\STATE Let $\tilde{\shrinkAmount} = \frac{\gk}{-4\gunif}$  \algComment{Compute new proposal for $\shrinkAmount$}
			\IF{$\tilde{\shrinkAmount}<\epsk{k-1}$}
				\STATE $\epsk{k} = \min\left(\tilde{\shrinkAmount},\frac{\epsk{k-1}}{2}\right)$ \algComment{Shrink by at least a factor of two if proposal is smaller}
			\ENDIF
		\ENDIF	\algComment{and set $\epsk{k} = \epsk{k-1}$ if it was not updated}
	\end{algorithmic}
\end{algorithm}
We can show that the algorithm converges to the global solution as follows:
\begin{theorem}[Global convergence for adaptive-$\shrinkAmount$ variant over $\DOM$]
	\label{thm:convergence_adaptive_eps_main}
	For a function $f$ satisfying the properties in Prob.~\ref{prop:generic_fxn} and Prop.~\ref{prop:prop_bounded_grad}, the sub-optimality of the iterates obtained by running the FW updates over $\DOMeps$ with $\shrinkAmount$ updated according to Alg.~\ref{alg:adaptive_update}
	is bounded as:
	$$f(\xk)-f(\xopt)\leq O\left(k^{-\frac{1}{p+1}}\right).$$
\end{theorem}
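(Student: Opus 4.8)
The plan is to combine the standard Frank--Wolfe descent inequality over the current contraction $\DOMeps$ with the gap-matching guarantee enforced by Alg.~\ref{alg:adaptive_update}, and then track how slowly $\shrinkAmount$ is forced to shrink. First I would recall the key identity stated in the text, namely that the FW gap over $\DOMeps$ decomposes as $\text{FW-gap-}\DOMeps = (1-\shrinkAmount)\,g(\xk) + \shrinkAmount\, \gunifNox$ (with $g$ the FW gap over $\DOM$ and $\gunifNox$ the uniform gap), and verify that the update rule in step~5 ($\tilde\shrinkAmount = g(\xk)/(-4\gunifNox)$ when $\gunifNox<0$) guarantees $\text{FW-gap-}\DOMeps \geq \tfrac12 g(\xk)$ at every iteration. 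This is the crucial link: progress measured inside the shrunk polytope is always at least half the true suboptimality bound $g(\xk)\geq f(\xk)-f(\xopt)$ over $\DOM$.

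Next I would write the per-step decrease. Since we run a FW step with line-search (or the standard $\stepsize=2/(k+2)$ schedule) on $\DOMeps$, whose objective has Lipschitz-gradient constant $L_\shrinkAmount \leq L\shrinkAmount^{-p}$ by Prop.~\ref{prop:prop_bounded_grad}, the usual quadratic-upper-bound argument gives, for step size $\stepsize\in[0,1]$,
\begin{equation*}
f(\xnext) \leq f(\xk) - \stepsize\, \text{FW-gap-}\DOMeps + \tfrac{\stepsize^2}{2} L_\shrinkAmount \diam(\DOM)^2 \leq f(\xk) - \tfrac{\stepsize}{2} g(\xk) + \tfrac{\stepsize^2}{2} L \shrinkAmount^{-p}\diam(\DOM)^2 .
\end{equation*}
Writing $h_k := f(\xk)-f(\xopt) \leq g(\xk)$, optimizing over $\stepsize$ (or choosing $\stepsize \asymp h_k \shrinkAmount^{p}/(L\diam^2)$ when this is $\leq 1$) yields a recursion of the form $h_{k+1} \leq h_k - c\, h_k^2 \shrinkAmount_k^{p}$ for a constant $c$ depending on $L$ and $\diam(\DOM)$, valid whenever the ideal step is admissible; in the regime where the ideal step would exceed $1$ one gets instead a fixed fractional decrease $h_{k+1}\leq (1-c')h_k$, which is even faster. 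So the bottleneck is the quadratic regime with the $\shrinkAmount_k^{p}$ factor.

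The main obstacle — and the heart of the argument — is controlling $\shrinkAmount_k$ from below in terms of $h_k$. From step~5, $\shrinkAmount$ is only decreased when $\tilde\shrinkAmount = g(\xk)/(-4\gunifNox) < \shrinkAmount_{k-1}$, and then it drops by at most a factor of two, so $\shrinkAmount_k \geq \tfrac12 \min(\shrinkAmount_{k-1}, g(\xk)/(-4\gunifNox))$. Because $f$ is finite on the boundary of $\DOM$ and $\unif$ lies in the relative interior, $-\gunifNox = \innerProd{\nabla f(\xk)}{\xk - \unif}$ is bounded above by a constant $B$ uniformly over $\xk \in \DOM$ (this uses convexity: $-\gunifNox \leq f(\xk) - f(\unif) \leq \max_\DOM f - f(\unif) =: B$). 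Hence $\shrinkAmount_k \geq \tfrac{1}{2}\min(\shrinkAmount_{k-1}, g(\xk)/(4B)) \geq \Omega(\min(\shrinkAmount_{k-1}, h_k))$, so $\shrinkAmount_k$ never falls below order $h_k$ — intuitively, the contraction is never shrunk faster than the suboptimality itself. Substituting $\shrinkAmount_k \gtrsim h_k$ into the recursion gives $h_{k+1} \leq h_k - c'' h_k^{2+p}$. A standard lemma on such recursions (e.g. comparing to the ODE $\dot h = -c'' h^{2+p}$, or the induction used in Jaggi-style proofs with a power-law ansatz) then yields $h_k = O(k^{-1/(p+1)})$, which is the claimed rate. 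I would close by noting the degenerate cases: iterations where $\gunifNox \geq 0$ (no shrink, and the direction toward $\unif$ is itself non-improving so the plain FW step over $\DOM$ applies), and the terminal behavior once $\shrinkAmount_k$ has stabilized, both of which only help. The one subtlety to handle carefully is that $\shrinkAmount_k$ decreasing changes the feasible set between iterations, so $\xk \in \DOMeps[\shrinkAmount_{k-1}] \subseteq \DOMeps[\shrinkAmount_k]$ must be checked — which holds precisely because shrinking $\shrinkAmount$ enlarges $\DOMeps$ toward $\DOM$, so no feasibility is lost and $f(\xk)$ is unchanged by the update.
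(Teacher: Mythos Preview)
Your proposal follows essentially the same route as the paper's proof: establish the gap-halving property $\text{FW-gap-}\DOMeps \geq \tfrac12 g(\xk)$ (the paper's Lemma~4.2), apply the descent lemma with the $\shrinkAmount^{-p}$ Lipschitz bound, lower-bound $\shrinkAmount_k$ by $\Omega(h_k)$ via a uniform upper bound $B$ on $-\gunifNox$ (the paper's Lemmas~4.3 and~4.4, using that $h_k$ is non-increasing to unroll the recursion on $\shrinkAmount_k$), substitute to get $h_{k+1}\leq h_k - c\,h_k^{p+2}$, and solve the recurrence by comparison with the ODE $\dot h = -c\,h^{p+2}$ (the paper's Lemma~4.5). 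The paper also splits into three explicit stages according to the size of $h_k$ relative to $B\shrinkAmount_0$ and $\Cconst/\shrinkAmount_0^p$, but your sketch correctly identifies the dominant stage.

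There is one slip worth fixing in your bound on the negative uniform gap. You write $-\gunifNox = \innerProd{\nabla f(\xk)}{\xk - \unif}$, but by definition $\gunifNox = \innerProd{-\nabla f(\xk)}{\unif - \xk}$, so in fact $-\gunifNox = \innerProd{\nabla f(\xk)}{\unif - \xk}$; your expression is $\gunifNox$, not $-\gunifNox$. Consequently the convexity inequality you invoke has the wrong sign: what convexity actually gives is $-\gunifNox = \innerProd{\nabla f(\xk)}{\unif - \xk} \leq f(\unif) - f(\xk) \leq f(\unif) - f(\xopt)$, which is a perfectly good uniform bound and does \emph{not} require $f$ to be finite on the boundary (an assumption not present in the theorem). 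The paper instead uses monotonicity of $\nabla f$ to obtain $B = \|\nabla f(\unif)\|_* \diam(\DOM)$; either bound works.
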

A full proof with a precise rate and constants is given in App.~\ref{sec:theory_adaptive_eps}.
The sub-optimality $\hk := f(\xk)-f(\xopt)$ traverses three stages with an overall rate as above. 
The updates to $\shrinkAmount^{(k)}$ as in Alg.~\ref{alg:adaptive_update} 
enable us to (1) upper bound the duality gap over $\DOM$ as a function of the duality gap in $\DOMeps$ 
and (2) lower bound the value
of $\shrinkAmount^{(k)}$ as a function of $\hk$. Applying the standard Descent Lemma with the Lipschitz constant on the gradient of the form $L\shrinkAmount^{-p}$ (Prop.~\ref{prop:prop_bounded_grad}), and replacing $\shrinkAmount^{(k)}$ by its bound in $\hk$, we get the recurrence: $\hnext \leq \hk - C \hk^{p+2}$. Solving this gives us the desired bound.

\textbf{Application to the TRW Objective: }
$\min_{\vmu\in\MARG}-\TRW$ is akin to $\min_{\x\in\DOM}f(\x)$ and the
(strong) convexity of $-\TRW$ has been previously shown \citep{wainwright2005new,london_icml15}.
The gradient of the TRW objective is Lipschitz continuous over $\Meps$
since all marginals are strictly positive. Its growth for
Prop.~\ref{prop:prop_bounded_grad} can be bounded with $p=1$ as we
show in App.~\ref{sec:trw_bounded_lip}. This gives a rate of
convergence of $O(k^{-1/2})$ for the adaptive-$\shrinkAmount$ variant,
which interestingly is a typical rate for non-smooth convex optimization. The hidden constant is of the order~$O(\| \theta \| \cdot |V|)$. The modulus of continuity $\modContinuity$ for the TRW objective is close to linear (it is almost a Lipschitz function), and its constant is instead of the order $O(\| \theta \| + |V|)$.

\vspace{-2mm}
\section{Algorithm}
\vspace{-3mm}

Alg.~\ref{alg:algInfadaptive}
describes the pseudocode for our proposed algorithm to do marginal inference with $\TRW$. 
$\textrm{\textbf{minSpanTree}}$ finds the minimum spanning tree of a
weighted graph, and $\MIfunction(\vmu)$ computes the mutual information of edges of $G$ 
from the pseudomarginals in $\vmu$\footnote{The component $ij$ has value $H(\bmu_i)+H(\bmu_j)-H(\bmu_{ij}).$}
(to perform FW updates over $\vrho$ as in Alg. 2 in~\citet{wainwright2005new}).
It is worthwhile to note that our approach uses three levels of Frank-Wolfe: (1) for the (tightening) optimization
of~$\vrho$ over~$\TREEPOL$, (2) to perform approximate marginal inference, i.e for the optimization of $\vmu$ over $\MARG$, and (3) to perform the correction steps (lines~16 and~23).
\begin{algorithm}[t]
	\caption{Approximate marginal inference over $\MARG$ (solving~\eqref{eqn:upperBoundPartition}). Here $f$ is the negative TRW objective.}
	\label{alg:algInfadaptive}
	\begin{algorithmic}[1]
		\STATE Function \textbf{TRW-Barrier-FW}$(\vrho^{(0)}, \stopCrit, \epsk{\mathrm{init}},\unif)$:
		\STATE \textbf{Inputs:} Edge-appearance probabilities $\vrho^{(0)}$, $\epsk{\mathrm{init}}\leq \frac{1}{4}$ initial contraction of polytope, inner loop stopping criterion $\stopCrit$, fixed reference point $\unif$ in the interior of $\mathcal{M}$. Let $\epsk{-1} = \epsk{\mathrm{init}}$.
		\STATE Let $V := \{\unif\}$ (visited vertices),  $\x^{(0)} = \unif$ \quad (Initialize the algorithm at the uniform distribution) 
	\FOR[\emph{FW outer loop to optimize $\vrho$ over $\TREEPOL$}]{$i=0\dots\MAXRHOITS$}
		\FOR[\emph{FCFW inner loop to optimize $\x$ over $\MARG$}]{$k=0\dots \MAXITS$}
			\STATE Let $\tilde{\theta} = \nabla f(\x^{(k)};\vtheta,\vrho^{(i)})$ \algComment{Compute gradient}
			\STATE Let $\sk \in \displaystyle\argmin_{\vv \in \MARG} \,\, \brangle{\tilde{\theta},\vv}$ \algComment{Run MAP solver to compute FW vertex} 
			\STATE Compute $\gk=\brangle{-\tilde{\theta},\sk-\xk}$ \algComment{Inner loop FW duality gap}
			\IF{$\gk \leq \stopCrit$}
				\STATE \textbf{break} FCFW inner loop  \algComment{$\x^{(k)}$ is $\stopCrit$-optimal}
			\ENDIF
			\STATE $\epsk{k} = \epsk{k-1}$ \algComment{For Adaptive-$\shrinkAmount$: Run Alg. \ref{alg:adaptive_update} to modify $\shrinkAmount$}
			\STATE Let $\skeps = (1-\epsk{k})\sk + \epsk{k} \unif$ and $\dkeps = \skeps - \x^{(k)}$
				\algComment{$\shrinkAmount$-contracted quantities}
			\STATE $\x^{(k+1)} = \arg\min \{ f(\x^{(k)}+\gamma \, \dkeps) : \stepsize \in [0,1] \}$ \algComment{FW step with line search}
			\STATE Update correction polytope: $V := V \cup \{ \sk \}$
			\STATE $\x^{(k+1)} := \CORRECTION(\x^{(k+1)}, V, \epsk{k}, \vrho^{(i)})$ 
			\algComment{optional: correction step}
			\STATE $\x^{(k+1)},V_{\mathrm{search}} := \LOCALSEARCH(\x^{(k+1)}, \sk,\epsk{k}, \vrho^{(i)})$ \algComment{optional: fast MAP solver}
			\STATE Update correction polytope (with vertices from $\LOCALSEARCH$): $V := V \cup \{ V_{\mathrm{search}}\}$
		\ENDFOR
		\STATE $\vrho^{v} \leftarrow \textrm{\textbf{minSpanTree}}(\MIfunction(\x^{(k)}))$ \algComment {FW vertex of the spanning tree polytope}
		\STATE $\vrho^{(i+1)} \leftarrow \vrho^{(i)}+(\frac{i}{i+2})(\vrho^{v}-\vrho^{(i)})$ \algComment{Fixed step-size schedule FW update for $\vrho$ kept in $\mathrm{relint}(\TREEPOL$)}
		\STATE $\x^{(0)}\leftarrow\x^{(k)}$, $\quad \epsk{-1} \leftarrow \epsk{k-1}$ \algComment{Re-initialize for FCFW inner loop}
		\STATE If $i<\MAXRHOITS$ then $\x^{(0)} = \CORRECTION(\x^{(0)},V,\epsk{-1},\vrho^{(i+1)})$ %
	\ENDFOR
	\RETURN $\x^{(0)}$ and $\vrho^{(i)}$
	\end{algorithmic}
\end{algorithm}
We detail a few heuristics that aid practicality.

\textbf{Fast Local Search: }
Fast methods for MAP inference such as Iterated Conditional Modes \citep{besag1986statistical} offer a cheap,
low cost alternative to a more expensive combinatorial MAP solver. We warm start the ICM solver
with the last found vertex $\sk$ of the marginal polytope.
The subroutine $\LOCALSEARCH$ (Alg.~\ref{alg:algLocalSearch} in Appendix) performs
a fixed number of FW updates to the pseudomarginals using ICM as the (approximate) MAP solver.  

\textbf{Re-optimizing over the Vertices of~$\mathcal{M}$ (FCFW algorithm): }
As the iterations of FW progress, we keep track of the vertices of the marginal polytope 
found by Alg.~\ref{alg:algInfadaptive} in the set~$V$.
We make use of these vertices in the $\CORRECTION$ subroutine
(Alg.~\ref{alg:algReopt} in Appendix)
which re-optimizes the objective function over (a contraction of) the convex hull of the elements of $V$ (called the correction polytope).
$\x^{(0)}$ in Alg.~\ref{alg:algInfadaptive} is initialized to the uniform distribution
which is guaranteed to be in~$\MARG$ (and~$\Meps$). After updating~$\vrho$, 
we set $\x^{(0)}$ to the approximate minimizer in 
the correction polytope.
The intuition is that changing $\vrho$ by a small amount
may not substantially modify the optimal $\x^*$ (for the new $\vrho$) and that the new optimum might 
be in the convex hull of the
vertices found thus far. 
If so, $\CORRECTION$ will be able to find it without
resorting to any additional MAP calls. 
This encourages the MAP solver to search 
for new, unique vertices instead of rediscovering old ones. 

\textbf{Approximate MAP Solvers: }
We can swap out the exact MAP solver with an approximate MAP solver. 
The primal objective plus the (approximate) duality gap may no longer be an upper bound 
on the log-partition function (black-box MAP solvers could be
considered to optimize over an inner bound to the marginal polytope).
Furthermore, the gap over $\DOM$ may be negative
if the approximate MAP solver fails to find a direction of descent. Since adaptive-$\shrinkAmount$ requires 
that the gap be positive in Alg.~\ref{alg:adaptive_update},
we take the max over the last gap obtained over the correction
polytope (which is always non-negative) and the computed gap over $\DOM$ as a heuristic. 

Theoretically, one could get similar convergence rates 
as in Thm.~\ref{thm:convergence_fixed_eps_main}
and~\ref{thm:convergence_adaptive_eps_main} using an approximate
MAP solver that has a multiplicative guarantee on the gap (line~8 of Alg.~\ref{alg:algInfadaptive}),
as was done previously for FW-like algorithms (see, e.g., Thm.~C.1 in~\citet{lacoste2012block}).
With an $\epsilon$-additive error guarantee on the MAP solution, one can prove 
similar rates up to a suboptimality error of $\epsilon$.
Even if the approximate MAP solver does not provide an approximation
guarantee, if it returns an  {\em upper bound} on the value of the MAP
assignment (as do branch-and-cut solvers for integer linear programs,
or \cite{SontagEtAl_uai08}),
one can use this to obtain an upper bound on $\log Z$ (see App.~\ref{sec:approx_map_logz}).

\vspace{-2mm}
\section{Experimental Results\label{sec:expts}}
\vspace{-3mm}

\textbf{Setup: }
The L1 error in marginals is computed as:
$\zeta_{\mu} := \frac{1}{N} \sum_{i=1}^N |\mu_i(1)-\mu_i^*(1)|$. When using exact MAP inference, 
the error in $\log Z$ (denoted $\zeta_{\log Z}$) is computed
by adding the duality gap to the primal (since this guarantees us an upper bound). For approximate
MAP inference, we plot the primal objective. 
We use a non-uniform initialization of $\vrho$ computed with the Matrix Tree Theorem \citep{sontag2007new,koo2007structured}. 
We perform 10 updates to $\vrho$, optimize $\vmu$ to a duality gap of $0.5$ on $\MARG$, and always
perform correction steps. We use $\LOCALSEARCH$ only for the real-world instances.
We use the implementation of TRBP and the Junction Tree Algorithm (to compute exact marginals) in libDAI~\citep{Mooij_libDAI_10}.
Unless specified, we compute 
marginals by optimizing the TRW objective using the adaptive-$\shrinkAmount$ variant of the
algorithm (denoted in the figures as $M_{\shrinkAmount})$.

\textbf{MAP Solvers:} 
For approximate MAP, we run three solvers in parallel:
QPBO \citep{kolmogorov2007minimizing,boykov2004experimental},
TRW-S \citep{kolmogorov2006convergent} and ICM \citep{besag1986statistical} using
OpenGM \citep{andres2012opengm} and use the result that realizes the highest energy. 
For exact inference, we use \citet{gurobi} or toulbar2 \citep{allouche2010toulbar2}.

\textbf{Test Cases:} All of our test cases are on binary pairwise MRFs.
	(1) \textit{Synthetic 10 nodes cliques}: Same setup as
        \citet[Fig.~2]{sontag2007new}, with $9$ sets of $100$ instances each with coupling strength drawn from $\mathcal{U}[-\theta,\theta]$ for $\theta\in \{0.5,1,2,\ldots,8\}$.  
	(2) \textit{Synthetic Grids}: $15$ trials with $5\times5$ grids.
		We sample $\theta_i \sim\mathcal{U}[-1,1]$
          and $\theta_{ij}\in[-4,4]$ for nodes and edges. The
          potentials were $(-\theta_i,\theta_i)$ for nodes and
          $(\theta_{ij},-\theta_{ij}; -\theta_{ij},\theta_{ij})$ for edges.
	(3) \textit{Restricted Boltzmann Machines (RBMs)}: From the Probabilistic Inference Challenge 2011.\footnote{\url{http://www.cs.huji.ac.il/project/PASCAL/index.php}}
	(4) \textit{Horses}: Large ($N\approx 12000$) MRFs representing images from the Weizmann Horse Data \citep{classSpec} with potentials learned by
	\cite{domke2013learning}. 
	(5) \textit{Chinese Characters}: An image completion task from
        the KAIST Hanja2 database, compiled in OpenGM
        by~\citet{andres2012opengm}. The potentials were learned using Decision Tree Fields~\citep{nowozin2011decision}.
		The MRF is not a grid due to skip edges that tie nodes at various offsets. The potentials are a combination of submodular and 
		supermodular and therefore a harder task for inference algorithms. 

\newpage
		\centerline{\textbf{On the Optimization of $\MARG$ versus $\Meps$\label{sec:M_M_eps}}}
We compare the performance of Alg.~\ref{alg:algInfadaptive} on optimizing over $\MARG$ (with and without correction), optimizing 
over $\Meps$ with fixed-$\shrinkAmount = 0.0001$ (denoted $M_{0.0001}$) and optimizing over $\Meps$ using the adaptive-$\shrinkAmount$ variant.
These plots are averaged across all the trials for the \emph{first} iteration of optimizing over $\TREEPOL$.
We show error as a function of the number of MAP calls since this is the bottleneck
for large MRFs. 
Fig.~\ref{fig:M_M_eps}, \ref{fig:M_M_eps2} depict the results of this optimization aggregated across trials. 
We find that all
variants settle on the same average error. The adaptive $\shrinkAmount$ variant converges faster on average
followed by the fixed $\shrinkAmount$ variant. Despite relatively quick convergence for
$\MARG$ with no correction on the grids, we found that correction was crucial
to reducing the
number of MAP calls in subsequent steps of inference after updates to $\vrho$. %
As highlighted earlier, correction steps on $\MARG$ (in blue) worsen 
convergence, an effect brought about by iterates wandering too close to the boundary of $\MARG$.

\vspace{2mm}
\centerline{\textbf{On the Applicability of Approximate MAP Solvers}}

\textbf{Synthetic Grids:}
Fig.~\ref{fig:approxVsExact_l1} depicts the accuracy of approximate MAP 
solvers versus exact MAP solvers aggregated across trials for $5\times5$ grids.
The results using approximate MAP inference are competitive with those of exact
inference, even as the optimization is tightened over $\TREEPOL$. 
This is an encouraging and non-intuitive result
since it indicates that one can achieve high quality marginals through
the use of relatively cheaper approximate MAP oracles.%

\begin{figure}[t]
\vspace{-1mm}
\centering
\subfigure[\small{$\zeta_{\log Z}$: $5\times5$ grids \qquad \newline $\MARG$ vs $\MARG_{\shrinkAmount}$}]{
	\label{fig:M_M_eps}
	\includegraphics[height=4cm,width=6cm,keepaspectratio]{./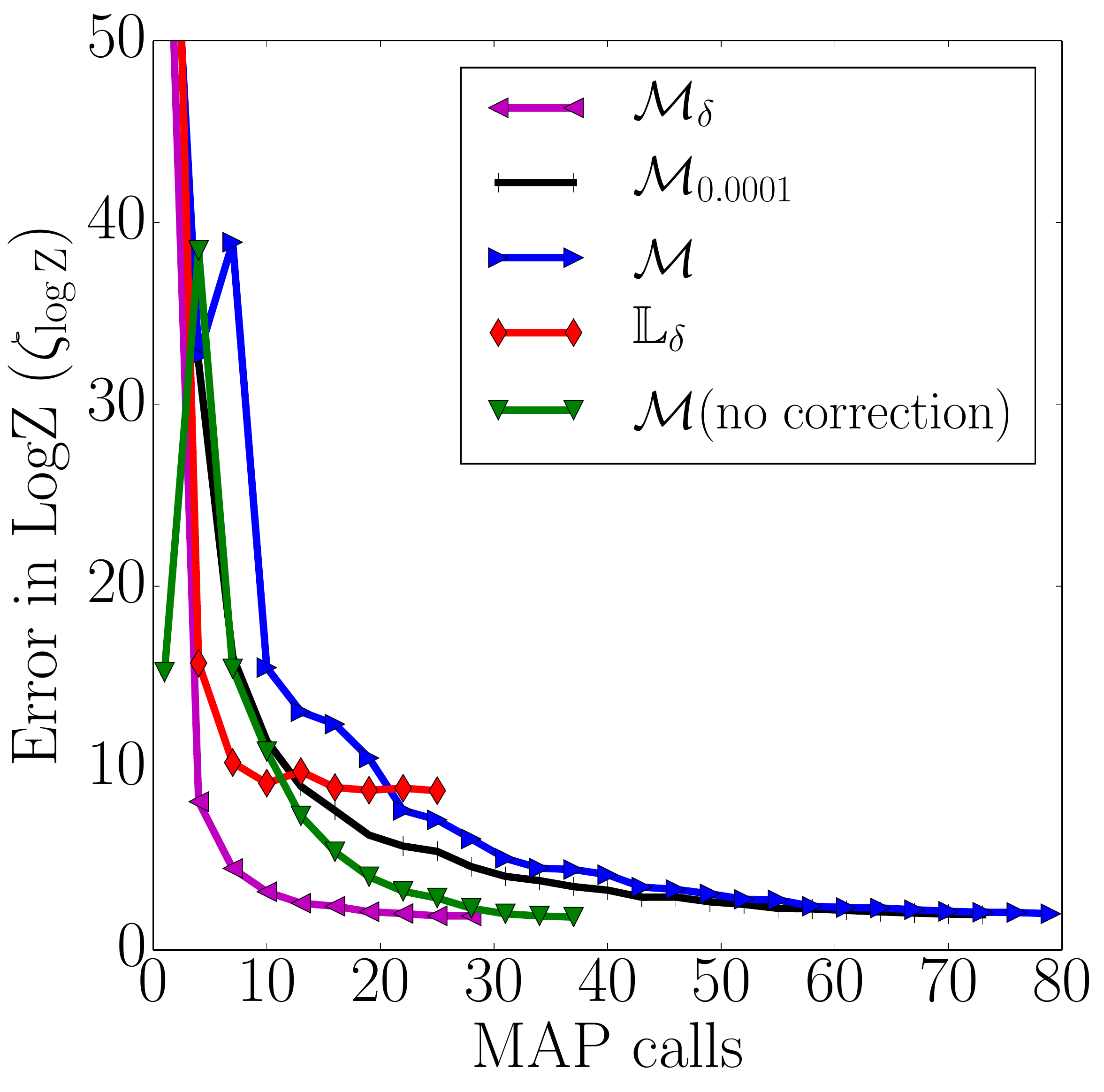}
}
\centering
\subfigure[\small{$\zeta_{\log Z}$: $10$ node cliques \qquad \newline $\MARG$ vs $\MARG_{\shrinkAmount}$}]{
	\label{fig:M_M_eps2}
\includegraphics[height=4cm,width=6cm,keepaspectratio]{./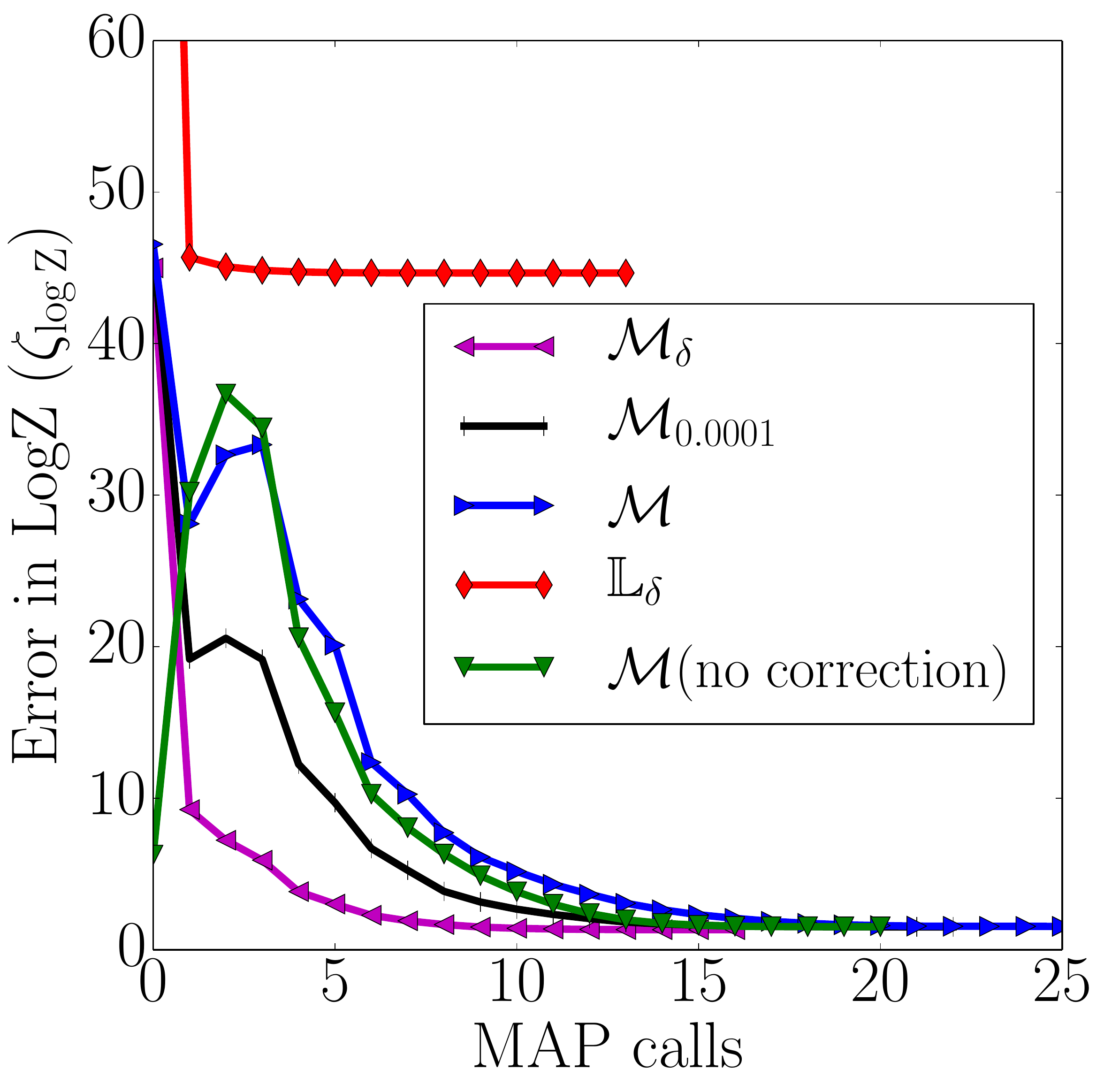}
}
\centering
\subfigure[\small{$\zeta_{\mu}$: $5\times5$ grids \newline Approx. vs. Exact MAP}]{
\label{fig:approxVsExact_l1}
	\includegraphics[height=4cm,width=6cm,keepaspectratio]{./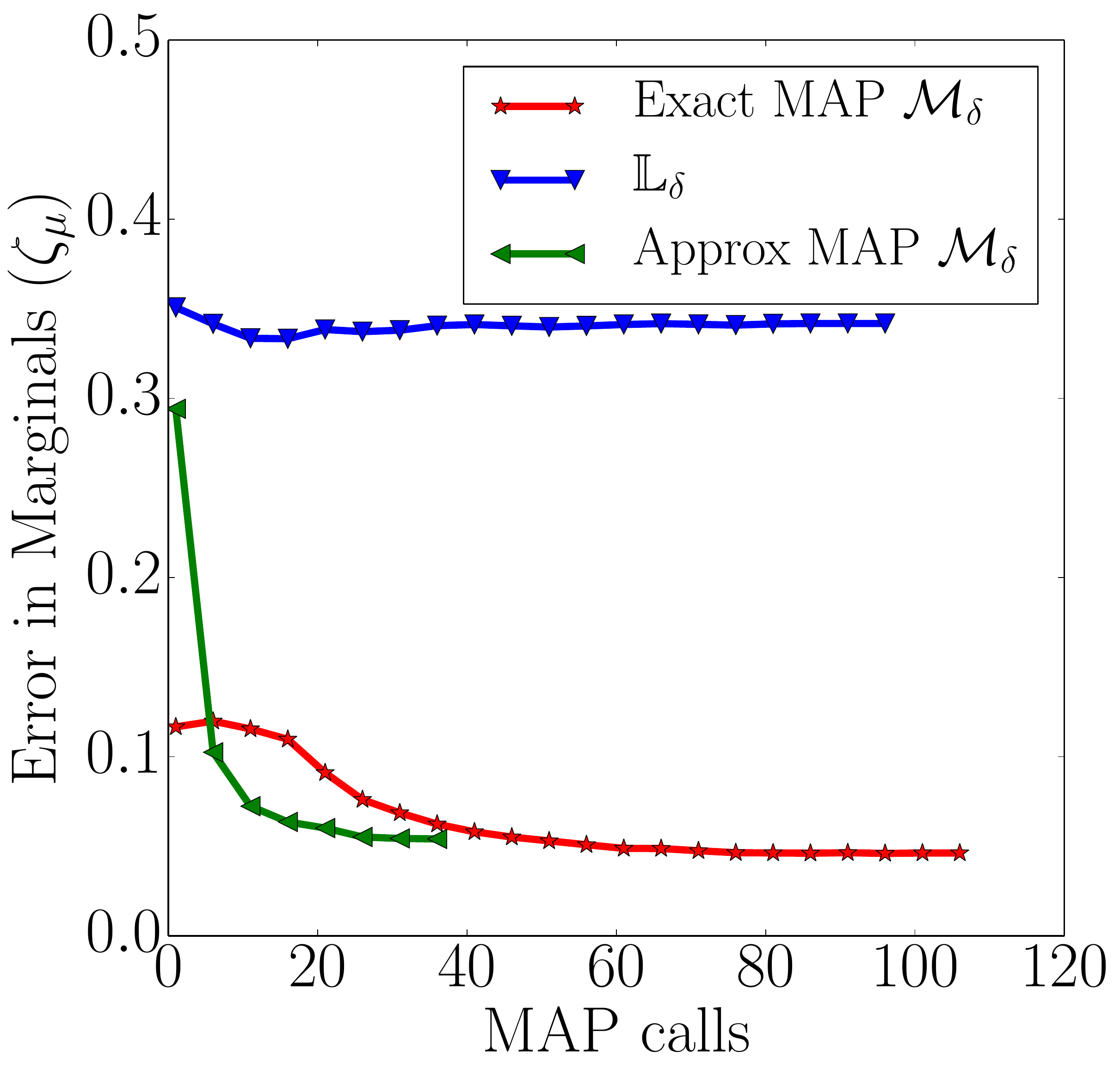}
}
\centering
\subfigure[\small{$\zeta_{\log Z}$: 40 node RBM \newline Approx. vs. Exact MAP}]{
\label{fig:rbm20_logz}
\includegraphics[height=4cm,width=6cm,keepaspectratio]{./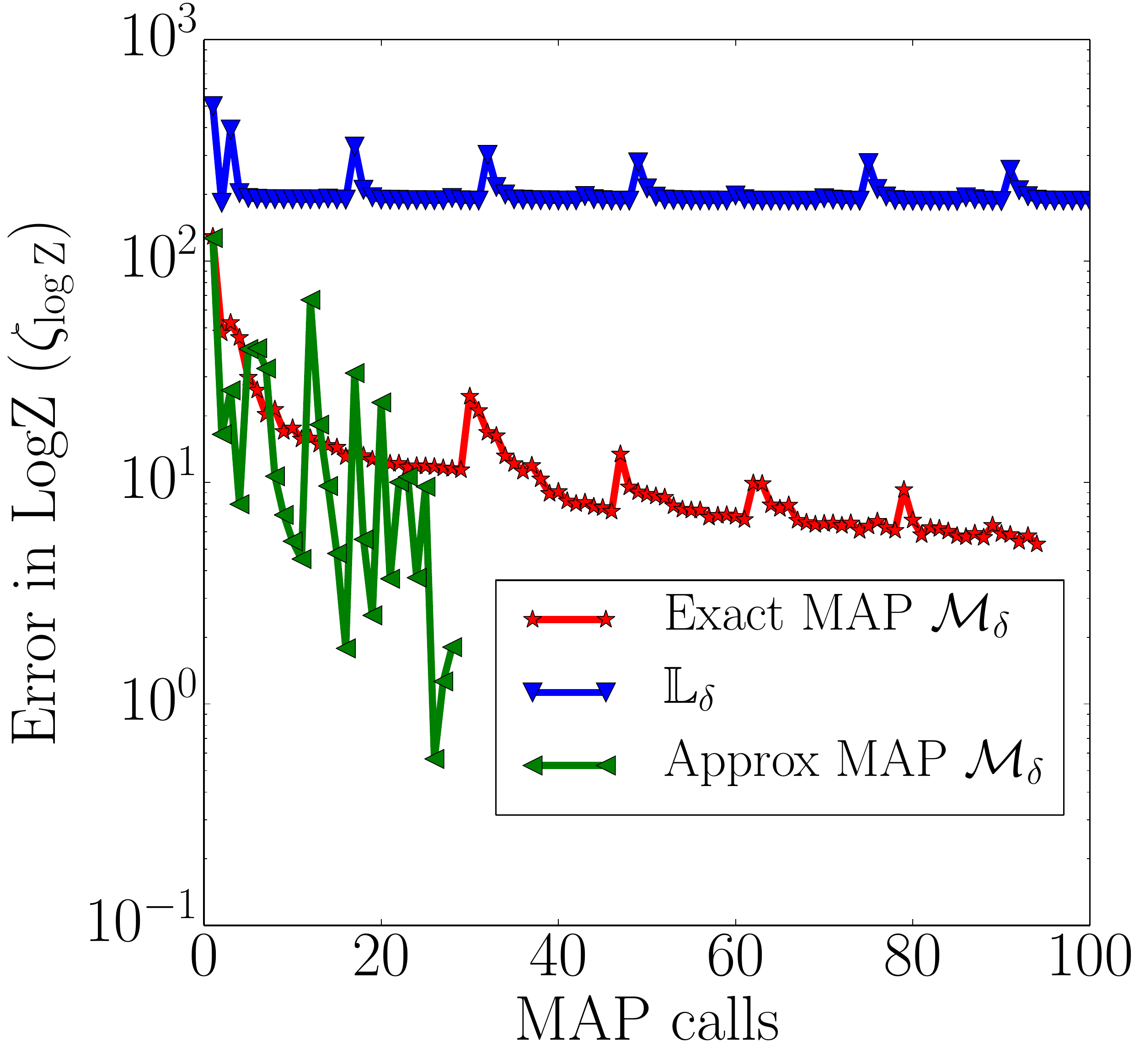}
}
\centering
\subfigure[\small{$\zeta_{\mu}$: 10 node cliques \newline Optimization over $\TREEPOL$}]{
\label{fig:syntheticComplete_tightening_l1}
\includegraphics[height=4cm,width=6cm,keepaspectratio]{./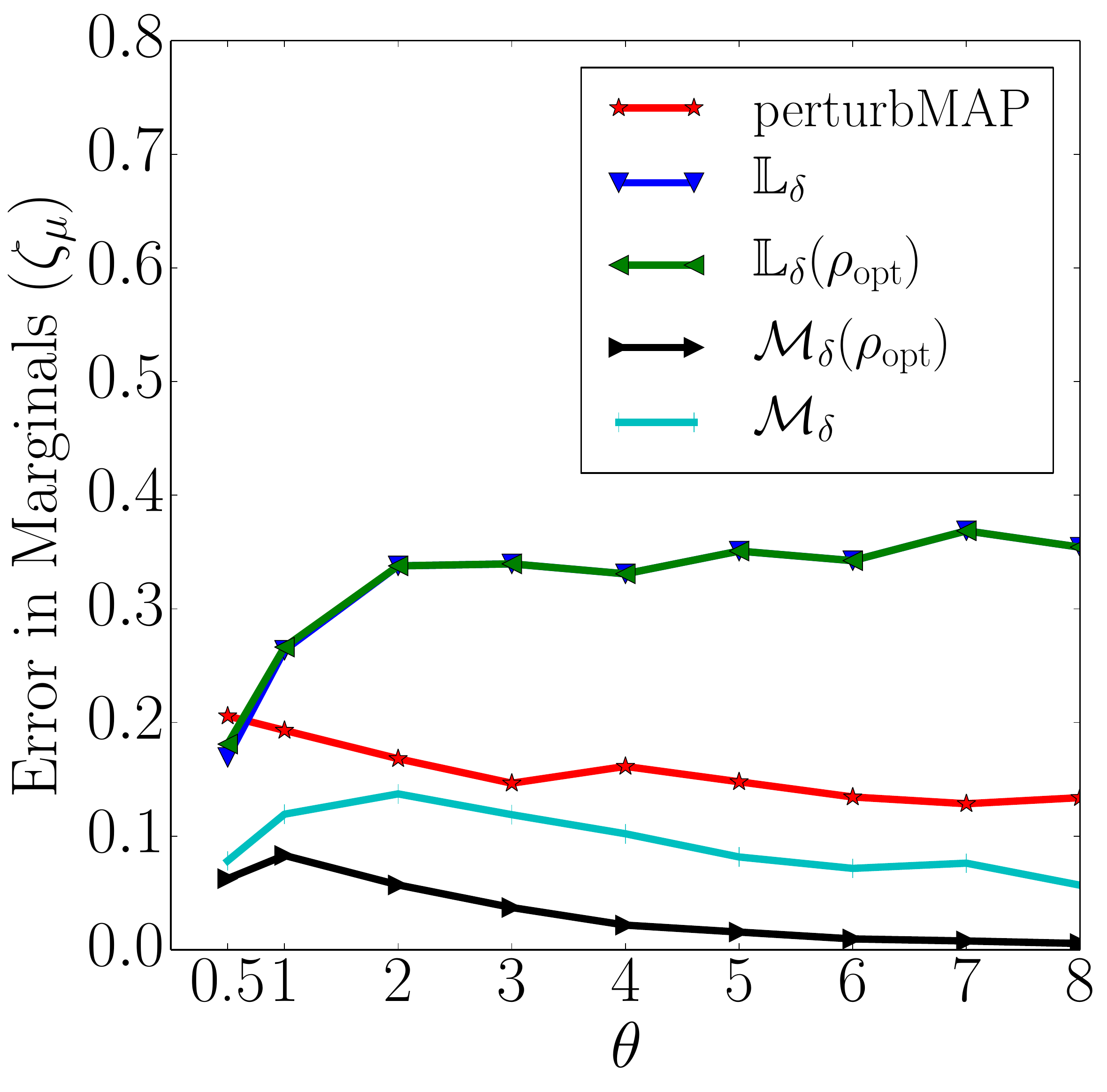}
}
\centering
\subfigure[\small{$\zeta_{\log Z}$: 10 node cliques \newline Optimization over $\TREEPOL$}]{
\label{fig:syntheticComplete_tightening_logz}
\includegraphics[height=4cm,width=6cm,keepaspectratio]{./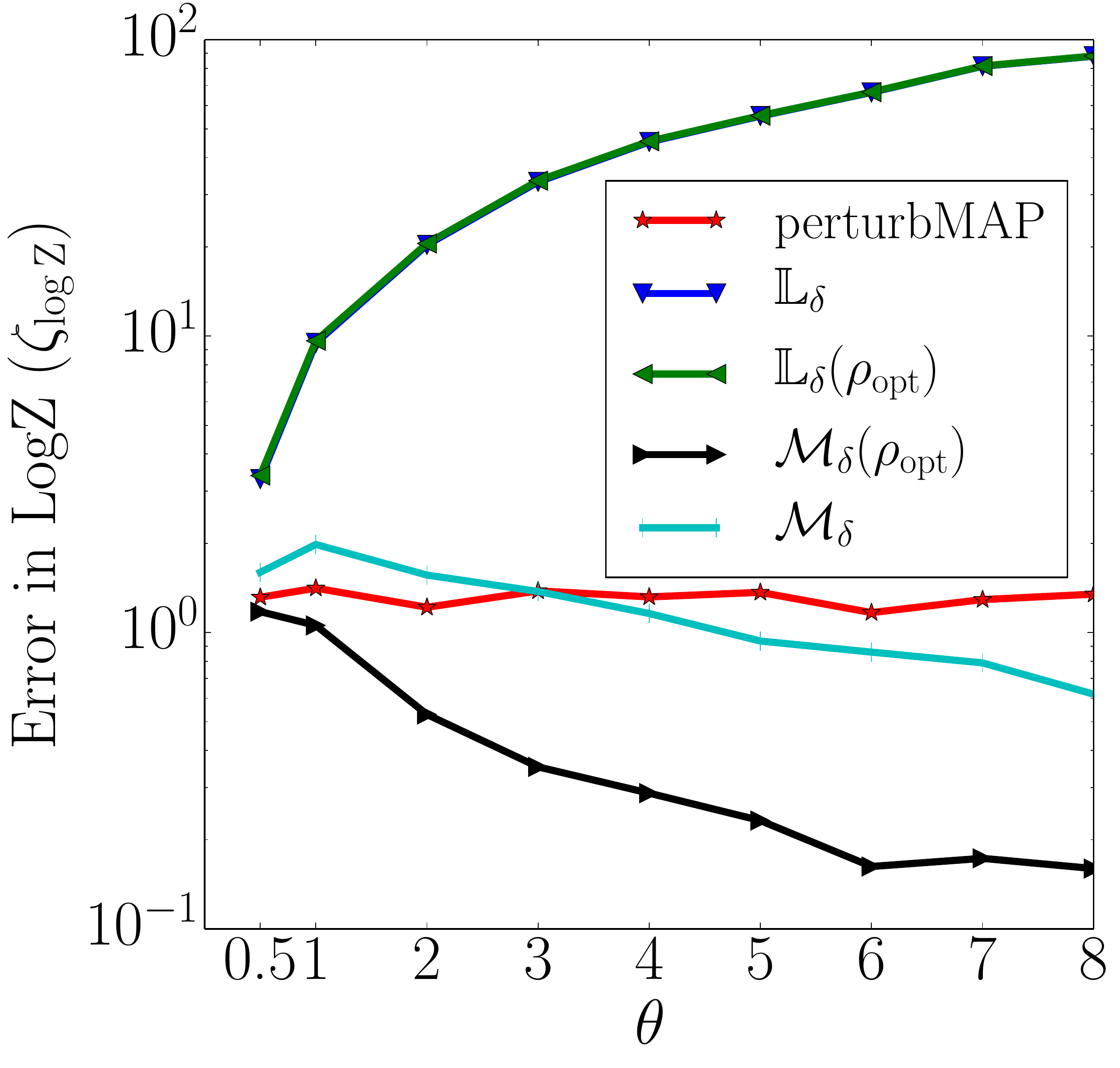}
}
\vspace{-3mm}
\caption{\small Synthetic Experiments: 
	In Fig. \ref{fig:approxVsExact_l1} \& \ref{fig:rbm20_logz}, we unravel MAP calls across updates to $\vrho$.
  Fig. \ref{fig:rbm20_logz} corresponds to a single RBM (not an aggregate over trials) where for
  ``Approx MAP'' we plot the absolute error between the primal
  objective and $\log Z$ (not guaranteed to be an upper bound).}
\vspace{-4mm}
\end{figure}

\textbf{RBMs:}
As in \cite{salakhutdinov2008learning}, we observe for RBMs that the bound provided by $\TRW$ over $\LOCAL_{\shrinkAmount}$ 
is loose and does not get better when optimizing over 
$\TREEPOL$.
As Fig.~\ref{fig:rbm20_logz} 
depicts for a single RBM,
optimizing over $\MARG_\shrinkAmount$
realizes significant gains in the upper bound on $\log Z$ which improves with updates to $\vrho$. 
The gains are preserved with the use of the approximate MAP solvers.
Note that there are also fast approximate MAP solvers specifically for RBMs \citep{sidaw2013rbm}.

\textbf{Horses:} See Fig.~\ref{fig:largeScale} (right). 
The models are close to submodular and the local relaxation is a
good approximation to the marginal polytope.
Our marginals are visually similar to those obtained by TRBP and our algorithm is able
to scale to large instances 
by using approximate MAP solvers. 
\vspace{1mm}
\centerline{\textbf{On the Importance of Optimizing over $\TREEPOL$}}

\textbf{Synthetic Cliques: }
In Fig.~\ref{fig:syntheticComplete_tightening_l1}, \ref{fig:syntheticComplete_tightening_logz},
we study the effect 
of tightening over $\TREEPOL$ against coupling strength $\theta$.
We consider the $\zeta_{\mu}$ and $\zeta_{\log Z}$ obtained for the final marginals before updating $\vrho$ (step~19)
and compare to the values obtained after optimizing over $\TREEPOL$ (marked with $\vrho_{opt}$). 
The optimization over $\TREEPOL$ has little effect on TRW optimized over $\LOCAL_\shrinkAmount$. 
For optimization over $\MARG_\shrinkAmount$,  
updating $\vrho$ realizes better marginals and bound on $\log Z$
(over and above 
those
obtained in 
\cite{sontag2007new}).

\textbf{Chinese Characters:} Fig.~\ref{fig:largeScale} (left) displays marginals across iterations of
optimizing over $\TREEPOL$.
The submodular and supermodular potentials lead to frustrated models for
which $\LOCAL_\shrinkAmount$ is very loose, which results in TRBP
obtaining poor results.\footnote{We run TRBP for 1000 iterations
  using damping = 0.9; the algorithm converges with a max norm difference between consecutive iterates of 0.002.
  Tightening over $\TREEPOL$ did not significantly change the results of TRBP.}
Our method produces reasonable marginals even before the first
update to $\vrho$, and these improve with tightening over $\TREEPOL$. 

\vspace{2mm}
\centerline{\textbf{Related Work for Marginal Inference with MAP Calls}}

\cite{hazan2012partition} estimate 
$\log Z$ by averaging MAP estimates obtained on randomly perturbed
inflated graphs. 
Our implementation of the method performed well 
in approximating $\log Z$ but the marginals (estimated by fixing the value of each
random variable and estimating $\log Z$ for the resulting graph) were
less accurate than our method (Fig.~\ref{fig:syntheticComplete_tightening_l1}, \ref{fig:syntheticComplete_tightening_logz}).

\begin{figure*}
\centering
\includegraphics[width=\textwidth]{./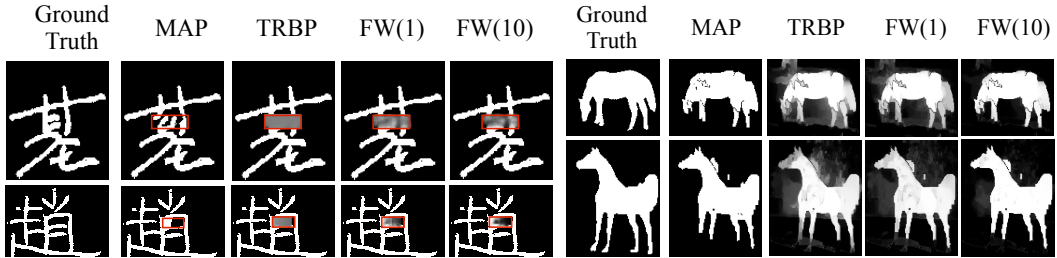}
\caption{\small{Results on real world test cases. FW(i) corresponds to the final marginals at the $i$th iteration of optimizing $\vrho$. The area highlighted on the Chinese Characters depicts the region of uncertainty.}\label{fig:largeScale}}
\end{figure*}

\vspace{-2mm}
\section{Discussion}
\vspace{-3mm}
We introduce the first provably convergent
algorithm for the TRW objective over the marginal polytope, 
under the assumption of exact MAP oracles.
We quantify the gains obtained both from marginal inference over $\MARG$ \textit{and} from tightening
over the spanning tree polytope. 
We give heuristics that improve the scalability of Frank-Wolfe
when used for marginal inference.
The runtime cost of iterative MAP calls (a reasonable rule of thumb is to assume 
an approximate MAP call takes roughly the same time as a run of TRBP) is worthwhile particularly in 
cases such as the Chinese Characters where $\LOCAL$ is loose. 
Specifically, our algorithm is appropriate for domains where marginal inference is hard but there exist
efficient MAP solvers capable of handling non-submodular potentials.
Code is available at {\small \url{https://github.com/clinicalml/fw-inference}}. 

Our work creates a flexible, modular framework for optimizing a broad
class of variational objectives, not simply TRW, with guarantees of convergence. We hope that this
will encourage more research on building better entropy approximations.
The framework we adopt is more generally applicable
to optimizing functions whose gradients tend to infinity at the boundary of the
domain.

Our method to deal with gradients that diverge at the boundary
bears resemblance to barrier functions used in interior point methods insofar as they bound
the solution away from the constraints. Iteratively decreasing $\shrinkAmount$ in our framework
can be compared to decreasing the strength of the barrier, enabling the iterates to get closer to the
facets of the polytope, although its worthwhile to note that we have an \emph{adaptive} method of doing so.

\vspace{-2mm}
\section*{Acknowledgements}
\vspace{-2mm}
RK and DS gratefully acknowledge the support of the Defense Advanced
Research Projects Agency (DARPA) Probabilistic Programming for
Advancing Machine Learning (PPAML) Program under Air Force Research
Laboratory (AFRL) prime contract no. FA8750-14-C-0005. Any opinions,
findings, and conclusions or recommendations expressed in this
material are those of the author(s) and do not necessarily reflect the
view of DARPA, AFRL, or the US government.
\bibliographystyle{abbrvnat}
\small{
\bibliography{ref}
}

\clearpage %
\appendix
\section{Preliminaries}

\subsection{Summary of Supplementary Material}
The supplementary material is divided into two parts: 

(1) The first part is dedicated to the exposition
of the theoretical results presented in the main paper. 
Section~\ref{sec:FW_alg} details the variants of the Frank-Wolfe algorithm that we used and analyzed.
Section~\ref{sec:theory_fixed_eps} gives the proof to Theorem~\ref{thm:convergence_fixed_eps_main} (fixed $\shrinkAmount$) while Section~\ref{sec:theory_adaptive_eps} gives the proof to Theorem~\ref{thm:convergence_adaptive_eps_main} (adaptive $\shrinkAmount$). Finally, Section~\ref{sec:trw_properties} applies the convergence theorem to the TRW objective and investigates the relevant constants.

(2) The remainder of the supplementary material provides more information about the experimental setup as well as additional experimental results.

\subsection{Descent Lemma}
The following descent lemma is proved in \citesup{bertsekas1999nonlinear} (Prop. A24) and is standard for any convergence proof of first order methods. We provide a proof here for completeness. It also highlights the origin of the requirement that we use dual norm pairings between $\x$ and the gradient of $f(\x)$ (because of the generalized Cauchy-Schwartz inequality).
\begin{lemma}
	\label{lem:descent_lemma}
	\textbf{Descent Lemma}
	
	Let $\x_\stepsize :=  \x + \stepsize \dd$ and suppose that $f$ is continuously differentiable on the line segment from $\x$ to ${\x}_{\stepmax}$ for some $\stepmax > 0$.
Suppose that $L = {\sup}_{\alpha \in ]0,\stepmax]} \frac{|| \nabla f(\x+\alpha\dd)-\nabla f(\x)||_*}{||\alpha \dd||}$ is finite, then we have: 
	\begin{equation} \label{eq:descent_lemma}
	f(\x_\stepsize)\leq f(\x)+\stepsize \brangle{\nabla f(\x),\dd} + \frac{\stepsize^2}{2} L||\dd||^2, \quad \forall \stepsize \in [0, \stepmax].
	\end{equation}
\end{lemma}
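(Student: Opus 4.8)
The plan is to reduce the statement to a one–dimensional estimate via the fundamental theorem of calculus along the segment $[\x,\x_{\stepmax}]$, and then control the remainder term using the generalized Cauchy--Schwarz inequality together with the definition of $L$.

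First I would introduce $g\colon[0,\stepmax]\to\R$ by $g(t):=f(\x+t\dd)$. By hypothesis $f$ is continuously differentiable on the line segment, so $g$ is $C^1$ with $g'(t)=\langle \nabla f(\x+t\dd),\dd\rangle$, and $g'$ is continuous hence integrable on $[0,\stepsize]$ for every $\stepsize\le\stepmax$. The fundamental theorem of calculus then gives $f(\x_\stepsize)-f(\x)=\int_0^\stepsize \langle \nabla f(\x+t\dd),\dd\rangle\,dt$. Writing also $\stepsize\langle\nabla f(\x),\dd\rangle=\int_0^\stepsize\langle\nabla f(\x),\dd\rangle\,dt$, subtraction yields $f(\x_\stepsize)-f(\x)-\stepsize\langle\nabla f(\x),\dd\rangle=\int_0^\stepsize\langle\nabla f(\x+t\dd)-\nabla f(\x),\dd\rangle\,dt$.

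Next I would bound the integrand. By the generalized Cauchy--Schwarz inequality $\langle\uu,\vv\rangle\le\|\uu\|_*\|\vv\|$ (this is exactly where the dual norm pairing is forced), the integrand is at most $\|\nabla f(\x+t\dd)-\nabla f(\x)\|_*\,\|\dd\|$. For $t\in\,]0,\stepmax]$ the definition of $L$ gives $\|\nabla f(\x+t\dd)-\nabla f(\x)\|_*\le L\|t\dd\|=Lt\|\dd\|$, while at $t=0$ the bound is trivially $0\le 0$; hence the integrand is $\le Lt\|\dd\|^2$ on all of $[0,\stepsize]$. Integrating, $\int_0^\stepsize Lt\|\dd\|^2\,dt=\tfrac{\stepsize^2}{2}L\|\dd\|^2$, which is the claimed inequality; the cases $\stepsize=0$ and $\dd=\bm{0}$ are immediate.

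The only point requiring care — and it is hardly an obstacle — is the justification of the fundamental theorem of calculus, which follows directly from the assumed continuous differentiability of $f$ on the closed segment (so that $g'$ is continuous); everything after that is a routine estimate.
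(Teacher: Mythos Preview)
Your proof is correct and follows essentially the same approach as the paper's own argument: both reduce to a one-dimensional integral via the fundamental theorem of calculus, split off the linear term $\stepsize\langle\nabla f(\x),\dd\rangle$, and bound the remainder using the generalized Cauchy--Schwarz inequality $\langle\uu,\vv\rangle\le\|\uu\|_*\|\vv\|$ together with the definition of $L$. The paper's write-up is nearly identical, differing only cosmetically (it names the scalar function $l(\alpha)$ rather than $g(t)$ and inserts an intermediate absolute-value step before applying Cauchy--Schwarz).
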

\begin{proof}
	Let $0 < \stepsize \leq \stepmax$. Denoting $l(\alpha) =f(\x+\alpha\dd)$, we have that:
	\begin{dmath*}
		f(\x_\stepsize)-f(\x)=l(\stepsize)-l(0)=\int_{0}^{\stepsize}\nabla_{\alpha}l(\alpha)d\alpha\\
		=\int_{0}^{\stepsize} \brangle{\dd,\nabla f(\x+\alpha\dd)} d\alpha\\  %
		= \int_{0}^{\stepsize} \brangle{\dd,\nabla f(\x)} d\alpha+ 
		\int_{0}^{\stepsize} \dd^T (\nabla f(\x+\alpha\dd)-\nabla f(\x))d\alpha\\
		\leq \int_{0}^{\stepsize} \brangle{\dd,\nabla f(\x)}d\alpha+ 
		\left|\int_{0}^{\stepsize} \dd^T (\nabla f(\x+\alpha\dd)-\nabla f(\x))\right|d\alpha\\
		\leq \int_{0}^{\stepsize} \brangle{\dd,\nabla f(\x)}d\alpha+ 
		\int_{0}^{\stepsize} ||\dd||\;\;||\nabla f(\x+\alpha\dd)-\nabla f(\x)||_* \, d\alpha\\ %
		= \stepsize\brangle{\dd,\nabla f(\x)} +  
		\int_{0}^{\stepsize} ||\dd||\;\;\frac{||\nabla f(\x+\alpha\dd)-\nabla f(\x)||_*}{\alpha||\dd||}\alpha||\dd|| \, d\alpha\\
		\leq \stepsize\brangle{\dd,\nabla f(\x)} +  
		\int_{0}^{\stepsize} ||\dd||\;\;L||\dd||\alpha \, d\alpha\\
		= \stepsize\brangle{\dd,\nabla f(\x)} +  \frac{L}{2}\stepsize^2||\dd||^2\\
	\end{dmath*}
	Rearranging terms, we get the desired bound.
\end{proof}

\section{Frank-Wolfe Algorithms} \label{sec:FW_alg}

In this section, we present the various algorithms that we use to do fully corrective Frank-Wolfe (FCFW) with adaptive contractions over the domain $\DOM$, as was done in our experiments. 

\subsection{Overview of the Modified Frank-Wolfe Algorithm (FW with Away Steps)}
To implement the approximate correction steps in the fully corrective Frank-Wolfe (FCFW) algorithm, we use the Frank-Wolfe algorithm with away steps~\citepsup{Wolfe:1970wy}, also known as the modified Frank-Wolfe (MFW) algorithm~\citepsup{Guelat:1986fq}. 
We give pseudo-code for MFW in Algorithm~\ref{alg:MFW} (taken from~\citep{lacoste2015MFW}). 
This variant of Frank-Wolfe adds the possibility to do an ``away step'' (see step~5 in Algorithm~\ref{alg:MFW}) in order to avoid the zig zagging phenomenon that slows down Frank-Wolfe when the solution is close to the boundary of the polytope. 
For a strongly convex objective (with Lipschitz continuous gradient), the MFW was known to have asymptotic linear convergence~\citepsup{Guelat:1986fq} and its global linear convergence rate was shown recently~\citep{lacoste2015MFW}, accelerating the slow general sublinear rate of Frank-Wolfe. 
When performing a correction over the convex hull over a (somewhat small) set of vertices of $\DOMeps$, this convergence difference was quite significant in our experiments (MFW converging in a small number of iterations to do an approximate correction vs. FW taking hundreds of iterations to reach a similar level of accuracy). 
We note that the TRW objective is strongly convex when all the edge probabilities are non-zero~\citep{wainwright2005new}; and that it has Lipschitz gradient over $\DOMeps$ (but not $\DOM$).

The gap computed in step~6 of~Algorithm~\ref{alg:MFW} is non-standard; it is a sufficient condition to ensure the global linear convergence of the outer FCFW algorithm when using Algorithm~\ref{alg:MFW} as a subroutine to implement the approximate correction step. See~\citet{lacoste2015MFW} for more details.

The MFW algorithm requires more bookkeeping than standard FW: in addition to the current iterate $\xk$, it also maintains both the active set $\Coreset^{(k)}$ (to search for the ``away vertex'') as well as the barycentric coordinates $\bm{\alpha}^{(k)}$ (to know what are the away step-sizes that ensure feasibility -- see step~13) i.e. $\xk= \sum_{\vv \in \Coreset^{(k)}} \alpha^{(k)}_{\vv} \vv$.

\begin{algorithm}
	\caption{Modified Frank-Wolfe algorithm (FW with Away Steps) -- used for approximate correction}
	\label{alg:MFW}
	\begin{algorithmic}[1]
	\STATE Function \textbf{MFW}$(\x^{(0)}, \bm{\alpha}^{(0)}, \Vertices, \stopCrit)$ to optimize over $\conv(\Vertices)$: 
	\STATE \textbf{Inputs:} Set of atoms $\Vertices$, starting point $\x^{(0)}= \sum_{\vv \in \Coreset^{(0)}} \alpha^{(0)}_{\vv} \vv$ where $\Coreset^{(0)}$ is active set and $\bm{\alpha}^{(0)}$ the active coordinates, stopping criterion $\stopCrit$.
	\FOR{$k=0\dots K$}
		\STATE Let $\s_k \in \displaystyle\argmin_{\vv \in \Vertices} \textstyle\brangle{\nabla f(\x^{(k)}),\vv}$ and $\dd_k^\FW := \s_k - \x^{(k)}$ \qquad~~ \emph{\small(the FW direction)}
		\STATE Let $\vv_k \in \displaystyle\argmax_{\vv \in \Coreset^{(k)} } \textstyle\left\langle \nabla f(\x^{(k)}), \vv \right\rangle$ and $\dd_k^\away := \x^{(k)} - \vv_k$ \qquad \emph{\small(the away direction)}
		\STATE $g_k^\pFW  := \left\langle -\nabla f(\x^{(k)}), \dd_k^\FW + \dd_k^\away\right\rangle$ \qquad \emph{\small(stringent gap is FW + away gap to work better for FCFW)}
		\IF{$g_k^\pFW \leq \stopCrit$}
			\STATE \textbf{return} $\x^{(k)}$, $\bm{\alpha}^{(k)}$, $\Coreset^{(k)}$.
		\ELSE
  				\IF{$\left\langle -\nabla f(\x^{(k)}), \dd_k^\FW\right\rangle  \geq \left\langle -\nabla f(\x^{(k)}), \dd_k^\away\right\rangle$ }
		 		  \STATE $\dd_k :=  \dd_k^\FW$, and $\stepmax := 1$  
		 			     \hspace{20mm}\emph{\small(choose the FW direction)}
		 		  \ELSE
		 		  \STATE $\dd_k :=  \dd_k^\away$, and $\stepmax := \frac{\alpha_{\vv_k}}{(1- \alpha_{\vv_k})}$
		 		  	\hspace{12mm}\emph{\small(choose away direction; maximum feasible step-size)}
		 		  \ENDIF	
		 		  \STATE Line-search: $\stepsize_k \in \displaystyle\argmin_{\stepsize \in [0,\stepmax]} \textstyle f\left(\x^{(k)} + \stepsize \dd_k\right)$
			\STATE Update $\x^{(k+1)} := \x^{(k)} + \stepsize_k \dd_k$
			\STATE Update coordinates $\bm{\alpha}^{(k+1)}$ accordingly (see \citet{lacoste2015MFW}).
			\STATE Update $\Coreset^{(k+1)} := \{\vv \: s.t. \: \alpha^{(k+1)}_{\vv} > 0\}$
		 \ENDIF	
	\ENDFOR

	\end{algorithmic}
\end{algorithm}

\subsection{Fully Corrective Frank-Wolfe (FCFW) with Adaptive-$\shrinkAmount$}

We give in Algorithm~\ref{alg:adaptive_eps} the pseudo-code to perform fully corrective Frank-Wolfe optimization over $\DOM$ by iteratively optimizing over $\DOMeps$ with adaptive-$\shrinkAmount$ updates.
If $\shrinkAmount$ is kept constant (skipping step 10), then Algorithm~\ref{alg:adaptive_eps} implements the
fixed~$\shrinkAmount$ variant over $\DOMeps$. We describe the algorithm as maintaining the correction set of atoms~$V^{(k+1)}$ over $\DOM$ (rather than $\DOMeps$), as $\shrinkAmount$ is constantly changing. One can easily move back and forth between $V^{(k+1)}$ and its contraction $V_\shrinkAmount = (1-\epsk{k})V^{(k+1)} + \epsk{k} \unif$, and so we note that an efficient implementation might work with either representation cheaply (for example, by storing only $V^{(k+1)}$ and $\shrinkAmount$, not the perturbed version of the correction polytope). The approximate correction over $V_\shrinkAmount$ is implemented using the MFW algorithm described in Algorithm~\ref{alg:MFW}, which requires a barycentric representation $\bm{\alpha}^{(k)}$ of the current iterate $\xk$ over the correction polytope $V_\shrinkAmount$. Our notation in Algorithm~\ref{alg:adaptive_eps} uses the elements of $\Vertices$ as indices, rather than their contracted version; that is, we maintain the property that $\xk = \sum_{\vv \in \Vertices} \alpha_{\vv}^{(k)} [(1-\epsk{k}) \vv + \epsk{k} \unif]$. As $V_\shrinkAmount$ changes when $\shrinkAmount$ changes, we need to update the barycentric representation of $\xk$ accordingly -- this is done in step~11 with the following equation. Suppose that we decrease $\shrinkAmount$ to $\shrinkAmount'$. Then the old coordinates $\bm{\alpha}$ can be updated to new coordinates $\bm{\alpha}'$ for the new contraction polytope as follows:
\begin{align}
\begin{split} \label{eq:alpha_update}
\alpha'_{\vv} &= \alpha_{\vv} \frac{1-\shrinkAmount}{1-\shrinkAmount'} \qquad \text{for} \quad \vv \in \Vertices \setminus \{\unif \}, \\
\alpha'_{\unif} &= 1-\sum_{\vv \neq \unif} \alpha'_{\vv}.
\end{split}
\end{align} 
This ensures that $\sum_{\vv} \alpha_{\vv} \vv_{(\shrinkAmount)} = \sum_{\vv} \alpha'_{\vv} \vv_{(\shrinkAmount')}$, where $\vv_{(\shrinkAmount)} := (1-\shrinkAmount) \vv + \shrinkAmount \unif$, and that the coordinates form a valid convex combination (assuming that $\shrinkAmount' \leq \shrinkAmount$), as can be readily verified.

\begin{algorithm}
	\caption{Optimizing $f$ over $\DOM$ using Fully Corrective Frank-Wolfe (FCFW) with Adaptive-$\shrinkAmount$ Algorithm.}
	\label{alg:adaptive_eps}
	\begin{algorithmic}[1]
		\STATE \textbf{FCFW}$(\x^{(0)}, \Vertices, \stopCrit, \epsk{\mathrm{init}})$
		\STATE \textbf{Inputs:} Set of atoms $\Vertices$ so that $\DOM = \conv({\Vertices})$, active set $\Coreset^{(0)}$, starting point $\x^{(0)}= \sum_{\vv \in \Coreset^{(0)}} \alpha^{(0)}_{\vv}  [(1-\epsk{\mathrm{init}})\vv + \epsk{\mathrm{init}} \unif]$ where $\bm{\alpha}^{(0)}$ are the active coordinates, $\epsk{\mathrm{init}}\leq \frac{1}{4}$ describes the initial contraction of the polytope, stopping criterion $\stopCrit$, $\unif$ is a fixed reference point in the relative interior of $\DOM$.
		\STATE Let $V^{(0)} := \Coreset^{(0)}$ \quad (optionally, a bigger $V^{(0)}$ could be passed as argument for a warm start), $\epsk{-1} := \epsk{\mathrm{init}}$ 
	\FOR{$k=0\dots K$}
		\STATE Let $\sk \in \displaystyle\argmin_{\vv \in \Vertices} \textstyle\innerProd{\nabla f(\x^{(k)})}{\vv}$ \qquad \emph{\small(the FW vertex)} 
		\STATE Compute $\gk=\brangle{-\nabla f(\xk),\sk-\xk}$ \quad \emph{\small(FW gap)} 
		\IF{$\gk \leq \stopCrit$}
			\STATE \textbf{return} $\x^{(k)}$
		\ENDIF
		\STATE Let $\epsk{k}$ be $\epsk{k-1}$ updated according to Algorithm~\ref{alg:adaptive_update}.
		\STATE Update $\bm{\alpha}^{(k)}$ accordingly (using~\eqref{eq:alpha_update})
		\STATE Let $\skeps := (1-\epsk{k})\sk + \epsk{k} \unif$ 
		\STATE Let $\dd_k^\FW := \skeps - \x^{(k)}$ 
		\STATE Line-search: $\stepsize_k \in \displaystyle\argmin_{\stepsize \in [0,1]} \textstyle f\left(\x^{(k)} + \stepsize \dd_k^\FW\right)$
		\STATE Set $\x^{(\mathrm{temp)}} := \x^{(k)} + \stepsize_k  \dd_k^\FW$  \algComment{initialize correction to the update after a FW step with line search}
		\STATE $\bm{\alpha}^{(\mathrm{temp})} = (1-\stepsize_k) \bm{\alpha}^{(k)}$
		\STATE $\alpha_{\sk}^{(\mathrm{temp)}} \leftarrow \alpha_{\sk}^{(\mathrm{temp)}} + \stepsize_k$ \algComment{update coordinates according to the FW step}
		\STATE Update (non-contracted) correction polytope: $V^{(k+1)} := V^{(k)} \cup \{ \sk \}$
		\STATE Let $V_\shrinkAmount = (1-\epsk{k})V^{(k+1)} + \epsk{k} \unif$ \algComment{contracted correction polytope}
		\STATE $\x^{(k+1)}\!\!$, $\bm{\alpha}^{(k+1)} := \textbf{\textrm{MFW}}(\x^{(\mathrm{temp)}},\bm{\alpha}^{(\mathrm{temp})}, V_\shrinkAmount, \stopCrit)$ \quad \emph{\small (approximate correction step on $V_\shrinkAmount$ using MFW)}
	\ENDFOR
	\end{algorithmic}
\end{algorithm}

\section{Bounding the Sub-optimality for Fixed $\shrinkAmount$ \label{sec:theory_fixed_eps} Variant}
The pseudocode for optimizing over~$\DOMeps$ for a fixed~$\shrinkAmount$ is given in Algorithm~\ref{alg:adaptive_eps} (by ignoring the step~10 which updates $\shrinkAmount$).
It is stated with a stopping criterion $\stopCrit$, but it can alternatively
be run for a fixed number of $K$ iterations.
The following theorem bounds the suboptimality of the iterates with respect to the 
true optimum $\x^*$ over $\DOM$. If one can compute the constants in the theorem,
one can choose a target contraction amount $\shrinkAmount$ to guarantee
a specific suboptimality of $\stopCrit'$; otherwise, one can choose $\shrinkAmount$
using heuristics. Note that unlike the adaptive-$\shrinkAmount$ variant, this algorithm
does not converge to the true solution as $K \rightarrow \infty$ unless $\x^*$ happens
to belong to $\DOMeps$. But the error can be controlled by choosing $\shrinkAmount$ small enough.

\begin{theorem}[Suboptimality bound for fixed-$\shrinkAmount$ algorithm]
  \label{thm:convergence_fixed_eps}
  Let $f$ satisfy the properties in Problem~\ref{prop:generic_fxn} and suppose its gradient is Lipschitz continuous on the contractions $\DOMeps$ as in Property~\ref{prop:prop_bounded_grad}. Suppose further that $f$ is finite on the boundary of $\DOM$. 
  
  Then $f$ is uniformly continuous on $\DOM$ and has a \emph{modulus of continuity} function $\modContinuity$ quantifying its level of continuity, i.e. 
  $|f(\x) - f(\x')| \leq \modContinuity(\|\x - \x'\|) \,\, \forall \x, \x' \in \DOM$, with $\modContinuity(\sigma) \downarrow 0$ as $\sigma \downarrow 0$.
  
  Let $\x^*$ be an optimal point of $f$ over $\DOM$. The iterates $\xk \in \DOMeps$ of the FCFW algorithm as described in Algorithm~\ref{alg:adaptive_eps} for a fixed $\shrinkAmount > 0$ has sub-optimality over~$\DOM$ bounded as:
	\begin{equation} \label{eq:rate_fixed_delta}
	f(\xk) - f(\x^*) \leq \frac{2\mathcal{C}_{\shrinkAmount}}{(k+2)}+\modContinuity \left( \shrinkAmount \diam(\DOM)\right), 
	\end{equation}
	 where $C_{\shrinkAmount} \leq \diam(\DOMeps)^2 L_{\shrinkAmount}$. Note that different norms can be used in the definition of $\modContinuity(\cdot)$ and $C_\shrinkAmount$.
\end{theorem}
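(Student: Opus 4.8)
The plan is to split the suboptimality gap through the optimum $\xopteps := \argmin_{\x\in\DOMeps} f(\x)$ of $f$ over the \emph{contracted} domain:
$$f(\xk) - f(\x^*) = \bigl[f(\xk) - f(\xopteps)\bigr] + \bigl[f(\xopteps) - f(\x^*)\bigr].$$
The first bracket will be controlled by the standard Frank--Wolfe analysis carried out entirely \emph{inside} $\DOMeps$, where $\nabla f$ is Lipschitz, and the second bracket is a fixed ``contraction error'' that I will bound through the modulus of continuity. This is exactly the two-term structure announced after the main-text statement of the theorem.

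For the first term, I would first observe that every FCFW iterate $\xk$ produced by Algorithm~\ref{alg:adaptive_eps} with $\shrinkAmount$ held fixed lies in $\DOMeps$ (it is a convex combination of visited vertices of $\DOMeps$ together with $\unif$), and that the approximate correction / away steps can only decrease $f$; in particular, after a FW step with line search followed by correction we have $f(\xnext) \leq f\!\left(\xk + \tgam{k}\,\dkeps\right)$ for the canonical step size $\tgam{k} = 2/(k+2)$. By Property~\ref{prop:prop_bounded_grad}, $\nabla f$ is $L_\shrinkAmount$-Lipschitz on $\DOMeps$, so the curvature constant of $f$ over $\DOMeps$ is finite with $C_\shrinkAmount \leq L_\shrinkAmount\,\diam(\DOMeps)^2$; the usual inductive argument behind~\eqref{eqn:fw_theorem_convergence_original} (as in \citet{jaggi2013revisiting}, Thm.~1, invoking the Descent Lemma~\ref{lem:descent_lemma}) then gives $f(\xk) - f(\xopteps) \leq \tfrac{2C_\shrinkAmount}{k+2}$. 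Since $\diam(\DOMeps) = (1-\shrinkAmount)\diam(\DOM)$, this quantity stays finite for every $\shrinkAmount>0$, up to the $\shrinkAmount^{-p}$ growth of $L_\shrinkAmount$ allowed by Property~\ref{prop:prop_bounded_grad}.

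For the second term, the key point is that $\xopteps$ is optimal over $\DOMeps$, hence $f(\xopteps) \leq f(\tilde{\x})$ for \emph{any} $\tilde{\x}\in\DOMeps$, and in particular for the cleverly chosen $\tilde{\x} := (1-\shrinkAmount)\x^* + \shrinkAmount\,\unif$, which lies in $\DOMeps$ because $\x^*\in\DOM$. Then $\tilde{\x} - \x^* = \shrinkAmount(\unif - \x^*)$, so $\norm{\tilde{\x} - \x^*} = \shrinkAmount\norm{\unif - \x^*} \leq \shrinkAmount\,\diam(\DOM)$ since both $\unif$ and $\x^*$ belong to $\DOM$. Therefore $f(\xopteps) - f(\x^*) \leq f(\tilde{\x}) - f(\x^*) \leq \modContinuity\!\left(\shrinkAmount\,\diam(\DOM)\right)$, using the (nondecreasing) modulus of continuity of $f$. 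Summing the two bounds yields~\eqref{eq:rate_fixed_delta}, and I would note that the norms used for $C_\shrinkAmount$ and for $\modContinuity$ need not match, since the two terms are bounded independently.

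The one point requiring genuine care — and which I regard as the main obstacle — is the preliminary claim that $f$ is uniformly continuous on all of $\DOM$ with a well-defined modulus $\modContinuity$ satisfying $\modContinuity(\sigma)\downarrow 0$, given only that $f$ is convex, continuously differentiable on $\relint(\DOM)$, and \emph{finite} on $\partial\DOM$ (a finite convex function on a compact convex set is automatically continuous on $\relint(\DOM)$, but not necessarily on the boundary). Here I would use that $\DOM$ is compact and that in the cases of interest $f$ extends continuously to $\partial\DOM$ — this holds when $\DOM$ is a polytope (such as $\MARG$), and in particular for the TRW entropy under the convention $0\log 0 = 0$; Heine--Cantor then gives uniform continuity, and one sets $\modContinuity(\sigma) := \sup\{\,|f(\x) - f(\x')| : \x,\x'\in\DOM,\ \norm{\x - \x'}\leq\sigma\,\}$, which is finite, nondecreasing, and tends to $0$ as $\sigma\downarrow 0$. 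Everything else in the argument is bookkeeping.
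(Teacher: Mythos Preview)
Your proposal is correct and follows essentially the same route as the paper: the same decomposition through $\xopteps$, the same invocation of the standard Frank--Wolfe rate~\eqref{eqn:fw_theorem_convergence_original} over $\DOMeps$ for the first term, and the identical choice $\tilde{\x} = (1-\shrinkAmount)\x^* + \shrinkAmount\,\unif$ with the modulus-of-continuity bound for the second. The only notable difference is that the paper dispatches the uniform-continuity claim in one line (``a lower semi-continuous convex function is continuous at every point where it is finite''), whereas you are more cautious and invoke the polytope structure of $\DOM$; your caution is well placed, since the paper's one-liner is not literally true for arbitrary compact convex $\DOM$ and implicitly relies on the same polyhedrality you make explicit.
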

\begin{proof}
Let $\xopteps$ be an optimal point of $f$ over $\DOMeps$. As $f$ has a Lipschitz continuous gradient over $\DOMeps$, we can use any standard convergence result of the Frank-Wolfe algorithm to bound the suboptimality of the iterate $\xk$ over $\DOMeps$. Algorithm~\ref{alg:adaptive_eps} (with a fixed~$\shrinkAmount$) describes the FCFW algorithm which guarantees at least as much progress as the standard FW algorithm (by step~15 and~20a), and thus we can use the convergence result from~\citet{jaggi2013revisiting} as
already stated in~\eqref{eqn:fw_theorem_convergence_original}: $f(\xk)-f(\xopteps) \leq \frac{2C_{\shrinkAmount}}{(k+2)}$ with $C_{\shrinkAmount} \leq \diam(\DOMeps)^2 L_{\shrinkAmount}$, where $L_{\shrinkAmount}$ comes from Property~\ref{prop:prop_bounded_grad}. This gives the first term in~\eqref{eq:rate_fixed_delta}. Note that if the function $f$ is \emph{strongly} convex, then the FCFW algorithm has also a linear convergence rate~\citep{lacoste2015MFW}, though we do not cover this here.

We now need to bound the difference $f(\xopteps) - f(\xopt)$ coming from the fact that we are not optimizing
over the full domain, and giving the second term in~\eqref{eq:rate_fixed_delta}. We let $\xoptshift$ be the contraction of $\xopt$ on $\DOMeps$ towards $\unif$, i.e.
$\xoptshift := (1-\shrinkAmount) \xopt + \shrinkAmount \unif$.\footnote{Note that without a strong convexity assumption on $f$, the optimum over $\DOMeps$, $\xopteps$, could be quite far from the optimum over $\DOM$, $\xopt$, which is why we need to construct this alternative close point to $\xopt$.} Note that $\| \xoptshift - \xopt \| = \delta \| \xopt - \unif \| \leq  \delta \diam(\DOM)$, and thus can be made arbitrarily small by letting $\shrinkAmount \downarrow 0$. Because $\xoptshift \in \DOMeps$, we have that $f(\xoptshift) \geq f(\xopteps)$ as $\xopteps$ is optimal over~$\DOMeps$.
Thus $f(\xopteps) - f(\xopt) \leq f(\xoptshift) - f(\xopt) \leq \modContinuity(\| \xoptshift - \xopt\|)$ by the uniform continuity of~$f$ (that we explain below). Since~$\modContinuity$ is an increasing function, we have $\modContinuity(\| \xoptshift - \xopt\|) \leq \modContinuity(\delta \diam(\DOM))$, giving us the control on the second term of~\eqref{eq:rate_fixed_delta}. See Figure~\ref{fig:fixed_eps_diagram} for an illustration of the four points considered in this proof.

Finally, we explain why $f$ is uniformly continuous. As $f$ is a (lower semi-continuous) convex function, it is continuous at every point where it is finite. As $f$ is said to be finite at its boundary (and it is obviously finite in the relative interior of $\DOM$ as it is continuously differentiable there), then $f$ is continuous over the whole of $\DOM$. As $\DOM$ is compact, this means that $f$ is also uniformly continuous over $\DOM$.
\end{proof}

\begin{figure} 
\centering
\includegraphics[width=.5\textwidth]{./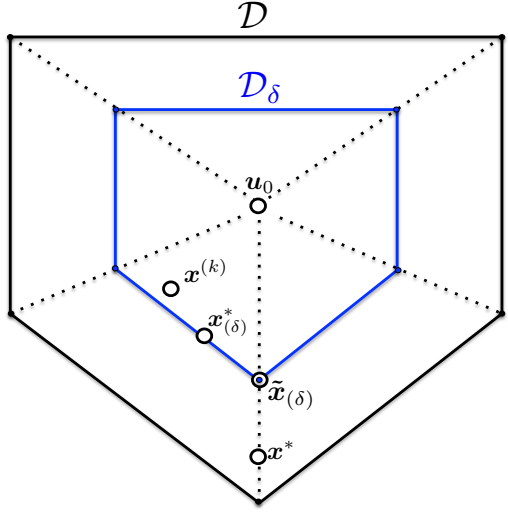}
\vspace{-2mm}
\caption{Illustration of the four points considered for the error analysis of the fixed-$\shrinkAmount$ variant}
\label{fig:fixed_eps_diagram}
\vspace{-3mm}
\end{figure}

We note that the modulus of continuity function $\modContinuity$ quantifies the level of continuity of $f$. For a Lipschitz continuous function, we have $\modContinuity(\sigma) \leq L \sigma$. If instead we have $\modContinuity(\sigma) \leq C \sigma^\alpha$ for some $\alpha \in [0,1]$, then $f$ is actually $\alpha$-H\"{o}lder continuous. We will see in Section~\ref{sec:trw_weak_lip} that the TRW objective is not Lipschitz continuous, but it is  $\alpha$-H\"{o}lder continuous for any $\alpha < 1$, and so is ``almost'' Lipschitz continuous. From the theorem, we see that to get an accuracy of the order $\stopCrit$, we would need $(\shrinkAmount \diam(\DOM))^\alpha < \stopCrit$, and thus a contraction of $\shrinkAmount < \frac{\stopCrit^{(1/\alpha)}}{\diam(\DOM)}$.

\section{Convergence with Adaptive-$\shrinkAmount$ \label{sec:theory_adaptive_eps}}

In this section, we show the convergence of the adaptive-$\shrinkAmount$ FW algorithm
to optimize a function $f$ satisfying the properties in Problem~\ref{prop:generic_fxn} and Property~\ref{prop:prop_bounded_grad} (Lipschitz gradient over $\DOMeps$ with bounded growth).

The adaptive update for $\shrinkAmount$ (given in Algorithm~\ref{alg:adaptive_update}) can
be used with the standard Frank-Wolfe optimization algorithm or also the fully corrective Frank-Wolfe (FCFW) variant. In FCFW, we ensure that every update makes more progress than a standard FW step with line-search, and thus we will show the convergence result in this section for standard FW (which also applies to FCFW). We describe the FCFW variant with approximate correction steps in Algorithm~\ref{alg:adaptive_eps}, as this is what we used in our experiments.

We first list a few definitions and lemmas that will be used for the main convergence convergence result given in Theorem~\ref{thm:convergence_adaptive_eps}.
We begin with the definitions of duality gaps that we use throughout this section. The Frank-Wolfe 
gap is our primary criterion for halting and measuring the progress of the optimization over $\DOM$.
The uniform gap is a measure of the decrease obtainable from moving towards the uniform distribution.

\begin{definition}
	\label{prop:g_k_properties_true}
	We define the following gaps:
	\begin{enumerate}
	\item The Frank-Wolfe (FW) gap is defined as:
	$\gk :=  \brangle{-\nabla f(\xk),\sk-\xk}$. 

	\item The uniform gap is defined as:
	$\gunif := \brangle{-\nabla f(\xk),\unif-\xk}$. 
	
	\item The FW gap over $\DOMeps$ is:
	$\gepsk{k} := \brangle{-\nabla f(\xk),\skeps-\xk}$. 
	\end{enumerate}
\end{definition}

The name for the uniform gap comes from the fact that the FW gap over $\DOMeps$ can be expressed as a convex combination of the FW gap over $\DOM$ and the uniform gap:
\begin{align}
\gepsk{k} &=  \brangle{-\nabla f(\xk), \,\, (1-\epsk{k}) \sk + \epsk{k} \unif-\xk} \nonumber \\
		  &=  (1-\shrinkAmount^{(k)})\gk + \shrinkAmount^{(k)} \gunif. \label{eq:gap_delta_decomposition}
\end{align}
The uniform gap represents the negative directional derivative of $f$ at $\xk$ in the direction $\unif - \xk$. When the uniform gap is negative (thus $f$ is increasing when moving towards $\unif$ from $\xk$), then the contraction is hurting progress, which explains the type of adaptive update for $\shrinkAmount$ given by Algorithm~\ref{alg:adaptive_update} where we consider shrinking $\shrinkAmount$ in this case. This enables us to crucially relate the FW gap over $\DOMeps$ with the one over $\DOM$, as given in the following lemma, using the assumption that $\shrinkAmount^{(\mathrm{init})} \leq \frac{1}{4}$.
\begin{lemma}[Gaps relationship]
	\label{prop:g_k_properties_adaptive}
	For iterates progressing as in Algorithm~\ref{alg:adaptive_eps} with adaptive update on $\shrinkAmount$ as given in Algorithm~\ref{alg:adaptive_update}, the gap over $\DOMeps$ and $\DOM$ are related as : $\gepsk{k}\geq\frac{\gk}{2}$.
\end{lemma}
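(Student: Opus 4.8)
I would start from the convex-combination identity~\eqref{eq:gap_delta_decomposition}, $\gepsk{k} = (1-\epsk{k})\gk + \epsk{k}\gunif$, and bound its right-hand side below by $\gk/2$ via a case split on the sign of the uniform gap $\gunif$. Two elementary facts will be used throughout. First, the ordinary FW gap is nonnegative, $\gk\geq 0$: since $\sk$ minimizes $\brangle{\nabla f(\xk),\cdot}$ over all of $\DOM$ and $\xk\in\DOMeps\subseteq\DOM$, one has $\brangle{\nabla f(\xk),\sk}\leq\brangle{\nabla f(\xk),\xk}$. Second, $\epsk{k}\leq\epsk{\mathrm{init}}\leq\tfrac14$ for every $k$: Algorithm~\ref{alg:adaptive_update} never increases $\shrinkAmount$ (it either keeps $\epsk{k-1}$ or replaces it by $\min(\tilde{\shrinkAmount},\epsk{k-1}/2)\leq\epsk{k-1}$), so this follows by induction from $\epsk{-1}=\epsk{\mathrm{init}}\leq\tfrac14$.

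In the easy case $\gunif\geq0$, the term $\epsk{k}\gunif$ is nonnegative, so $\gepsk{k}\geq(1-\epsk{k})\gk\geq\tfrac34\gk\geq\tfrac12\gk$, using $\epsk{k}\leq\tfrac14$ and $\gk\geq0$.

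The substantive case is $\gunif<0$, and here I would show that the adaptive rule forces $\epsk{k}\leq\tilde{\shrinkAmount}=\gk/(-4\gunif)$, equivalently — multiplying through by $-4\gunif>0$ — that $\epsk{k}\gunif\geq-\gk/4$. This I would check by inspecting the branches of Algorithm~\ref{alg:adaptive_update}: if $\tilde{\shrinkAmount}\geq\epsk{k-1}$, then $\shrinkAmount$ is not updated and $\epsk{k}=\epsk{k-1}\leq\tilde{\shrinkAmount}$; if $\tilde{\shrinkAmount}<\epsk{k-1}$, then $\epsk{k}=\min(\tilde{\shrinkAmount},\epsk{k-1}/2)\leq\tilde{\shrinkAmount}$. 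Plugging $\epsk{k}\gunif\geq-\gk/4$ together with $\epsk{k}\leq\tfrac14$ and $\gk\geq0$ into~\eqref{eq:gap_delta_decomposition} gives $\gepsk{k}\geq(1-\epsk{k})\gk-\tfrac14\gk\geq\tfrac34\gk-\tfrac14\gk=\tfrac12\gk$, which is the desired inequality.

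The argument is genuinely short, so there is no serious obstacle; the only points I would state with care are the uniform bound $\epsk{k}\leq\tfrac14$ (it uses both the monotone-decrease property of the schedule and $\epsk{\mathrm{init}}\leq\tfrac14$) and the nonnegativity $\gk\geq0$ (needed so that rounding the coefficient $1-\epsk{k}$ down to $\tfrac34$ only decreases the first term rather than increasing it).
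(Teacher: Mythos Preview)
Your proposal is correct and follows essentially the same approach as the paper's proof: both use the convex-combination identity~\eqref{eq:gap_delta_decomposition}, split on the sign of $\gunif$, and in the negative case invoke the update rule to get $\epsk{k}\gunif\geq -\gk/4$. Your write-up is in fact slightly more careful than the paper's, since you explicitly justify $\gk\geq 0$, the monotone decrease $\epsk{k}\leq\epsk{\mathrm{init}}\leq\tfrac14$, and check both branches of Algorithm~\ref{alg:adaptive_update} to verify $\epsk{k}\leq\tilde{\shrinkAmount}$, whereas the paper leaves these implicit.
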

\begin{proof}
	The duality gaps $\gk$ and $\gepsk{k}$ computed as defined in~\eqref{prop:g_k_properties_true} during Algorithm~\ref{alg:adaptive_eps} are related by equation~\eqref{eq:gap_delta_decomposition}.
	
We analyze two cases separately:

(1)	When $\gunif\geq 0$, for $\shrinkAmount^{(\mathrm{init})}\leq\frac{1}{4}$, we have $\gepsk{k} \geq \frac{3}{4}\gk$ as $\epsk{k} \leq \shrinkAmount^{(\mathrm{init})}$.

(2)	When $\gunif <0$, from the update rule in lines~5 to~7 in Algorithm~\ref{alg:adaptive_update}, we have
	$\epsk{k}\leq \frac{\gk}{-4\gunif}\implies \epsk{k}\gunif\geq-\frac{\gk}{4}$. Therefore, $\gepsk{k} \geq \frac{3}{4}\gk-\frac{\gk}{4}  = \frac{\gk}{2}$.

Therefore, the gap over $\DOMeps$ and $\DOM$ are related as : $\gepsk{k}\geq\frac{\gk}{2}$.
\end{proof}

Another property that we will use in the convergence proof is that $-\gunifNox$ is upper bounded for any convex function $f$:\footnote{Note that on the other hand, $\gunifNox(\x)$ might go to infinity as $\x$ gets close to the boundary of $\DOM$ as the gradient of $f$ is allowed to be unbounded. Fortunately, we only need an upper bound on $-\gunifNox$, not a lower bound.}
\begin{lemma}[Bounded negative uniform gap]
\label{prop:prop_bounded_gunif}
Let $f$ be a continuously differentiable convex function on the relative interior of $\DOM$. Then for any fixed $\unif$ in the relative interior of $\DOM$, $\exists \gunifBound$
s.t. 
\begin{equation} 
\forall \x\in\DOM, \;\; -\gunifNox(\x) = \innerProd{\nabla f(\x)}{\unif-\x} \leq \gunifBound.
\end{equation}
In particular, we can take the finite value:
\begin{equation} \label{eq:uniformBound}
	B := \| \nabla f(\unif) \|_* \diam_{\| \cdot \|} (\DOM)
\end{equation}
\end{lemma}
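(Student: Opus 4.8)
The goal is to prove Lemma \ref{prop:prop_bounded_gunif}: for a convex, continuously differentiable $f$ on $\relint(\DOM)$, the quantity $-\gunifNox(\x) = \innerProd{\nabla f(\x)}{\unif - \x}$ is bounded above by $B := \|\nabla f(\unif)\|_* \diam_{\|\cdot\|}(\DOM)$ for all $\x \in \DOM$.

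The plan is to exploit the first-order monotonicity characterization of convex differentiable functions. For any two points $\x, \y$ in $\relint(\DOM)$, convexity gives $\innerProd{\nabla f(\y) - \nabla f(\x)}{\y - \x} \geq 0$. I would apply this with $\y = \unif$ (which is in $\relint(\DOM)$ by hypothesis), obtaining $\innerProd{\nabla f(\unif) - \nabla f(\x)}{\unif - \x} \geq 0$, i.e. $\innerProd{\nabla f(\x)}{\unif - \x} \leq \innerProd{\nabla f(\unif)}{\unif - \x}$. Then I bound the right-hand side using the generalized Cauchy--Schwarz (Hölder) inequality: $\innerProd{\nabla f(\unif)}{\unif - \x} \leq \|\nabla f(\unif)\|_* \|\unif - \x\| \leq \|\nabla f(\unif)\|_* \diam_{\|\cdot\|}(\DOM)$, since both $\unif$ and $\x$ lie in $\DOM$. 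This chain gives exactly the claimed bound $B$.

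One technical point to address: $\x$ ranges over all of $\DOM$, including its relative boundary, whereas $f$ is only assumed continuously differentiable on $\relint(\DOM)$ — so $\nabla f(\x)$ might not even be well-defined (or finite) at boundary points. The statement implicitly only concerns $\x$ where $\nabla f(\x)$ exists, or one should read the bound as holding wherever the left-hand side is defined; the monotonicity inequality and Cauchy--Schwarz step are valid for any such $\x$. If one wants the bound literally for all $\x \in \DOM$, one can note that $-\gunifNox(\x)$ is only referenced at iterates of the algorithm, which always lie in $\DOMeps \subset \relint(\DOM)$, so there is no real gap; alternatively the inequality extends by continuity/limits to boundary points where the directional derivative toward $\unif$ is still finite.

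I do not anticipate a genuine obstacle here — the proof is a two-line application of convexity monotonicity plus Hölder's inequality. The only thing worth being careful about is making sure the first-order monotonicity inequality is invoked at points where $\nabla f$ is defined (both $\unif \in \relint(\DOM)$ and the relevant $\x$), and being explicit that the dual norm pairing is what makes Cauchy--Schwarz applicable, consistent with the norm conventions already set up in the Descent Lemma. So the write-up will simply state the monotonicity inequality, rearrange, apply Hölder, and bound the distance by the diameter.
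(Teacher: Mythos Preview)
Your proposal is correct and follows essentially the same approach as the paper: invoke the first-order monotonicity inequality for convex functions at $\unif$ and $\x$, rearrange to get $\innerProd{\nabla f(\x)}{\unif-\x}\leq\innerProd{\nabla f(\unif)}{\unif-\x}$, then apply the generalized Cauchy--Schwarz (dual norm) inequality and bound $\|\unif-\x\|$ by the diameter. The paper also handles the boundary issue exactly as you suggest, by extending the inequality via limits from the relative interior.
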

\begin{proof}
As $f$ is convex, its directional derivative is a monotone increasing function in any direction. Let $\unif$ and $\x$ be points in the relative interior of $\DOM$; then their gradient exists and we have by the monotonicity property:
\begin{align*}
\innerProd{\nabla f(\unif) - \nabla f(\x)}{\unif-\x} \geq 0 \\
\implies \innerProd{\nabla f(\unif)}{\unif-\x} \geq \innerProd{\nabla f(\x)}{\unif-\x} . 
\end{align*}
This inequality is valid for all $\x$ in the relative interior of $\DOM$, and can be extended to the boundary by taking limits (with potentially the RHS become minus infinity, but this is not a problem). Finally, by the definition of the dual norm (generalized Cauchy-Schwartz), we have $ \innerProd{\nabla f(\unif)}{\unif-\x} \leq \| \nabla f(\unif) \|_* \| \unif - \x \| \leq \| \nabla f(\unif) \|_*  \diam_{\| \cdot \|} (\DOM)$.
\end{proof}

Finally, we need a last property of Algorithm~\ref{alg:adaptive_eps} that allows us to bound the amount of perturbation $\epsk{k}$ of the polytope
at every iteration as a function of the sub-optimality over $\DOM$.  
\begin{lemma}[Lower bound on perturbation]
	\label{prop:eps_k_properties}
	Let $B$ be a bound such that $-8\gunifNox(\x)\leq \gunifBound$ for all $\x \in \DOM$ (given by Lemma~\ref{prop:prop_bounded_gunif}). Then at every stage of Algorithm~\ref{alg:adaptive_eps}, we have that:
	$$
		\epsk{\mathrm{init}} \geq \epsk{k}\geq\min\left\{\frac{\hk}{\gunifBound},\epsk{\mathrm{init}}\right\},
    $$
	where $\epsk{\mathrm{init}}$ is the initial value of $\shrinkAmount$ and $\hk := f(\xk)-f(\x^*)$ is the sub-optimality of the iterate.
\end{lemma}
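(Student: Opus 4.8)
The plan is to prove the two inequalities separately by an induction on $k$ that tracks how Algorithm~\ref{alg:adaptive_eps}, through the $\shrinkAmount$-update of Algorithm~\ref{alg:adaptive_update}, modifies the contraction parameter. The upper bound $\epsk{\mathrm{init}} \geq \epsk{k}$ is immediate: at each iteration $\shrinkAmount$ is either left unchanged or replaced by $\min(\tilde{\shrinkAmount}, \epsk{k-1}/2) \leq \epsk{k-1}$, so the sequence $(\epsk{k})$ is non-increasing and $\epsk{-1} = \epsk{\mathrm{init}}$.

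For the lower bound I would carry the inductive hypothesis $\epsk{k-1} \geq \min\{\hprev/\gunifBound, \epsk{\mathrm{init}}\}$, with base case $\epsk{-1} = \epsk{\mathrm{init}}$. Two standard facts drive the argument. First, the Frank--Wolfe gap dominates the suboptimality: since $\sk$ minimizes $\brangle{\nabla f(\xk), \cdot}$ over $\DOM$ and $\xopt \in \DOM$, convexity gives $\gk = \brangle{-\nabla f(\xk), \sk - \xk} \geq \brangle{-\nabla f(\xk), \xopt - \xk} \geq f(\xk) - f(\xopt) = \hk$. Second, each iteration of Algorithm~\ref{alg:adaptive_eps} performs a line-search FW step followed by a correction step, neither of which increases $f$, so $\hk \leq \hprev$ and hence $\min\{\hprev/\gunifBound, \epsk{\mathrm{init}}\} \geq \min\{\hk/\gunifBound, \epsk{\mathrm{init}}\}$. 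A finite admissible $\gunifBound$ is furnished by Lemma~\ref{prop:prop_bounded_gunif}.

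With these in hand I would split on the outcome at iteration $k$. If $\shrinkAmount$ is not updated ($\gunif \geq 0$, or $\gunif < 0$ but $\tilde{\shrinkAmount} \geq \epsk{k-1}$), then $\epsk{k} = \epsk{k-1}$ and the inductive hypothesis together with $\hk \leq \hprev$ closes this case. If $\shrinkAmount$ is updated, then $\gunif < 0$ and $\tilde{\shrinkAmount} = \gk/(-4\gunif) < \epsk{k-1}$; the key estimate is $\tilde{\shrinkAmount} \geq 2\hk/\gunifBound$. Indeed $\gk \geq \hk$, and the hypothesis $-8\gunif \leq \gunifBound$ gives $0 < -4\gunif \leq \gunifBound/2$, so $\tilde{\shrinkAmount} = \gk/(-4\gunif) \geq 2\hk/\gunifBound$. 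Combining $\tilde{\shrinkAmount} \geq 2\hk/\gunifBound$ with the triggering condition $\tilde{\shrinkAmount} < \epsk{k-1}$ yields $\epsk{k-1} > 2\hk/\gunifBound$, i.e. $\epsk{k-1}/2 > \hk/\gunifBound$; therefore $\epsk{k} = \min(\tilde{\shrinkAmount}, \epsk{k-1}/2) \geq \hk/\gunifBound \geq \min\{\hk/\gunifBound, \epsk{\mathrm{init}}\}$, completing the induction.

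The main obstacle is the update case, and in particular noticing that the rule decreases $\shrinkAmount$ \emph{only} when $\tilde{\shrinkAmount} < \epsk{k-1}$ and then caps the new value by $\tilde{\shrinkAmount}$ as well as halving. It is exactly this triggering condition that converts the lower bound $\tilde{\shrinkAmount} \geq 2\hk/\gunifBound$ into the needed bound $\epsk{k-1}/2 > \hk/\gunifBound$ on the halved value; without it, repeated halving could in principle push $\shrinkAmount$ below $\hk/\gunifBound$. The remaining steps are routine case bookkeeping.
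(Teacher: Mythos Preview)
Your proof is correct and follows essentially the same approach as the paper's. The paper phrases the argument as a one-step recurrence $\epsk{k}\geq\min\{\hk/\gunifBound,\epsk{k-1}\}$ (observing that in the update case $\epsk{k}=\min(\tilde\shrinkAmount,\epsk{k-1}/2)\geq\tilde\shrinkAmount/2\geq\gk/\gunifBound$) and then unrolls it using the monotonicity of $\hk$; you unfold this into an explicit induction and bound the two terms in the $\min$ separately, but the key estimates and the use of $\gk\geq\hk$, $-8\gunif\leq\gunifBound$, and the triggering condition $\tilde\shrinkAmount<\epsk{k-1}$ are identical.
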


\begin{proof}
	When defining $\epsk{k}$ in step 10 of Algorithm~\ref{alg:adaptive_eps}, we either preserve the value of $\epsk{k-1}$ or if we update it, then by the lines~6 and~7 of Algorithm~\ref{alg:adaptive_update}, we have $\epsk{k} \geq \frac{\tilde{\shrinkAmount}}{2} = \frac{1}{2} \frac{\gk}{-4\gunif} \geq  \frac{\gk}{\gunifBound}$ (by using $\gunif < 0$ in this case). Since $\gk\geq\hk$ (the FW gap always upper bounds the suboptimality), we conclude $\epsk{k} \geq \min\{\frac{h_k}{\gunifBound}, \epsk{k-1}\}$. 
	Unrolling this recurrence, we thus get:
	$$\epsk{k} \geq \min\left\{ \min_{0\leq l \leq k} \frac{\hkarg{l}}{\gunifBound}, \, \, \epsk{\mathrm{init}}\right\}
	=  \min\left\{ \frac{\hk}{\gunifBound}, \epsk{\mathrm{init}}\right\}. $$
	For the last equality, we used the fact that $h_k$ is non-increasing since Algorithm~\ref{alg:adaptive_eps} decreases the objective at every iteration (using the line-search in step~14).
\end{proof}

We now bound the generalization of a standard recurrence that will arise in the proof of convergence. This is a generalization of the technique used in~\citetsup{teo2007erm} (also used in the context of Frank-Wolfe in the proof of Theorem C.4 in \citet{lacoste2012block}). The basic idea is that one can bound a recurrence inequality by the solution to a differential equation. 
We provide a detailed proof of the 
bound for completeness here. 
\begin{lemma}[Recurrence inequality solution]
	\label{lem:differential_bound}
	Let $1< a \leq b$. Suppose that $\h_k$ is any non-negative sequence that satisfies the recurrence inequality: 
	$$\h_{k+1}\leq \h_k-\frac{1}{b C_0} (\h_k)^a \qquad \text{with initial condition} \quad\h_0^{a-1}\leq C_0.$$
	Then $\h_k$ is strictly decreasing (unless it equals zero) and can be bounded for $k \geq 0$ as: 
	$$\h_{k}\leq \left(\frac{C_0}{(\frac{a-1}{b})k+1}\right)^{\frac{1}{a-1}}$$
\end{lemma}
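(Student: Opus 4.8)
The plan is to compare the discrete recurrence $h_{k+1} \le h_k - \frac{1}{bC_0} h_k^a$ with the solution of the associated ODE $\dot{y} = -\frac{1}{bC_0} y^a$, whose closed-form solution is exactly the claimed bound. First I would verify the easy monotonicity claim: since $h_k \ge 0$ and the subtracted term $\frac{1}{bC_0} h_k^a \ge 0$, we have $h_{k+1} \le h_k$, and the decrease is strict whenever $h_k > 0$ (here one uses $C_0 > 0$, which follows from $h_0^{a-1} \le C_0$ and, if $h_0 = 0$, the sequence is identically zero and the bound is trivial — so WLOG $C_0 > 0$). Monotonicity also gives $h_k^{a-1} \le h_0^{a-1} \le C_0$ for all $k$, which will be needed to control error terms.

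The main step is an induction on $k$ to show $h_k \le \left(\frac{C_0}{(\frac{a-1}{b})k + 1}\right)^{1/(a-1)}$. The base case $k=0$ is precisely the hypothesis $h_0^{a-1} \le C_0$. For the inductive step, the cleanest route is to work with the transformed quantity $g_k := h_k^{-(a-1)}$ (well-defined and positive as long as $h_k > 0$; if some $h_k = 0$ the bound holds trivially from then on). I would show $g_{k+1} \ge g_k + \frac{a-1}{bC_0}$, which telescopes to $g_k \ge g_0 + \frac{(a-1)k}{bC_0} \ge \frac{1}{C_0} + \frac{(a-1)k}{bC_0} = \frac{1}{C_0}\big(1 + \frac{a-1}{b}k\big)$, and inverting gives the claim. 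To get the per-step inequality on $g_k$, write $t := \frac{1}{bC_0} h_k^{a-1} \in [0,1]$ (using $h_k^{a-1} \le C_0 \le bC_0$), so that $h_{k+1} \le h_k(1 - t)$, hence $g_{k+1} = h_{k+1}^{-(a-1)} \ge g_k (1-t)^{-(a-1)}$. Now use the elementary inequality $(1-t)^{-(a-1)} \ge 1 + (a-1)t$ for $t \in [0,1)$ and $a - 1 > 0$ (convexity / Bernoulli-type bound), giving $g_{k+1} \ge g_k(1 + (a-1)t) = g_k + (a-1) t\, g_k = g_k + (a-1)\cdot \frac{h_k^{a-1}}{bC_0}\cdot h_k^{-(a-1)} = g_k + \frac{a-1}{bC_0}$, as desired.

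The hardest part is handling the boundary cases cleanly — specifically the possibility that $h_k$ hits zero (so the transformation $g_k = h_k^{-(a-1)}$ breaks down) and confirming that $t = \frac{1}{bC_0}h_k^{a-1} < 1$ so the Bernoulli inequality applies with the right sign. Both are dispatched by the monotonicity established at the outset: once $h_k = 0$, all later terms vanish and the bound is trivially true; and $h_k^{a-1} \le C_0 \le bC_0$ (using $b \ge a > 1$, so $b \ge 1$) forces $t \le 1$, with $t = 1$ only possible if $b = 1$ and $h_k^{a-1} = C_0$, an edge case one can absorb by noting $b > 1$ is not actually assumed — if $b=1$ then $a=1$ too, contradicting $a>1$, so in fact $b > 1$ and $t < 1$ strictly. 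The remaining inequality $(1-t)^{-(a-1)} \ge 1 + (a-1)t$ is standard (e.g. by convexity of $u \mapsto u^{-(a-1)}$ on $(0,\infty)$ applied at $u = 1-t$ with tangent at $u=1$, or by comparison of the two sides' derivatives at $t=0$ together with convexity in $t$), and I would just cite it. Everything else is routine telescoping and algebra.
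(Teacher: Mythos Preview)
Your proof is correct but follows a different route from the paper's. The paper first writes down the continuous-time ODE $\dot h = -h^a/(bC_0)$, solves it explicitly to obtain the target function $\tilde h(t) = \big(C_0/((\tfrac{a-1}{b})t+1)\big)^{1/(a-1)}$, and then proves $h_k \le \tilde h(k)$ by induction: introducing $l(h) := h - h^a/(bC_0)$, the paper shows $l$ is increasing on $[0,\bar h]$ with $\bar h = (bC_0/a)^{1/(a-1)}$, checks that $\tilde h(t) \le \bar h$ for all $t \ge 0$ (this is exactly where the hypothesis $a \le b$ enters, via $\tilde h(0) = C_0^{1/(a-1)} \le \bar h$), and then combines the monotonicity of $l$ with the convexity inequality $\tilde h(k+1) \ge \tilde h(k) + \tilde h'(k) = l(\tilde h(k))$ to close the induction.

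Your argument instead linearizes the recurrence by passing to $g_k = h_k^{-(a-1)}$ and uses the Bernoulli-type bound $(1-t)^{-(a-1)} \ge 1 + (a-1)t$ to obtain the additive inequality $g_{k+1} \ge g_k + (a-1)/(bC_0)$, which telescopes directly. This is more elementary---no ODE needs to be solved or analyzed---and your use of $a \le b$ is lighter (only to force $b>1$ and hence $t<1$), whereas the paper needs it to place the iterates in the monotone region of $l$. The paper's approach, on the other hand, makes the origin of the bound transparent as the solution of the associated differential equation and is a reusable template for comparing discrete recurrences to their continuous analogues.
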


\begin{proof}
Taking the continuous time analog of the recurrence inequality, we consider
the differential equation:
$$\frac{\del h}{\del t} = \frac{-h^a}{bC_0} \qquad \text{with initial condition} \quad h(0)=C_0^{\frac{1}{a-1}}.$$
Solving it:
	\begin{equation*}
		\begin{split}
		&\frac{\del h}{\del t} = \frac{-h^a}{b C_0}\\
		\implies &\int \frac{\del h}{h^a} = \int \frac{-\del t}{b C_0}\\
		\implies &\left[\frac{-h^{1-a}}{a-1}\right]^{h(t)}_{h(0)} = -\frac{t-0}{b C_0}\\
		&\algComment{ Using the initial conditions:}\\
		\implies &\frac{-1}{h(t)^{a-1}} + \frac{1}{C_0} = \frac{-t(a-1)}{b C_0}\\
		\implies &\frac{1}{h(t)^{a-1}} = \left( (\frac{a-1}{b}) t + 1\right)\frac{1}{C_0}\\
		\implies &h(t) = \left(\frac{C_0}{ (\frac{a-1}{b}) t +1}\right)^{\frac{1}{a-1}}.
		\end{split}
	\end{equation*}
We now denote the solution to the differential equation as $\tilde{h}(t)$. Note that it is a strictly decreasing convex function 
(which could also be directly implied from the differential equation as: 
$\frac{\del^2 h}{\del t^2} = -a\underbrace{\frac{h^{a-1}}{b C_0}}_{>0} \underbrace{h'(t)}_{<0}>0$ ).
Our strategy will be to show by induction that if $\h_k\leq \tilde{h}(k)$, then $\h_{k+1}\leq \tilde{h}(k+1)$. This allows us to bound the recurrence by the solution to the differential equation. 

Assume that $\h_k\leq \tilde{h}(k)$. The base case is $\h_0\leq\tilde{h}(0)= C_0^{\frac{1}{a-1}}$, which is true by the initial condition on $\h_0$.

Consider the utility function $l(h) := h-\frac{h^a}{b C_0}$ which is maximized at $\bar{h} := \left(\frac{b C_0}{a}\right)^{\frac{1}{a-1}}$. This function can be verified to be strictly concave for $a > 1$ and therefore is increasing for $h\leq \bar{h}$. Note that the recurrence inequality can be written as $\h_{k+1} \leq l(\h_k)$.
Since $\tilde{h}$ is decreasing and that $\tilde{h}(0)) = C_0^{\frac{1}{a-1}} \leq \left(\frac{bC_0}{a}\right)^{\frac{1}{a-1}} = \bar{h}$ (the last inequality holds since $b\geq a$), we have $\tilde{h}(t) \leq \bar{h}$ for all $t \geq 0$, and so $\tilde{h}(t)$ is always in the monotone increasing region of $l$. 

From the induction hypothesis and the monotonicity of $l$, we thus get that $l(\h_k)\leq l(\tilde{h}(k))$. 

Now the convexity of $\tilde{h}(t)$ gives us $\tilde{h}(k+1)\geq \tilde{h}(k)+\tilde{h}'(k)=\tilde{h}(k)-\frac{\tilde{h}(k)^a}{b C_0} = l(\tilde{h}(k))$.
Combining these two facts with the recurrence inequality $\h_{k+1} \leq l(\h_k)$, we get:
$\h_{k+1}\leq l(\h_k) \leq l(\tilde{h}(k))\leq \tilde{h}(k+1)$, completing the induction step
and the main part of the proof.

Finally, whenever $\h_k > 0$, we have that $\h_{k+1} < \h_k$ from the recurrence inequality, and so 
$\h_k$ is strictly decreasing as claimed.
\end{proof}

Given these elements, we are now ready to state the main convergence result for Algorithm~\ref{alg:adaptive_eps}. The convergence rate goes through three stages with increasingly slower rate. The level of suboptimality $\hk$ determines the stage. We first give the high level intuition behind these stages. Recall that by Lemma~\ref{prop:eps_k_properties}, $\hk$ lower bounds the amount of perturbation $\epsk{k}$, and thus when $\hk$ is big, the function $f$ is well-behaved by Property~\ref{prop:prop_bounded_grad}. In the first stage, the suboptimality is bigger than some target constant (which implies that the FW gap is big), yielding a geometric rate of decrease of error (as is standard for FW with line-search in the first few steps). In the second stage, the suboptimality is in an intermediate regime: it is smaller than the target constant, but big enough compared to the initial $\shrinkAmount^{\mathrm{init}}$ so that $f$ is still well-behaved on $\DOM_{\epsk{k}}$. We get there the usual $O(1/k)$ rate as in standard FW. Finally, in the third stage, we get the slower $O(k^{-\frac{1}{p+1}})$ rate where the growth in $O(\shrinkAmount^{-p})$ of the Lipschitz constant of $f$ over $\DOMeps$ comes into play.
\begin{theorem}[Global convergence for adaptive-$\shrinkAmount$ variant over $\DOM$]
	\label{thm:convergence_adaptive_eps}
	Consider the optimization of $f$ satisfying the properties in Problem~\ref{prop:generic_fxn} and Property~\ref{prop:prop_bounded_grad}.  Let~$\Cconst := L \diam_{\|\cdot\|}(\DOM)^2$, where $L$ is from Property~\ref{prop:prop_bounded_grad}. Let~$B$ be the upper bound on the negative uniform gap:~$-8\gunifNox(\x)\leq \gunifBound$ for all $\x \in \DOM$, as used in Lemma~\ref{prop:eps_k_properties} (arising from Lemma~\ref{prop:prop_bounded_gunif}). Then the iterates~$\xk$ obtained by running the Frank-Wolfe updates over $\DOMeps$ with line-search with $\shrinkAmount$ updated according to Algorithm~\ref{alg:adaptive_update} (or as summarized in a FCFW variant in Algorithm~\ref{alg:adaptive_eps}), have suboptimality~$h_k$ upper bounded as:
	\begin{enumerate}
		\item $\hk\leq \left(\frac{1}{2}\right)^{k}\h_{0} +\frac{\Cconst}{\shrinkAmount_0^p}$ for $k$ such that $\hk \geq \max\{ B\shrinkAmount_0, \frac{2\Cconst}{\shrinkAmount_0^p}\}$, \\
		\item $\hk\leq\frac{2\Cconst}{\shrinkAmount_0^p}\left[\frac{1}{\frac{1}{4} (k-k_0)+1}\right]$ for $k$ such that $\gunifBound\shrinkAmount_0\leq\hk\leq\frac{2\Cconst}{\shrinkAmount_0^p}$,\\
		\item $\hk\leq \left[\frac{\max(\Cconst,\gunifBound\shrinkAmount_0^{p+1})\gunifBound^p}{\frac{p+1}{\max(8,p+2)}(k-k_1)+1}\right]^{\frac{1}{p+1}} = O(k^{-\frac{1}{p+1}})$ for $k$ such that $\hk\leq\gunifBound\shrinkAmount_0$,\\
	\end{enumerate}
	where $\shrinkAmount_0 = \epsk{\mathrm{init}}$, $h_0$ is the initial suboptimality, and $k_0$ and $k_1$ are the number of steps to reach stage~2 and~3 respectively which are bounded as: $k_0\leq \max(0,  \lceil\log_{\frac{1}{2}}\frac{\Cconst}{ h_0 \shrinkAmount_0^p}\rceil )$, 
	$k_0 \leq k_1\leq k_0 + \max\left(0,\lceil \frac{8\Cconst}{\gunifBound\shrinkAmount_0^{p+1}}\rceil -4\right)$.
	
\end{theorem}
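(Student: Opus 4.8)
The plan is to combine the Descent Lemma~\ref{lem:descent_lemma} with the three auxiliary lemmas (gaps relationship, bounded negative uniform gap, lower bound on perturbation) to obtain, at each iteration, a single-step decrease recurrence for $h_k := f(\xk)-f(\x^*)$, and then to solve this recurrence separately in the three regimes delimited by the size of $h_k$.

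First I would derive the one-step recurrence. Applying Lemma~\ref{lem:descent_lemma} along the contracted FW direction $\dkeps = \skeps - \xk$, using $\|\dkeps\| \le \diam(\DOMeps) \le \diam(\DOM)$ and the bound $L_{\shrinkAmount} \le L\shrinkAmount^{-p}$ from Property~\ref{prop:prop_bounded_grad}, gives for every $\gamma\in[0,1]$ the inequality $f(\xk+\gamma\dkeps) \le f(\xk) - \gamma\,\gepsk{k} + \tfrac{\gamma^2}{2}\Cconst\shrinkAmount^{-p}$, and the line search in Algorithm~\ref{alg:adaptive_eps} does at least as well as the best such $\gamma$. Minimizing the quadratic in $\gamma$ over $[0,1]$ yields a decrease of at least $\tfrac12\min\{\gepsk{k},\,(\gepsk{k})^2/(\Cconst\shrinkAmount^{-p})\}$, i.e. either a ``full-step'' decrease (when $\gepsk{k}\ge\Cconst\shrinkAmount^{-p}$) or the usual quadratic decrease. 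Then I would invoke Lemma~\ref{prop:g_k_properties_adaptive} ($\gepsk{k}\ge\gk/2$) together with the standard fact that the FW gap upper-bounds the suboptimality ($\gk\ge h_k$) to turn this into a recurrence in $h_k$, and Lemma~\ref{prop:eps_k_properties} ($\shrinkAmount^{(k)}\ge\min\{h_k/B,\,\shrinkAmount_0\}$, together with $\shrinkAmount^{(k)}\le\shrinkAmount_0$ always) to control $\shrinkAmount^{-p}$ from above in terms of $h_k$.

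Next I would split into the three stages. In Stage~1 ($h_k \ge \max\{B\shrinkAmount_0,\,2\Cconst/\shrinkAmount_0^p\}$) one has $\shrinkAmount^{(k)}=\shrinkAmount_0$, and $\gepsk{k}\ge h_k/2\ge\Cconst/\shrinkAmount_0^p$ puts the line search in the full-step regime, giving $h_{k+1}\le\tfrac12 h_k + \tfrac12\Cconst/\shrinkAmount_0^p$; unrolling this geometric recurrence (and summing the geometric series in the constant term) yields the Stage~1 bound, and solving $h_k=\max\{\cdot\}$ bounds $k_0$. In Stage~2 ($B\shrinkAmount_0\le h_k\le 2\Cconst/\shrinkAmount_0^p$) one still has $\shrinkAmount^{(k)}=\shrinkAmount_0$, and in both branches of the line search the decrease is at least $h_k^2/(8\Cconst\shrinkAmount_0^{-p})$ (using $h_k\le 2\Cconst/\shrinkAmount_0^p$ to compare the two branches); applying Lemma~\ref{lem:differential_bound} with $a=2$, $b=8$, $C_0=2\Cconst/\shrinkAmount_0^p$ — whose entry condition $h_{k_0}\le C_0$ holds because Stage~1 has just ended — gives the $O(1/k)$ rate and bounds $k_1-k_0$. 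In Stage~3 ($h_k\le B\shrinkAmount_0$), Lemma~\ref{prop:eps_k_properties} gives $\shrinkAmount^{(k)}\ge h_k/B$, so $\shrinkAmount^{-p}\le(B/h_k)^p$ and the decrease becomes at least $h_k^{p+2}/\bigl(8\max(\Cconst,B\shrinkAmount_0^{p+1})B^p\bigr)$ (the $\max$ accommodating the degenerate case where Stage~2 is empty); applying Lemma~\ref{lem:differential_bound} with $a=p+2$, $b=\max(8,p+2)$, $C_0=B^p\max(\Cconst,B\shrinkAmount_0^{p+1})$ — whose entry condition follows from $h_{k_1}^{p+1}\le(B\shrinkAmount_0)^{p+1}\le B^p\max(\Cconst,B\shrinkAmount_0^{p+1})$ — gives the claimed $O\bigl(k^{-1/(p+1)}\bigr)$ rate with the stated constants.

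The main obstacle is the bookkeeping at the stage boundaries: one must check that the residual suboptimality entering each stage satisfies the initial condition $h^{a-1}\le C_0$ demanded by Lemma~\ref{lem:differential_bound}, handle the degenerate cases in which a stage is vacuous (which is exactly why the Stage~3 constant carries $\max(\Cconst,B\shrinkAmount_0^{p+1})$ and $b=\max(8,p+2)$), and, within each stage, correctly identify which branch of the line-search step-size is active so that the intended recurrence ($h_{k+1}\le\tfrac12 h_k + \text{const}$ in Stage~1, $h_{k+1}\le h_k - c\,h_k^{a}$ in Stages~2--3) genuinely applies. The remaining pieces — the descent inequality, the geometric-series unrolling, and the step counts $k_0$ and $k_1$ — are routine once the three lemmas and Lemma~\ref{lem:differential_bound} are in place.
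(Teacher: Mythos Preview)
Your proposal is essentially identical to the paper's proof: same Descent Lemma step on $\DOMeps$, same use of Lemmas~\ref{prop:g_k_properties_adaptive} and~\ref{prop:eps_k_properties} to obtain the master inequality $h_{k+1}\le h_k-\tfrac{\gamma}{2}h_k+\tfrac{\gamma^2\Cconst}{2(\epsk{k})^p}$, and the same three-stage analysis resolved via Lemma~\ref{lem:differential_bound}. One small arithmetic slip: in Stage~2 your decrease $h_k^2/(8\Cconst\shrinkAmount_0^{-p})$ with $C_0=2\Cconst/\shrinkAmount_0^p$ gives $bC_0=8\Cconst/\shrinkAmount_0^p$, hence $b=4$ (not $b=8$), which is exactly what is needed to recover the factor $\tfrac14(k-k_0)$ in the theorem statement.
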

\begin{proof}
	Let $\x_\stepsize := \xk + \stepsize \dd_k^\FW$ with $\dd_k^\FW$ defined in step~12 in Algorithm~\ref{alg:adaptive_eps}. Note that $\x_\stepsize \in \DOMeps$ with $\shrinkAmount = \epsk{k}$ for all $\stepsize \in [0,1]$. We apply the Descent Lemma~\ref{lem:descent_lemma} on this update to get:
	$$f(\x_\stepsize) \leq f(\xk) + \stepsize\brangle{\nabla f(\xk),\dd_k^\FW} + \stepsize^2\frac{L \|\dd_k^\FW\|^2}{2\left(\epsk{k}\right)^p} \qquad \forall \stepsize \in [0,1].$$
	We have $L \| \dd_k^\FW \|^2 \leq \Cconst$ by assumption and $\brangle{\nabla f(\xk),\dd_k^\FW} = -\geps$ by definition. Moreover, $\xnext$ is defined to make at least as much progress than the line-search result
	$\min_{\stepsize \in [0,1]} f(\x_\stepsize)$ (line~14 and~15), and so we have:
	\begin{align*}
		f(\xnext) &\leq f(\xk) - \stepsize\geps + \stepsize^2\frac{\Cconst}{2 \left(\epsk{k}\right)^p} \quad \forall \stepsize \hiderel{\in} [0,1] \\
		&\leq f(\xk) - \frac{\stepsize}{2}\gk + \stepsize^2\frac{\Cconst}{2 \left(\epsk{k}\right)^p} \quad \forall \stepsize \hiderel{\in} [0,1].
	\end{align*}
	For the final inequality, we used Lemma~\ref{prop:g_k_properties_adaptive} which relates the gap over $\DOMeps$ to the gap over $\DOM$.

	Subtracting $f(\xopt)$ from both sides and using $\gk\geq\hk$ by convexity, we get:
	$$\hnext\leq\hk-\frac{\stepsize\hk}{2} + \frac{\stepsize^2\Cconst}{2 \left(\epsk{k}\right)^p}.$$

	Now, using Lemma~\ref{prop:eps_k_properties}, we have that $\epsk{k}\geq\min(\frac{\hk}{\gunifBound},\epsk{\mathrm{init}})$:
	\begin{align}
		\label{eqn:master_eqn}
		\hnext &\leq\hk-\stepsize\frac{\hk}{2} + \frac{\stepsize^2}{2}\frac{\Cconst}{\left(\min(\frac{\hk}{\gunifBound},\epsk{\mathrm{init}})\right)^p} \quad \forall \stepsize \hiderel{\in} [0,1].
	\end{align}

	We refer to \eqref{eqn:master_eqn} as the master inequality. Since we no longer have a dependance on $\epsk{k}$, we refer to $\epsk{\mathrm{init}}$ as $\shrinkAmount_0$. 
	We now follow a similar form of analysis as in the proof of Theorem~C.4 in~\citet{lacoste2012block}.
	To solve this and bound the suboptimality, we consider three stages:
	\begin{enumerate}
		\item \textbf{Stage 1:} The $\min$ in the denominator is $\shrinkAmount_0$ and $h_k$ is big: $\hk\geq \max\{ B\shrinkAmount_0, \frac{2\Cconst}{\shrinkAmount_0^p}\}$.
		\item \textbf{Stage 2:} The $\min$ in the denominator is $\shrinkAmount_0$ and $h_k$ is small: $\gunifBound\shrinkAmount_0\leq\hk\leq\frac{2\Cconst}{\shrinkAmount_0^p}$.
		\item \textbf{Stage 3:} The $\min$ in the denominator is $\frac{\hk}{\gunifBound}$, i.e.: $\hk\leq\gunifBound\shrinkAmount_0$.
	\end{enumerate}

	Since $\hk$ is decreasing, once we leave a stage, we no longer re-enter it. 
	The overall strategy for each stage is as follows. For each recurrence that we get, we select
	a $\stepsize^*$ that realizes the tightest upper bound on it. 

	Since we are restricted that 
	$\stepsize^*\in[0,1]$, we have to consider when $\stepsize^*>1$ and $\stepsize^*\leq 1$. For the former,
	we bound the recurrence obtained by substituting $\stepsize=1$ into \eqref{eqn:master_eqn}. For the latter, 
	we substitute the form of $\stepsize^*$ into the recurrence and bound the result.

	\subsection*{Stage 1}
	We consider the case where $\hk\geq\gunifBound\shrinkAmount_0$. This yields: 
	\begin{align}
		\label{eqn:case_1}
		\hnext &\leq\hk - \frac{\stepsize\hk}{2} + \frac{\stepsize^2 \Cconst}{2 \left(\shrinkAmount_0\right)^p}	
	\end{align}
	The bound is minimized by setting $\stepsize^* = \frac{\hk\shrinkAmount_0^p}{2\Cconst}$. On the other hand, the bound is only valid for $\stepsize \in [0,1]$, and thus if $\stepsize^* > 1$, i.e. $\hk > \frac{2\Cconst}{\shrinkAmount_0^p}$ (stage 1), then $\stepsize = 1$ will yield the minimum feasible value for the bound. Unrolling the recursion~\eqref{eqn:case_1} for $\stepsize = 1$ during this stage (where $\h_l > \frac{2\Cconst}{\shrinkAmount_0^p}$ for $l < k$ as $\h_k$ is decreasing), we get:
	
	\begin{align}
		\hnext &\leq \frac{\hk}{2} + \frac{\Cconst}{2\shrinkAmount_0^p} \nonumber \\	
	&\leq \frac{1}{2}\left(\frac{\hprev}{2} + \frac{\Cconst}{2\shrinkAmount_0^p}\right) + \frac{\Cconst}{2\shrinkAmount_0^p} \nonumber \\
	&\leq \left(\frac{1}{2}\right)^{k+1} \h_0 + \frac{\Cconst}{2\shrinkAmount_0^p}\underbrace{\sum_{l=0}^{k}\left(\frac{1}{2}\right)^l}_{\leq\sum_{l=0}^{\infty}\left(\frac{1}{2}\right)^{l} = 2} \nonumber \\
	\text{thus} \quad \h_k &\leq \left(\frac{1}{2}\right)^{k} \h_0 + \frac{\Cconst}{\shrinkAmount_0^p} \label{eqn:hk_stage1},
	\end{align}
	giving the bound for the iterates in the first stage.

	We can compute an upper bound on the number of steps it takes to reach a suboptimality of $\frac{2\Cconst}{\shrinkAmount_0^p}$ by looking at the minimum $k$ which ensures that the bound in~\eqref{eqn:hk_stage1} becomes smaller than $\frac{2\Cconst}{\shrinkAmount_0^p}$, yielding
	$k_{\max} = \max(0,\lceil\log_{\frac{1}{2}}\frac{\Cconst}{\h_0\shrinkAmount_0^p}\rceil)$. Therefore, let $k_0\leq k_{\max}$ be the 
	first $k$ such that $\hk\leq\frac{2\Cconst}{\shrinkAmount_0^p}$.

	\subsection*{Stage 2}
	For this case analysis, we refer to $k$ as being the iterations \emph{after} $k_0$ steps have elapsed. I.e. if $k_{\mathrm{new}} := k-k_0$, then we refer to $k_{\mathrm{new}}$ as $k$ moving forward. 
	
	In stage~2, we suppose that $\gunifBound\shrinkAmount_0\leq\hk\leq\frac{2\Cconst}{\shrinkAmount_0^p}$. This means that $\stepsize^* = \frac{\hk\shrinkAmount_0^p}{2\Cconst}\leq 1$.
	
	Substituting $\stepsize=\stepsize^*$ into \eqref{eqn:case_1} yields: $\hnext\leq\hk -\hk^2\frac{\shrinkAmount_0^p}{8\Cconst}$.

	Using the result of Lemma~\ref{lem:differential_bound} with $a=2$, $b=4$ and $C_0 = \frac{2\Cconst}{\shrinkAmount_0^p}$, we get the bound:	
	$$\hk\leq\frac{\frac{2\Cconst}{\shrinkAmount_0^p}}{\frac{k-k_0}{4}+1}.$$
	
	It is worthwhile to point out at this juncture that the bound obtained for stage~$2$ is the same as the one for regular Frank-Wolfe, but with a factor of $4$ worse due to the factor of $\frac{1}{2}$ in front of the FW gap which appeared due to Lemma~\ref{prop:g_k_properties_adaptive}.

	\subsection*{Stage 3}
	Here, we suppose $\hk\leq\gunifBound\shrinkAmount_0$. We can compute a bound on the number of steps $k_1$ needed get to stage~3 by looking at the number of steps it takes for the bound in stage~2 to becomes less than $\gunifBound\shrinkAmount_0$:
	\begin{equation*}
		\begin{split}
		\frac{2\Cconst}{\shrinkAmount_0^p} \left[\frac{4}{k_1-k_0+4}\right] \leq \gunifBound\shrinkAmount_0\\
		\left[\frac{1}{k_1-k_0+4}\right] \leq \frac{\gunifBound\shrinkAmount_0^{p+1}}{8\Cconst}\\
		k_1 \geq k_0 + \lceil\frac{8\Cconst}{\gunifBound\shrinkAmount_0^{p+1}}\rceil - 4.
		\end{split}
	\end{equation*}

	As before, moving forward, our notation on $k$ represents the number of steps taken after $k_1$ steps. 
	
	Then, the master inequality~\eqref{eqn:master_eqn} becomes: 
		$$\hnext\leq \hk -\frac{\stepsize}{2}\hk + \frac{\stepsize^2\Cconst\gunifBound^p}{2\hk^p}.$$

	To simplify the rest of the analysis, we replace $\Cconst\gunifBound^p$ with $F := \max(\gunifBound\shrinkAmount_0^{p+1},\Cconst)\gunifBound^p$. We then get the bound:
	\begin{equation}
		\label{eqn:case_2_mod}
		\hnext\leq \hk-\frac{\stepsize}{2}\hk + \frac{\stepsize^2F}{2\hk},
	\end{equation}
	which is minimized by setting $\stepsize^*:=\frac{\hk^{p+1}}{2F}$.  
	Since $F\geq \gunifBound^{p+1}\shrinkAmount_0^{p+1}$ (by construction) and $\hk^{p+1}\leq (\gunifBound\shrinkAmount_0)^{p+1}$ (by the condition to be in stage~3),
	we necessarily have that $\stepsize^*\leq 1$. We chose the value of $F$ to avoid
	having to consider the possibility $\stepsize^* > 1$ as we did in the distinction between stage~1 and stage~2.
	
	Hence, substituting $\stepsize = \stepsize^*$ in~\eqref{eqn:case_2_mod}, we get:
	$$\hnext\leq\hk-\frac{\hk^{p+2}}{8F}.$$

	Using the result of Lemma~\ref{lem:differential_bound} with $a=p+2$, $b=\max(8,p+2)$ and $C_0 = F$, we get the bound:	
	$$\hk\leq\left[\frac{\max(\Cconst,\gunifBound \shrinkAmount_0^{p+1})\gunifBound^p}{\frac{p+1}{\max(8,p+2)}(k-k_1)+1}\right]^{\frac{1}{p+1}} = O(k^{-\frac{1}{p+1}}),$$
	concluding the proof.
\end{proof}

Interestingly, the obtained rate of $O(1/\sqrt{k})$ for $p=1$ (for the TRW objective e.g.) is the standard rate that one would get for the optimization of a general non-smooth convex function with the projected subgradient method (and it is even a lower bound for some class of first-order methods; see e.g. Section~3.2 in~\citetsup{nesterov2004lectures}). The fact that our function $f$ does not have Lipschitz continuous gradient on the whole domain brings us back to the realm of non-smooth optimization. It is an open question whether Algorithm~\ref{alg:adaptive_eps} has an optimal rate for the class of functions defined in the assumptions of Theorem~\ref{thm:convergence_adaptive_eps}.

\section{Properties of the TRW Objective \label{sec:trw_properties}}
In this section, we explicitly compute bounds for the constants appearing in the convergence statements for our fixed-$\shrinkAmount$ and adaptive-$\shrinkAmount$ algorithms for the optimization problem given by:
$$\min_{\vmu\in\MARG}-\TRW.$$ 
In particular, we compute the Lipschitz constant for its gradient over $\Meps$ (Property~\ref{prop:prop_bounded_grad}), we give a form for its modulus of continuity function $\omega(\cdot)$ (used in Theorem~\ref{thm:convergence_fixed_eps_main}), and we compute $B$, the upper bound on the negative uniform gap (as used in Lemma~\ref{prop:prop_bounded_gunif}).

\subsection{Property~\ref{prop:prop_bounded_grad} : Controlled Growth of Lipschitz Constant over $\Meps$} \label{sec:trw_bounded_lip}

We first motivate our choice of norm over $\MARG$. Recall that $\vmu$ can be decomposed into $|V| + |E|$ blocks, with one pseudo-marginal vector $\bmu_i \in \Delta_{\VAL_i}$ for each node $i \in V$, and one vector $\bmu_{ij} \in \Delta_{\VAL_i \VAL_j}$ per edge $\{i,j\} \in E$, where $\Delta_d$ is the probability simplex over $d$ values.
We let $c$ be the cliques in the graph (either nodes or edges). From its definition in~\eqref{eqn:trw_entropy_like_bethe}, $f(\vmu) := -\TRW$ decomposes as a separable sum of functions of each block only:
	\begin{equation} \label{eq:TRW_decompsed}
	f(\vmu) := -\TRW = -\sum_c \left(K_c H(\bmu_c) +\innerProd{\btheta_c}{\bmu_c}\right) =: \sum_c g_c(\bmu_c),
	\end{equation}
where $K_c$ is $(1- \sum_{j \in \nbrs{i}} \rho_{ij})$ if $c = i$ and $\rho_{ij}$ if $c = \{i,j\}$. The function $g_c$ also decomposes as a separable sum: 
\begin{equation} \label{eq:gc_function}
g_c(\bmu_c) := \sum_{x_c} K_c \mu_c(x_c) \log (\mu_c(x_c)) - \theta_c(x_c) \mu_c(x_c) =: \sum_{x_c} g_{c,x_c} (\mu_c(x_c)). 
\end{equation}
As $\MARG$ is included in a product of probability simplices, we will use the natural $\ell_\infty/\ell_1$ block-norm, i.e. $\| \vmu \|_{\infty,1} := \max_{c} \|\bmu_c \|_1$. The diameter of $\MARG$ in this norm is particularly small: $\diam_{\|\cdot \|_{\infty,1}}(\MARG) \leq 2$. The dual norm of the $\ell_\infty/\ell_1$ block-norm is the $\ell_1/\ell_\infty$ block-norm, which is what we will need to measure the Lipschitz constant of the gradient (because of the dual norm pairing requirement from the Descent Lemma~\ref{lem:descent_lemma}).

\begin{lemma}
	\label{lem:trw_bounded_lip_eps}
	Consider the $\ell_\infty/\ell_1$ norm on $\MARG$ and its dual norm $\ell_1/\ell_\infty$ to measure the gradient. Then
$\nabla \TRW$ is Lipschitz continuous over $\Meps$ with respect to these norms with Lipschitz constant 
$L_{\shrinkAmount} \leq \frac{L}{\shrinkAmount}$ with:
\begin{equation} \label{eq:TRW_L_constant} 
L \leq 4 |V| \,  \max_{ij \in E} \,\, (\VAL_i\VAL_j).
\end{equation}
\end{lemma}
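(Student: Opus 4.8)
The plan is to exploit the separable structure of $f:=-\TRW$ exhibited in~\eqref{eq:TRW_decompsed} and~\eqref{eq:gc_function}. Since $f(\vmu)=\sum_c g_c(\bmu_c)$ with $g_c(\bmu_c)=\sum_{x_c} g_{c,x_c}(\mu_c(x_c))$ and $g_{c,x_c}(t)=K_c\,t\log t-\theta_c(x_c)\,t$, the Hessian of $f$ is diagonal with $(x_c,x_c)$ entry $g_{c,x_c}''(\mu_c(x_c))=K_c/\mu_c(x_c)$. The relevant dual-norm pairing is the $\ell_1/\ell_\infty$ block norm (dual to the $\ell_\infty/\ell_1$ block norm on $\MARG$), so it suffices to bound $\|\nabla f(\vmu)-\nabla f(\vmu')\|_{1,\infty}=\sum_c\|\nabla_c f(\vmu)-\nabla_c f(\vmu')\|_\infty$ in terms of $\max_c\|\bmu_c-\bmu_c'\|_1$. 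Here I write $\VAL_c:=\VAL_i$ for a node $c=i$ and $\VAL_c:=\VAL_i\VAL_j$ for an edge $c=\{i,j\}$.

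First I would lower-bound the coordinates on the contraction: since $\Meps=(1-\shrinkAmount)\MARG+\shrinkAmount\,\unif$ and the uniform vector $\unif$ has every coordinate $\mu_c(x_c)$ equal to $1/\VAL_c$, any $\vmu\in\Meps$ satisfies $\mu_c(x_c)\ge\shrinkAmount/\VAL_c$. Hence $|g_{c,x_c}''(t)|=|K_c|/t\le|K_c|\VAL_c/\shrinkAmount$ on the segment $[\shrinkAmount/\VAL_c,1]$ containing both $\mu_c(x_c)$ and $\mu_c'(x_c)$. Applying the one-dimensional mean value theorem coordinate-wise and taking the max over $x_c$ gives $\|\nabla_c f(\vmu)-\nabla_c f(\vmu')\|_\infty\le\frac{|K_c|\VAL_c}{\shrinkAmount}\|\bmu_c-\bmu_c'\|_1\le\frac{|K_c|\VAL_c}{\shrinkAmount}\max_{c'}\|\bmu_{c'}-\bmu_{c'}'\|_1$; summing over blocks yields $L_\shrinkAmount\le\frac{1}{\shrinkAmount}\sum_c|K_c|\VAL_c$, i.e. $L\le\sum_c|K_c|\VAL_c$.

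It then remains to show $\sum_c|K_c|\VAL_c=\sum_{i\in V}|K_i|\VAL_i+\sum_{\{i,j\}\in E}\rho_{ij}\VAL_i\VAL_j\le 4|V|M$ with $M:=\max_{\{i,j\}\in E}(\VAL_i\VAL_j)$. The edge terms are easy: $\VAL_i\VAL_j\le M$ and $\sum_{\{i,j\}\in E}\rho_{ij}=|V|-1$ because $\vrho\in\TREEPOL$ is a convex combination of spanning-tree indicator vectors, each with exactly $|V|-1$ edges, so the edge contribution is at most $(|V|-1)M$. For the node terms, use $\VAL_i\le\VAL_i\VAL_j\le M$ for an incident edge, and then bound $\sum_{i\in V}|K_i|=\sum_{i\in V}|1-d_i|$ where $d_i:=\sum_{j\in\nbrs{i}}\rho_{ij}$. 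Here $d_i\ge0$ and $\sum_{i\in V}d_i=2\sum_{\{i,j\}\in E}\rho_{ij}=2(|V|-1)$; splitting the sum according to $d_i\le1$ versus $d_i>1$ and discarding negative contributions gives $\sum_i|1-d_i|\le|V|+2(|V|-1)<3|V|$. Combining, $\sum_c|K_c|\VAL_c<(3|V|+|V|-1)M<4|V|M$, establishing $L\le 4|V|\max_{\{i,j\}\in E}(\VAL_i\VAL_j)$.

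I expect the main obstacle to be this last bound: the node-entropy coefficients $K_i=1-\sum_{j\in\nbrs{i}}\rho_{ij}$ are \emph{not} uniformly bounded (a node can have large expected degree under the spanning-tree distribution, so $|1-d_i|$ can be of order $|V|$), and a naive estimate would give $\sum_i|K_i|=O(|V|^2)$, spoiling the linear-in-$|V|$ rate. The key structural fact that rescues it is the handshake identity $\sum_i d_i=2(|V|-1)$ coming from $\vrho\in\TREEPOL$, together with nonnegativity of each $d_i$. Everything else — separability, the coordinate-wise second-derivative bound, and the block-norm bookkeeping — is routine, and the constant $4$ could likely be sharpened with a little more care in these steps.
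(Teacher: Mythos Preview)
Your proof is correct and follows essentially the same route as the paper: exploit the separable structure, bound each scalar second derivative $|K_c|/\mu_c(x_c)$ by $|K_c|\VAL_c/\shrinkAmount$ on $\Meps$, pass to the $\ell_1/\ell_\infty$ block norm, and control $\sum_c|K_c|$ via the spanning-tree identity $\sum_{\{i,j\}\in E}\rho_{ij}=|V|-1$. The only cosmetic differences are that the paper uses the triangle inequality $|1-d_i|\le 1+d_i$ directly (yielding the same $3|V|$ bound as your case split), and pulls out $\max_c n_c$ before summing $|K_c|$ rather than bounding node and edge blocks by $M$ separately.
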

\begin{proof}
	We first consider one scalar component of the separable $g_c(\bmu_c)$ function given in~\eqref{eq:gc_function} (i.e. for one $\mu_c(x_c)$ coordinate). Its derivative is $K_c (1 + \log (\mu_c(x_c)) - \theta_c(x_c)$ with second derivative $\frac{K_c}{ \mu_c(x_c)}$. If $\vmu \in \Meps$, then we have $\mu_c(x_c) \geq \shrinkAmount u_0(x_c) = \frac{\shrinkAmount}{n_c}$, where $n_c$ is the number of possible values that the assignment variable $x_c$ can take. Thus for $\vmu \in \Meps$, we have that the $x_c$-component of $g_c$ is Lipschitz continuous with constant $|K_c| n_c / \shrinkAmount$. We thus have:
	\begin{align*}
	\| \nabla g_c(\bmu_c) - \nabla g_c(\bmu'_c) \|_\infty &= \max_{x_c} \,\, | g'_{c, x_c} (\mu(x_c)) - g'_{c, x_c} (\mu'(x_c)) | \\
	&\leq \frac{|K_c| n_c}{\shrinkAmount} \|\bmu_c - \bmu_c' \|_\infty \leq \frac{|K_c| n_c}{\shrinkAmount} \|\bmu_c - \bmu_c' \|_1.
	\end{align*}
	Considering now the $\ell_1$-sum over blocks, we have:
	\begin{align*}
	\|\nabla f(\vmu) - \nabla f(\vmu') \|_{1,\infty} &= \sum_c \| \nabla g_c(\bmu_c) - \nabla g_c(\bmu'_c) \|_\infty  \\  
	&\leq \sum_c \frac{K_c n_c}{\shrinkAmount} \|\bmu_c - \bmu_c' \|_1 \leq \frac{1}{\shrinkAmount} \left(\sum_c K_c n_c \right) \| \vmu - \vmu' \|_{\infty, 1}.
	\end{align*}
	The Lipschitz constant is thus indeed $\frac{L}{\shrinkAmount}$ with $L := \sum_c |K_c| n_c$.
	Let us first consider the sum for $c \in V$; we have $K_i = 1 - \sum_{j \in \nbrs{i}} \rho_{ij}$. 
	Thus:
	\begin{align*}
	\sum_i |K_i| &\leq |V| + \sum_i \sum_{j \in \nbrs{i}} \rho_{ij} \\
				&= |V| + 2 \sum_{ij \in E} \rho_{ij} = |V| + 2 (|V| - 1) \leq 3 |V|. 	
	\end{align*}
	Here we used the fact that $\rho_{ij}$ came from the marginal probability of edges of spanning trees (and so with $|V|-1$ edges). Similarly, we have $\sum_{ij \in E} |K_{ij}| \leq |V|$. Combining these we get:
	\begin{align} \label{eq:Kc_bound}
	L = \sum_c |K_c| n_c \leq (\max_c n_c) \sum_c |K_c| \leq \max_{ij \in E} \VAL_i \VAL_j 4 |V|.
	\end{align}
\end{proof}

\begin{remark}
The important quantity in the convergence of Frank-Wolfe type algorithms is $\tilde{C} = L \diam(\MARG)^2$. We are free to take any dual norm pairs to compute this quantity, but some norms are better aligned with the problem than others. Our choice of norm in Lemma~\ref{lem:trw_bounded_lip_eps} gives $\tilde{C} \leq 16 |V| k^2$ where $k$ is the maximum number of possible values a random variable can take. It is interesting that $|E|$ does not appear in the constant. If instead we had used the $\ell_2/\ell_1$ block-norm on $\MARG$, we get that $\diam_{\ell_2/\ell_1}(\MARG)^2 = 4 (|V| + |E|)$, while the constant $L$ with dual norm $\ell_2/\ell_\infty$ would be instead $\max_c |K_c| n_c$ which is bigger than $\max_c n_c = k^2$, thus giving a worse bound. 
\end{remark}

\subsection{Modulus of Continuity Function} \label{sec:trw_weak_lip}
We begin by computing a modulus of continuity function for $-x \log x$ with an additive linear term.

\begin{lemma}
  \label{lem:entropy_lipschitz} Let $g(x) := -Kx\log x + \theta x$. Consider $x, x' \in [0,1]$ such that $|x-x'| \leq\sigma$, then:
  \begin{equation} \label{eqn:modulus_g}
  |g(x')-g(x)|\leq \sigma|\theta| + 2 \sigma |K| \max\{-\log(2\sigma),1\} =: \omega_g(\sigma).
  \end{equation}
\end{lemma}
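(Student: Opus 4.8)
The plan is to decompose $g(x) = K\varphi(x) + \theta x$ with $\varphi(x) := -x\log x$, so that the linear part is handled at once: $|\theta x' - \theta x| \le |\theta|\,|x'-x| \le \sigma|\theta|$, which already supplies the first summand of $\omega_g(\sigma)$. By the triangle inequality it then remains to show $|\varphi(x') - \varphi(x)| \le M(\sigma)$, where $M(\sigma) := 2\sigma\max\{-\log(2\sigma),1\}$, for all $x,x'\in[0,1]$ with $|x-x'|\le\sigma$; multiplying by $|K|$ then accounts for the second summand. I would assume without loss of generality $x \le x'$ and set $d := x'-x \le \sigma$.

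I would first dispose of the regime $\sigma \ge \frac{1}{2e}$, where the bound is essentially free: on $[0,1]$ the function $\varphi$ ranges between $0$ (at the endpoints) and its maximum $1/e$ (at $t=1/e$), so $|\varphi(x')-\varphi(x)| \le \frac{1}{e} \le 2\sigma \le M(\sigma)$. This leaves the interesting case $\sigma < \frac{1}{2e}$, in which $2\sigma < 1/e$, hence $-\log(2\sigma) > 1$ and $M(\sigma) = -2\sigma\log(2\sigma)$; I would also record that $\varphi'(t) = -\log t - 1 \ge 0$ on $[0,1/e] \supseteq [0,2\sigma]$, so $\varphi$ is nondecreasing there.

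The core of the argument is a two-case split according to where $x'$ sits relative to $2\sigma$. If $x' \le 2\sigma$, both points lie in the nondecreasing region, so $|\varphi(x')-\varphi(x)| = \varphi(x')-\varphi(x) \le \varphi(2\sigma)-\varphi(0) = -2\sigma\log(2\sigma) = M(\sigma)$; here I control $\varphi$ near the boundary by its own (small) size rather than by its (unbounded) derivative. If $x' > 2\sigma$, then $x = x'-d \ge x'-\sigma > \sigma$, so $[x,x'] \subseteq (\sigma,1]$, where $|\varphi'(t)| = |\log t + 1| \le \max\{-\log\sigma - 1,\,1\} \le -\log\sigma$ (using $\sigma<1/e$); the fundamental theorem of calculus then gives $|\varphi(x')-\varphi(x)| \le d(-\log\sigma) \le \sigma(-\log\sigma) = \sigma\bigl(-\log(2\sigma)+\log 2\bigr) \le 2\sigma\bigl(-\log(2\sigma)\bigr) = M(\sigma)$, the last inequality using $\log 2 \le -\log(2\sigma)$, i.e. $\sigma \le \frac14$, which holds since $\sigma < \frac{1}{2e}$.

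The main obstacle is purely one of constants and case management: because $\varphi$ fails to be Lipschitz at $0$, the estimate must be split into a ``boundary'' piece (bounded by the magnitude of $\varphi$ itself) and an ``interior'' piece (bounded by an integral of $\varphi'$, which grows only logarithmically in the distance to $0$), and one must check that stitching the two together — together with the harmless slack between $-\log\sigma$ and $-\log(2\sigma)$ — still fits under the factor $2$ in $M(\sigma)$. The appearance of $\max\{\cdot,1\}$ is explained by the threshold $\sigma = \frac{1}{2e}$, which is precisely where $-\log(2\sigma)=1$ and beyond which the elementary bound $|\varphi(x')-\varphi(x)| \le 1/e$ already suffices.
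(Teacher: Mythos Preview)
Your proof is correct and follows essentially the same strategy as the paper: separate the linear term, then split the entropy increment $|\varphi(x')-\varphi(x)|$ into a boundary regime (bounded by the value $\varphi(2\sigma)$ itself) and an interior regime (bounded via the derivative, which is at most of order $-\log\sigma$), with the large-$\sigma$ case handled by the trivial bound $1/e$. The paper splits on $x>\sigma$ versus $x\le\sigma$ rather than on $x'\le 2\sigma$ versus $x'>2\sigma$, but the two are equivalent up to relabelling; if anything, your write-up is slightly more careful about verifying that the Case~i (interior) bound actually fits under $2\sigma\max\{-\log(2\sigma),1\}$, which the paper leaves implicit.
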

\begin{proof}
	Without loss of generality assume $x'>x$, then we have two cases:

	\textbf{Case i.} 
	If $x>\sigma$, then we have that the Lipschitz constant of $g(x)$ is $L_{\sigma} = |\theta| + |K||(1+\log\sigma)|$ (obtained by taking 
	the supremum of its derivative). Therefore, we have that $|g(x')-g(x)|\leq L_{\sigma}\sigma$. Note that $L_\sigma\sigma\to 0$ when $\sigma\to 0$ even if
	$L_\sigma\to\infty$, since $L_\sigma$ grows logarithmically.

	\textbf{Case ii.}
	If $x\leq\sigma$, then $x'\leq x + \sigma \leq 2\sigma$. Therefore: 
	\begin{equation} \label{eqn:g_inequal}
	|g(x')-g(x)| \leq |K||x\log x-x'\log x'| + |\theta||x'-x|.
	\end{equation}
	Now, we have that $-x\log x$ is non-negative for $x\in [0,1]$. Furthermore, we have that 
	$-x\log x$ is increasing when $x<\exp(-1)$ and decreasing afterwards.
	First suppose that $2 \sigma \leq \exp(-1)$; then $-x'\log x' \geq -x\log x \geq 0$ which implies:
	$$
	|x\log x-x'\log x'| \leq -x' \log x' \leq - 2\sigma \log (2\sigma).
	$$
	In the case $2 \sigma > \exp(-1)$, then we have:
	$$ |x\log x-x'\log x'| \leq \max_{y \in [0,1]} \{-y \log y\} = \exp(-1) \leq 2 \sigma.
	$$
	Combining these two possibilities, we get:
	$$
	 |x\log x-x'\log x'| \leq 2\sigma \max\{- \log (2\sigma), 1\}.
	$$
	The inequality~\eqref{eqn:g_inequal} thus becomes:
	$$|g(x')-g(x)| \leq |K|2 \sigma  \max\{- \log (2\sigma), 1\} + |\theta|\sigma,$$
	which is what we wanted to prove.
\end{proof}
For small $\sigma$, the dominant term of the function $\omega_g(\sigma)$ in Lemma~\ref{lem:entropy_lipschitz} is of the 
form $C\cdot-\sigma\log\sigma$ for a constant~$C$. If we require that this be smaller than some small $\xi > 0$, then we can choose an approximate 
$\sigma$ by solving for $x$ in $-Ax\log x = \xi$ yielding $x = \exp( W_{-1} \frac{\xi}{A})$ where $W_{-1}$ is the negative 
branch of the Lambert W-function. This is almost linear and yields approximately $x = O(\xi)$ for small $\xi$. In fact, we have that $\omega_g(\sigma) \leq C' \sigma^\alpha$ for any $\alpha < 1$, and thus $g$ is ``almost'' Lipschitz continuous.

\begin{lemma}
  \label{lem:trw_obj_weak_lipschitz}
  The following function is a modulus of continuity function for the $\TRW$ objective over $\MARG$ with respect to the $\ell_\infty$ norm:
  \begin{equation}
   \omega(\sigma) := \sigma\| \theta \|_1 + 2 \sigma \tilde{K} \max\{-\log(2\sigma),1\},
   \end{equation}
   where $\tilde{K} := 4 |V| \max_{ij \in E} \VAL_i \VAL_j$.
   
   That is, for $\vmu, \vmu' \in \MARG$ with $\| \vmu' - \vmu \|_\infty \leq \sigma$, we have:
   $$ | \text{TRW}(\vmu;\vtheta,\vrho) - \text{TRW}(\vmu';\vtheta,\vrho) | \leq \omega(\sigma) .$$
\end{lemma}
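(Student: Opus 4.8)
The plan is to exploit the separable structure of the TRW objective already established in Section~\ref{sec:trw_bounded_lip} and to apply Lemma~\ref{lem:entropy_lipschitz} coordinate by coordinate. Recall from~\eqref{eq:TRW_decompsed} and~\eqref{eq:gc_function} that $-\text{TRW}(\vmu;\vtheta,\vrho) = \sum_c \sum_{x_c} g_{c,x_c}(\mu_c(x_c))$, where each scalar summand is $g_{c,x_c}(t) = K_c\, t\log t - \theta_c(x_c)\, t$ for $t \in [0,1]$. Up to an overall sign this is exactly the function treated in Lemma~\ref{lem:entropy_lipschitz} with the identification $K \leftarrow K_c$ and $\theta \leftarrow \theta_c(x_c)$, so that lemma gives, for any $t,t' \in [0,1]$ with $|t-t'| \le \sigma$,
$$|g_{c,x_c}(t') - g_{c,x_c}(t)| \le \sigma|\theta_c(x_c)| + 2\sigma|K_c|\max\{-\log(2\sigma),1\}.$$

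First I would observe that if $\vmu,\vmu' \in \MARG$ with $\|\vmu'-\vmu\|_\infty \le \sigma$, then every coordinate $\mu_c(x_c)$ lies in $[0,1]$ and satisfies $|\mu'_c(x_c) - \mu_c(x_c)| \le \sigma$, so the displayed scalar bound applies to each pair $(c,x_c)$. Summing over all node/edge cliques $c$ and all joint assignments $x_c$, and using the triangle inequality since $-\text{TRW}$ is the corresponding separable sum,
$$|\text{TRW}(\vmu;\vtheta,\vrho) - \text{TRW}(\vmu';\vtheta,\vrho)| \le \sigma \sum_c \sum_{x_c} |\theta_c(x_c)| + 2\sigma \max\{-\log(2\sigma),1\} \sum_c \sum_{x_c} |K_c|.$$
The first double sum is by definition $\|\theta\|_1$. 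Since $K_c$ does not depend on $x_c$ and there are $n_c$ values of $x_c$, the second double sum equals $\sum_c n_c |K_c|$, so that $\max\{-\log(2\sigma),1\}$ cleanly factors out as a common multiplier.

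It then remains to bound $\sum_c n_c|K_c| \le \tilde{K} := 4|V| \max_{ij\in E}\VAL_i\VAL_j$, which is precisely the combinatorial estimate already carried out in~\eqref{eq:Kc_bound} in the proof of Lemma~\ref{lem:trw_bounded_lip_eps}: split the sum into node terms, where $K_i = 1 - \sum_{j\in\nbrs{i}}\rho_{ij}$ and $\sum_{ij\in E}\rho_{ij} = |V|-1$ gives $\sum_i |K_i| \le 3|V|$, and edge terms, where $\sum_{ij\in E}|K_{ij}| \le |V|$, then pull out $\max_c n_c = \max_{ij\in E}\VAL_i\VAL_j$. Substituting back yields $\omega(\sigma) = \sigma\|\theta\|_1 + 2\sigma\tilde{K}\max\{-\log(2\sigma),1\}$, and $\omega(\sigma)\downarrow 0$ as $\sigma\downarrow 0$ because $\sigma\log\sigma \to 0$. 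The only non-routine ingredients are the scalar modulus-of-continuity bound (Lemma~\ref{lem:entropy_lipschitz}) and the bound on $\sum_c n_c|K_c|$, both of which are already in hand; I expect the main (mild) obstacle to be bookkeeping the signs in the scalar reduction and checking that the $\max\{-\log(2\sigma),1\}$ term, being identical across all coordinates, can be pulled out of the sum — after which the proof is pure assembly.
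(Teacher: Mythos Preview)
Your proposal is correct and follows essentially the same route as the paper: decompose $\TRW$ into the separable scalar terms of~\eqref{eq:TRW_decompsed}--\eqref{eq:gc_function}, apply Lemma~\ref{lem:entropy_lipschitz} coordinatewise, sum, and then invoke the bound $\sum_c n_c|K_c| \le 4|V|\max_{ij\in E}\VAL_i\VAL_j$ from~\eqref{eq:Kc_bound}. The sign bookkeeping you flag is indeed harmless since the scalar bound depends only on $|K|$ and $|\theta|$.
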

\begin{proof}
$\TRW$ can be decomposed into 
functions of the form $-Kx\log x + \theta x$ (see~\eqref{eq:TRW_decompsed} and~\eqref{eq:gc_function}) and so we apply the Lemma~\ref{lem:entropy_lipschitz} element-wise. Let $c$ index the clique component in the marginal vector. 
\begin{align*}
	| \text{TRW}(\vmu;\vtheta,\vrho) - \text{TRW}(\vmu';\vtheta,\vrho) | &= 
	\sum_c \sum_{x_c} | g_{c,x_c}(\mu_c(x_c)) - g_{c,x_c}(\mu'_c(x_c))| \\
	&\algComment{Using Lemma \ref{lem:entropy_lipschitz} and $\| \vmu' - \vmu \|_\infty \leq \sigma$}\\
	&\leq \sum_c \sum_{x_c} (|K_c|2 \sigma  \max\{- \log (2\sigma), 1\} + |\theta(x_c)|\sigma )\\
	&= 2\sigma  \max\{- \log (2\sigma), 1\} \sum_c |K_c|n_c +\|\theta\|_1\sigma,
\end{align*}
where we recall $n_c$ is the number of values that $x_c$ can take. By re-using the bound on $\sum_c |K_c| n_c$ from~\eqref{eq:Kc_bound}, we get the result.
\end{proof}

\subsection{Bounded Negative Uniform Gap} \label{sec:trw_bounded_unif}

\begin{lemma}[Bound for the negative uniform gap of TRW objective]
	\label{lem:bounded_gu0}
For the negative TRW objective $f(\vmu) := -\TRW$, the bound $B$ on the negative uniform gap as given in Lemma~\ref{prop:prop_bounded_gunif} for $\unif$ being the uniform distribution can be taken as:
\begin{equation} \label{eq:uniformBoundTRW}
	B =  2\sum_{c} \max_{x_c} |\theta_c(x_c)| =: 2 \|\vtheta \|_{1, \infty}
\end{equation}
\end{lemma}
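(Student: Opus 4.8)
The plan is to reuse the mechanism behind Lemma~\ref{prop:prop_bounded_gunif}, but to extract a tighter constant than the generic $\| \nabla f(\unif) \|_* \diam(\DOM)$ by exploiting the simplex structure of the blocks of $\MARG$. First, by the monotonicity of the gradient of the convex function $f := -\TRW$ (exactly the argument used in the proof of Lemma~\ref{prop:prop_bounded_gunif}, extended to the boundary of $\MARG$ by taking limits), we have for every $\vmu \in \MARG$:
$$-\gunifNox(\vmu) = \innerProd{\nabla f(\vmu)}{\unif - \vmu} \le \innerProd{\nabla f(\unif)}{\unif - \vmu},$$
so it remains to bound the right-hand side. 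Using the separable decomposition~\eqref{eq:TRW_decompsed}--\eqref{eq:gc_function}, the partial derivative of $f$ with respect to the coordinate $\mu_c(x_c)$ is $K_c\bigl(1 + \log \mu_c(x_c)\bigr) - \theta_c(x_c)$; evaluated at $\unif$, where $\mu_c(x_c) = 1/n_c$ with $n_c$ the number of values $x_c$ can take, this equals $K_c(1 - \log n_c) - \theta_c(x_c)$.

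Next I would carry out the block-wise expansion
$$\innerProd{\nabla f(\unif)}{\unif - \vmu} = \sum_c \sum_{x_c} \Bigl(K_c(1-\log n_c) - \theta_c(x_c)\Bigr)\bigl(u_0(x_c) - \mu_c(x_c)\bigr),$$
and observe the key cancellation: inside each block $c$ the scalar $K_c(1 - \log n_c)$ does not depend on $x_c$, while $\sum_{x_c}\bigl(u_0(x_c) - \mu_c(x_c)\bigr) = 1 - 1 = 0$ because both $\unif_c$ and $\bmu_c$ lie in the probability simplex $\Delta_{n_c}$. Hence the entropy-induced terms vanish and the expression collapses to $\sum_c \innerProd{-\btheta_c}{\unif_c - \bmu_c} = \innerProd{\vtheta}{\vmu - \unif}$. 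Applying the generalized Cauchy--Schwarz (Hölder) inequality for the dual pair $\ell_1/\ell_\infty$ and $\ell_\infty/\ell_1$, together with $\diam_{\|\cdot\|_{\infty,1}}(\MARG) \le 2$, then gives
$$\innerProd{\vtheta}{\vmu - \unif} \le \|\vtheta\|_{1,\infty}\,\|\vmu - \unif\|_{\infty,1} \le 2\,\|\vtheta\|_{1,\infty} = 2\sum_c \max_{x_c} |\theta_c(x_c)|,$$
which is the asserted value of $B$.

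The only mildly delicate point --- and the reason this is not an immediate corollary of Lemma~\ref{prop:prop_bounded_gunif} --- is that the generic bound $B = \| \nabla f(\unif) \|_* \diam(\DOM)$ would retain the $K_c(1 - \log n_c)$ contributions coming from the entropy part of the gradient, producing a larger constant that depends on the TRW entropy coefficients. The remedy is precisely the simplex cancellation above: paired against the difference of two probability distributions, any part of the gradient that is constant within a block contributes nothing, so only the linear potentials $\vtheta$ survive. The remaining ingredients (the gradient computation, the Hölder estimate, and the boundary-limit step) are routine.
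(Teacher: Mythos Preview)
Your proposal is correct and follows essentially the same route as the paper: reduce via the monotonicity step of Lemma~\ref{prop:prop_bounded_gunif} to $\innerProd{\nabla f(\unif)}{\unif-\vmu}$, observe that the entropy contribution drops out, and bound the remaining $\innerProd{\vtheta}{\vmu-\unif}$ by $\|\vtheta\|_{1,\infty}\diam_{\|\cdot\|_{\infty,1}}(\MARG)\le 2\|\vtheta\|_{1,\infty}$.

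The only difference is how the entropy term is disposed of. The paper argues conceptually: each clique entropy $H(\bmu_c)$ is maximized at the uniform distribution, so $\unif$ is a stationary point of the TRW entropy and its gradient ``is zero''. You instead compute $\nabla f(\unif)$ explicitly, obtain the per-block constant $K_c(1-\log n_c)$ from the entropy part, and show it cancels against the zero-sum vector $\unif_c-\bmu_c$. These are the same fact stated two ways; in particular, your remark that the generic bound $\|\nabla f(\unif)\|_*\diam(\DOM)$ would otherwise retain the $K_c(1-\log n_c)$ terms is slightly off: the paper \emph{does} invoke that generic bound, but reads ``zero gradient'' as zero on the tangent space of the product of simplices (first-order optimality of $H(\bmu_c)$ at uniform), which is exactly your block-wise cancellation. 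Your version has the virtue of making this point explicit rather than leaving it implicit in the word ``stationary''.
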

\begin{proof}
	From Lemma~\ref{prop:prop_bounded_gunif}, we want to bound 
	$\| \nabla f(\unif) \|_* = \| \vtheta + \nabla_{\vmu} H(\unif;\vrho)) \|_*$. The clique entropy terms $H(\bmu_c)$ are maximized by the uniform distribution, and thus $\unif$ is a stationary point of the TRW entropy function with zero gradient. We can thus simply take $B = \|\vtheta\|_* \diam_{\| \cdot \|} (\MARG)$. By taking the $\ell_\infty / \ell_1$ norm on $\MARG$, we get a diameter of $2$, giving the given bound.
\end{proof}

\subsection{Summary}

We now give the details of suboptimality guarantees for our suggested algorithm to optimize $f(\vmu) := -\TRW$ over $\MARG$.
The (strong) convexity of the negative TRW objective is shown in \citep{wainwright2005new,london_icml15}. 
$\MARG$ is the convex hull of a finite number of vectors representing assignments to random variables and therefore a compact convex set. The entropy function is continously differentiable on the relative interior of the probability simplex, and thus the TRW objective has the same property on the relative interior of $\MARG$. Thus $-\TRW$ satisfies the properties laid out in Problem~\ref{prop:generic_fxn}.

\begin{lemma}[Suboptimality bound for optimizing $-\TRW$ with the fixed-$\shrinkAmount$ algorithm]
  \label{thm:convergence_fixed_eps_trw}
  For the optimization of $-\TRW$ over $\Meps$ with $\shrinkAmount\in(0,1]$, the suboptimality is bounded as: 
  
	\begin{equation} 
		\text{TRW}(\vmu^{*};\vtheta,\vrho) - \text{TRW}(\vmu^{(k)};\vtheta,\vrho) \leq \frac{2\mathcal{C}_{\shrinkAmount}}{(k+2)}+\modContinuity \left( 2\shrinkAmount \right), 
	\end{equation}
	 with $\vmu^*$ the optimizer of $\TRW$ in $\MARG$, 
	 where $C_{\shrinkAmount} \leq 16 \frac{|V|\max_{(ij)\in E}\VAL_i\VAL_j}{\shrinkAmount}$, 
	 and $\modContinuity(\sigma) = \sigma\|\vtheta\|_{1}+2\sigma \tilde{K}\max\{-\log (2\sigma),1\}$, where $\tilde{K} := 4 |V| \max_{ij \in E} \VAL_i \VAL_j$.
\end{lemma}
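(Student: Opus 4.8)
The plan is to derive this lemma as a direct corollary of Theorem~\ref{thm:convergence_fixed_eps} applied to $f := -\TRW$ on the domain $\DOM := \MARG$, so the work splits into two routine parts: checking the hypotheses of that theorem, and substituting the explicit constants already computed in Lemmas~\ref{lem:trw_bounded_lip_eps} and~\ref{lem:trw_obj_weak_lipschitz}. I would begin with the sign bookkeeping: since $\TRW$ is concave, the minimizer $\xopt$ of $f$ over $\MARG$ coincides with the maximizer $\vmu^*$ of $\TRW$, and $f(\xk) - f(\xopt) = \text{TRW}(\vmu^*;\vtheta,\vrho) - \text{TRW}(\vmu^{(k)};\vtheta,\vrho)$, which is precisely the left-hand side of the claimed inequality.

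Next I would verify the hypotheses. For Problem~\ref{prop:generic_fxn}: $\MARG$ is the convex hull of finitely many integral vertices, hence compact and convex; $-\TRW$ is convex (by \citep{wainwright2005new,london_icml15}, in fact strongly convex when every $\rho_{ij}>0$) and continuously differentiable on the relative interior of $\MARG$, since each entropy block $H(\bmu_c)$ is. For Property~\ref{prop:prop_bounded_grad}: Lemma~\ref{lem:trw_bounded_lip_eps} supplies it with exponent $p=1$, namely $\nabla(-\TRW)$ has Lipschitz constant $L_\shrinkAmount \le L/\shrinkAmount$ over $\Meps$ with $L \le 4|V|\max_{ij\in E}(\VAL_i\VAL_j)$, where the gradient is measured in the $\ell_1/\ell_\infty$ block-norm dual to the $\ell_\infty/\ell_1$ norm on $\MARG$. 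For the remaining hypothesis of Theorem~\ref{thm:convergence_fixed_eps}, I would note that each scalar term $-K_c x\log x + \theta_c x$ extends continuously (with value $0$) to $x=0$, so $-\TRW$ is finite --- indeed continuous --- on all of $\MARG$, including its boundary.

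Then I would substitute. Theorem~\ref{thm:convergence_fixed_eps} yields $f(\xk)-f(\xopt) \le \frac{2C_\shrinkAmount}{k+2} + \modContinuity(\shrinkAmount\diam(\MARG))$ with $C_\shrinkAmount \le \diam(\Meps)^2 L_\shrinkAmount$, and, as that theorem notes, the norm for the curvature term need not match the one used for $\modContinuity$. For the curvature term I would use the $\ell_\infty/\ell_1$ block-norm, in which $\diam(\Meps) \le \diam(\MARG) \le 2$ (the first inequality because $\Meps \subseteq \MARG$), giving $C_\shrinkAmount \le 4L_\shrinkAmount \le 4L/\shrinkAmount \le 16\,|V|\max_{ij\in E}(\VAL_i\VAL_j)/\shrinkAmount$. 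For the continuity term I would invoke Lemma~\ref{lem:trw_obj_weak_lipschitz}, which gives the closed-form modulus $\modContinuity$ with respect to the $\ell_\infty$ norm; since $\|\xopt-\unif\|_\infty \le 1 \le 2$, the argument $\shrinkAmount\diam(\MARG)$ appearing inside the proof of Theorem~\ref{thm:convergence_fixed_eps} is at most $2\shrinkAmount$, and monotonicity of $\modContinuity$ bounds the second term by $\modContinuity(2\shrinkAmount)$. Assembling the two pieces gives the claim.

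There is no genuinely hard step here: everything substantive --- the Frank-Wolfe convergence bound, the controlled growth of the Lipschitz constant, and the modulus-of-continuity estimate for $-x\log x$ --- has been established earlier. The only point requiring care is keeping the choice of norms consistent: the dual-norm pairing forced by the Descent Lemma~\ref{lem:descent_lemma} means the curvature constant $\Cconst$ must be measured with a block-norm and its dual, and it is exactly this choice (the $\ell_\infty/\ell_1$ block-norm) that keeps the diameter equal to $2$ and, pleasantly, removes any dependence on $|E|$ from the constant.
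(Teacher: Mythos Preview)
Your proposal is correct and follows exactly the same route as the paper's proof: apply Theorem~\ref{thm:convergence_fixed_eps} to $f=-\TRW$, plug in the Lipschitz-growth constant from Lemma~\ref{lem:trw_bounded_lip_eps} (with the $\ell_\infty/\ell_1$ block-norm giving $\diam(\MARG)\le 2$) to bound $C_\shrinkAmount$, and invoke the modulus of continuity from Lemma~\ref{lem:trw_obj_weak_lipschitz} for the second term. The paper's version is terser, but you have simply made the hypothesis-checking and norm bookkeeping explicit.
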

\begin{proof}
	Using $\diam_{\|\cdot \|_{\infty,1}}(\MARG) \leq 2$, and $L_\shrinkAmount$ from Lemma \ref{lem:trw_bounded_lip_eps},
	we can compute $C_{\shrinkAmount}\leq \diam(\MARG)^2L_{\shrinkAmount}$. 
	Lemma 
\ref{lem:trw_obj_weak_lipschitz} computes the modulus of continuity $\omega(\sigma)$.
The rate then follows directly from Theorem \ref{thm:convergence_fixed_eps}. 
\end{proof}

\begin{lemma}[Global convergence rate for optimizing $-\TRW$ with the adaptive-$\shrinkAmount$ algorithm]
	\label{thm:convergence_adaptive_eps_trw}
	Consider the optimization of $-\TRW$ over $\MARG$ with the optimum given by $\vmu^*$.
	The iterates~$\vmu^{(k)}$ obtained by running the Frank-Wolfe updates over $\Meps$ using line-search with $\shrinkAmount$ updated according to Algorithm~\ref{alg:adaptive_update} (or as summarized in a FCFW variant in Algorithm~\ref{alg:adaptive_eps}), have suboptimality~$h_k = \text{TRW}(\vmu^{*};\vtheta,\vrho) - \text{TRW}(\vmu^{(k)};\vtheta,\vrho)$ upper bounded as:
	\begin{enumerate}
		\item $\hk\leq \left(\frac{1}{2}\right)^{k}\h_{0} +\frac{\Cconst}{\shrinkAmount_0}$ for $k$ such that $\hk \geq \max\{ B\shrinkAmount_0, \frac{2\Cconst}{\shrinkAmount_0}\}$, \\
		\item $\hk\leq\frac{2\Cconst}{\shrinkAmount_0}\left[\frac{1}{\frac{1}{4} (k-k_0)+1}\right]$ for $k$ such that $\gunifBound\shrinkAmount_0\leq\hk\leq\frac{2\Cconst}{\shrinkAmount_0}$,\\
		\item $\hk\leq \left[\frac{\max(\Cconst,\gunifBound\shrinkAmount_0^{2})\gunifBound}{\frac{1}{4}(k-k_1)+1}\right]^{\frac{1}{2}} = O(k^{-\frac{1}{2}})$ for $k$ such that $\hk\leq\gunifBound\shrinkAmount_0$,\\
	\end{enumerate}
	where 

	\begin{itemize}

		\item $\shrinkAmount_0 = \epsk{\mathrm{init}} \leq \frac{1}{4}$ \\

		\item $\Cconst := 16|V|\max_{(ij)\in E}(\VAL_i\VAL_j) $\\
	
		\item $B = 16 \|\vtheta \|_{1, \infty}$
	
		\item $h_0$ is the initial suboptimality\\

		\item	$k_0$ and $k_1$ are the number of steps to reach stage~2 and~3 respectively which are bounded as: $k_0\leq \max(0,  \lceil\log_{\frac{1}{2}}\frac{\Cconst}{ h_0 \shrinkAmount_0}\rceil )$ 
		$k_0 \leq k_1\leq k_0 + \max\left(0,\lceil \frac{8\Cconst}{\gunifBound\shrinkAmount_0^{2}}\rceil -4\right)$\\
	\end{itemize}
\end{lemma}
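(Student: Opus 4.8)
The plan is to derive this lemma as a direct instantiation of the general convergence result Theorem~\ref{thm:convergence_adaptive_eps}, applied to $f(\vmu) := -\TRW$ and $\DOM := \MARG$, simply reading off the problem-specific constants from the lemmas already established in this section. No new analysis is required; the work is entirely constant-tracking.

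First I would verify the structural hypotheses. As noted in the Summary paragraph, $\MARG$ is a compact convex set (the convex hull of finitely many assignment vectors) and the TRW entropy, hence $-\TRW$, is continuously differentiable on $\relint(\MARG)$ and convex (indeed strongly convex when all $\rho_{ij} > 0$, by \citep{wainwright2005new,london_icml15}); thus Problem~\ref{prop:generic_fxn} holds. Property~\ref{prop:prop_bounded_grad} holds with $p = 1$: Lemma~\ref{lem:trw_bounded_lip_eps} shows that, under the $\ell_\infty/\ell_1$ norm on $\MARG$ with dual $\ell_1/\ell_\infty$ norm on the gradient, $\nabla\TRW$ has Lipschitz constant $L_\shrinkAmount \leq L/\shrinkAmount$ over $\Meps$ with $L \leq 4|V|\max_{ij\in E}(\VAL_i\VAL_j)$. (The modulus of continuity $\modContinuity$ does not enter this lemma, unlike the fixed-$\shrinkAmount$ case, so Lemma~\ref{lem:trw_obj_weak_lipschitz} is not needed here.)

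Next I would compute the two constants appearing in Theorem~\ref{thm:convergence_adaptive_eps}. Using $\diam_{\|\cdot\|_{\infty,1}}(\MARG) \leq 2$ gives $\Cconst = L\,\diam_{\|\cdot\|}(\MARG)^2 \leq 16\,|V|\max_{ij\in E}(\VAL_i\VAL_j)$, the stated value of $\Cconst$. For the uniform-gap bound, Lemma~\ref{lem:bounded_gu0} gives $-\gunifNox(\vmu) \leq 2\|\vtheta\|_{1,\infty}$ for all $\vmu\in\MARG$; since Theorem~\ref{thm:convergence_adaptive_eps} and Lemma~\ref{prop:eps_k_properties} are phrased with the convention $-8\gunifNox(\x) \leq \gunifBound$, we set $\gunifBound = 16\|\vtheta\|_{1,\infty}$, matching the stated $B$.

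Finally I would substitute $p = 1$ into the three-stage bound, noting that the FCFW variant in Algorithm~\ref{alg:adaptive_eps} makes at least as much progress per step as the plain FW line-search update, so the theorem applies verbatim to Algorithm~\ref{alg:algInfadaptive}. With $p = 1$ we get $p+2 = 3$, so $\max(8,p+2) = 8$ and $\tfrac{p+1}{\max(8,p+2)} = \tfrac14$; plugging these in yields exactly the stated stage-1, stage-2, and stage-3 bounds, the $O(k^{-1/2})$ rate, and the claimed bounds on $k_0$ and $k_1$ (with $\shrinkAmount_0 = \epsk{\mathrm{init}} \leq \tfrac14$). The only place requiring care — and the closest thing to an ``obstacle'' — is the bookkeeping of the factor $8$ in passing from the raw bound $-\gunifNox \leq 2\|\vtheta\|_{1,\infty}$ of Lemma~\ref{lem:bounded_gu0} to the quantity $\gunifBound$ used in Theorem~\ref{thm:convergence_adaptive_eps}, together with checking that $\max(8,p+2)$ collapses to $8$ at $p=1$; once these are handled the lemma follows immediately.
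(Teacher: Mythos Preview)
Your proposal is correct and follows essentially the same route as the paper: instantiate Theorem~\ref{thm:convergence_adaptive_eps} with $p=1$, plugging in $\Cconst \leq L\diam_{\|\cdot\|_{\infty,1}}(\MARG)^2$ from Lemma~\ref{lem:trw_bounded_lip_eps} and the uniform-gap bound from Lemma~\ref{lem:bounded_gu0}. Your write-up is in fact more explicit than the paper's (which is a three-sentence proof), particularly in tracking the factor-of-8 convention for $\gunifBound$ and the simplification $\max(8,p+2)=8$.
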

\begin{proof}
	Using $\diam_{\|\cdot \|_{\infty,1}}(\MARG) \leq 2$,
	we bound $\Cconst\leq L\diam_{\|\cdot \|_{\infty,1}}(\MARG)^2$ with $L$ (from Property \ref{prop:prop_bounded_grad}) derived in Lemma \ref{lem:trw_bounded_lip_eps}. 
	We bound $-8g_u(\vmu^{(k)})$ (the upper bound on the negative uniform gap) using the value derived in Lemma \ref{lem:bounded_gu0}.
	The rate then follows directly from Theorem \ref{thm:convergence_adaptive_eps} using $p=1$ (see Lemma \ref{lem:trw_bounded_lip_eps} where $L_\shrinkAmount \leq \frac{L}{\delta}$).
\end{proof}
The dominant term in Lemma~\ref{thm:convergence_adaptive_eps_trw} is $\Cconst B \, k^{-\frac{1}{2}}$, with $\Cconst B  = O( \|\vtheta \|_{1, \infty} |V|)$.
We thus find that both bounds depend on norms of $\vtheta$. This is unsurprising since large potentials
drive the solution of the marginal inference problem away from the centre of $\MARG$, corresponding to regions of high entropy,
and towards the boundary of the polytope (lower entropy). Regions of low entropy correspond to smaller components
of the marginal vector, which in turn result in larger and poorly behaved gradients of $-\TRW$, which slows down the resulting optimization.

\section{Correction and Local Search Steps in Algorithm \ref{alg:algInfadaptive}}

Algorithm~\ref{alg:algReopt} details the $\CORRECTION$ procedure used in line~16 of Algorithm~\ref{alg:algInfadaptive} to implement the correction step of the FCFW algorithm.
It uses the modified Frank-Wolfe algorithm (FW with away steps), as detailed in Algorithm~\ref{alg:MFW}.
Algorithm~\ref{alg:algLocalSearch} depicts the $\LOCALSEARCH$ procedure used in line~17 of Algorithm~\ref{alg:algInfadaptive}. The local search is performing FW over $\Meps$ for
a fixed $\shrinkAmount$ using the iterated conditional mode algorithm as an approximate
FW oracle. This enables the finding in a cheap of way of more vertices to augment
the correction polytope $V$.

\begin{algorithm}
	\caption{Re-Optimizing over correction polytope $V$ using MFW, $f$ is the negative TRW objective}
	\label{alg:algReopt}
	\begin{algorithmic}[1]
		\STATE $\CORRECTION(\x^{(0)},V,\shrinkAmount,\vrho)$
		\STATE Let $f(\cdot) := \text{-TRW}(\cdot;\vtheta,\vrho)$; we use MFW to optimize over the contracted correction polytope $\conv(V_\shrinkAmount)$ 
		where $V_\shrinkAmount := (1-\shrinkAmount) V + \shrinkAmount \unif$.
		\STATE Let $\epsilon$ be the desired accuracy of the approximate correction.
		\STATE Let $\bm{\alpha}^{(0)}$ be such that $\x^{(0)} = \sum_{\vv \in V_\shrinkAmount} \alpha^{(0)}_\vv \vv$.
		\STATE $\x^{(\mathrm{new)}} \leftarrow \textbf{MFW}(\x^{(0)}, \bm{\alpha}^{(0)}, V_\shrinkAmount, \stopCrit)$ \algComment{see Algorithm~\ref{alg:MFW}}
		\STATE \textbf{return} $\x^{(\mathrm{new)}}$
	\end{algorithmic}
\end{algorithm}

\begin{algorithm}
	\caption{Local Search using Iterated Conditional Modes, $f$ is the negative TRW objective}
	\label{alg:algLocalSearch}
	\begin{algorithmic}[1]
		\STATE $\LOCALSEARCH(\x^{(0)},\vv_{\mathrm{init}},\shrinkAmount,\vrho)$
		\STATE $\s^{(0)}\leftarrow \vv_{\mathrm{init}}$
		\STATE $V\leftarrow {\emptyset}$
		\FOR{$k=0\ldots \MAXITS$}
		 \STATE $\tilde{\theta} = \nabla f(\xk;\vtheta,\vrho)$
		 \STATE $\snext\leftarrow $ICM$(-\tilde{\theta},\sk)$ \quad\emph{\small (Approximate FW search using ICM; \\ \hspace{35mm} we initialize ICM at previously found vertex $\sk$)}
		 \STATE $\snext_{(\shrinkAmount)} \leftarrow (1-\shrinkAmount)\s^{(k+1)}+\shrinkAmount\unif$
		 \STATE $V\leftarrow V \cup \{\snext\}$
		 \STATE $\dkeps\leftarrow\snext_{(\shrinkAmount)}-\x^{(k)}$
		 \STATE Line-search: $\stepsize_k \in \displaystyle\argmin_{\stepsize \in [0,1]} \textstyle f\left(\x^{(k)} + \stepsize \dkeps\right)$
 		 \STATE Update $\x^{(k+1)} := \x^{(k)} + \stepsize_k  \dkeps$  \algComment{FW update}
		\ENDFOR 
		\STATE \textbf{return} $\xnext,V$
	\end{algorithmic}
\end{algorithm}

\section{Comparison to perturbAndMAP}

\textbf{Perturb \& MAP.} We compared the performance between our method and perturb \& MAP for 
inference on $10$ node Synthetic cliques. We expand on the method we used to evaluate perturbAndMAP in Figure~\ref{fig:syntheticComplete_tightening_l1} 
and~\ref{fig:syntheticComplete_tightening_logz}. 
We re-implemented the algorithm to estimate the partition function in Python (as described in \citet{hazan2012partition}, Section 4.1) 
and used toulbar2~\citep{allouche2010toulbar2}
to perform MAP inference over an inflated graph where every variable maps to
five new variables. 
The log partition function is estimated as the mean energy of 10 exact MAP calls
on the expanded graph where the single node potentials are perturbed by draws from the Gumbel distribution.
To extract marginals, we fix the value of a variable to every assignment, estimate the log partition function 
of the conditioned graph and compute beliefs based on averaging the results of adding the unary potentials
to the conditioned values of the log partition function. 
\section{Correction Steps for Frank-Wolfe over $\MARG$}
Recall that the correction step is done over the correction polytope, 
the set of all vertices of $\MARG$ encountered thus far in the algorithm.
On experiments conducted over $\MARG$, we found that using a better correction algorithm often \emph{hurt} performance.
This potentially arises in other constrained optimization problems
where the gradients are unbounded at the boundaries of the polytope. We found that better correction steps
over the correction polytope (the convex hull of the vertices explored by the MAP solver, denoted $V$ in Algorithm \ref{alg:algInfadaptive}), often resulted in a solution at or near a boundary of the marginal polytope (shared
with the correction polytope).
This resulted in the iterates becoming too small. We know that the Hessian of $\TRW$ is ill conditioned
near the boundaries of the marginal polytope. Therefore, we hypothesize that this is because the 
gradient directions obtained
when the iterates became too small are simply less informative.
Consequently, the optimization over $\MARG$ suffered. We found that the duality gap over $\MARG$ 
would often increase after a correction step when this phenomenon occurred.
The variant of our algorithm based on $\Meps$ is less sensitive to this issue since the restriction 
of the polytope bounds the smallest marginal and therefore also controls the quality of the gradients obtained.

\section{Additional Experiments}

For experiments on the $10$ node synthetic cliques, we can also track the average number of ILP calls required to converge to a fixed 
duality gap
for any $\theta$. This is depicted in Figure~\ref{fig:syntheticILPvstheta}. 
Optimizing over $\TREEPOL$ realized three to four times as many MAP calls as the first iteration of inference. 

Figure~\ref{fig:chineseChar_appendix} depicts additional examples from the Chinese Characters test set. Here, we also visualize results
from a wrapper around TRBPs implementation in libDAI~\citep{Mooij_libDAI_10} that performs tightening over $\TREEPOL$. 
Here too we find few gains over optimizing over $\mathbb{L}$.

Figure~\ref{fig:MvsM_eps_l1_1},~\ref{fig:MvsM_eps_l1_2} depicts the comparison of convergence of algorithm variants 
over $\MARG$ and $\Meps$ (same setup as Figure~\ref{fig:M_M_eps},~\ref{fig:M_M_eps2}. Here, we plot~$\zeta_{\mu}$.

\begin{figure}
\centering
\includegraphics[width=0.8\textwidth]{./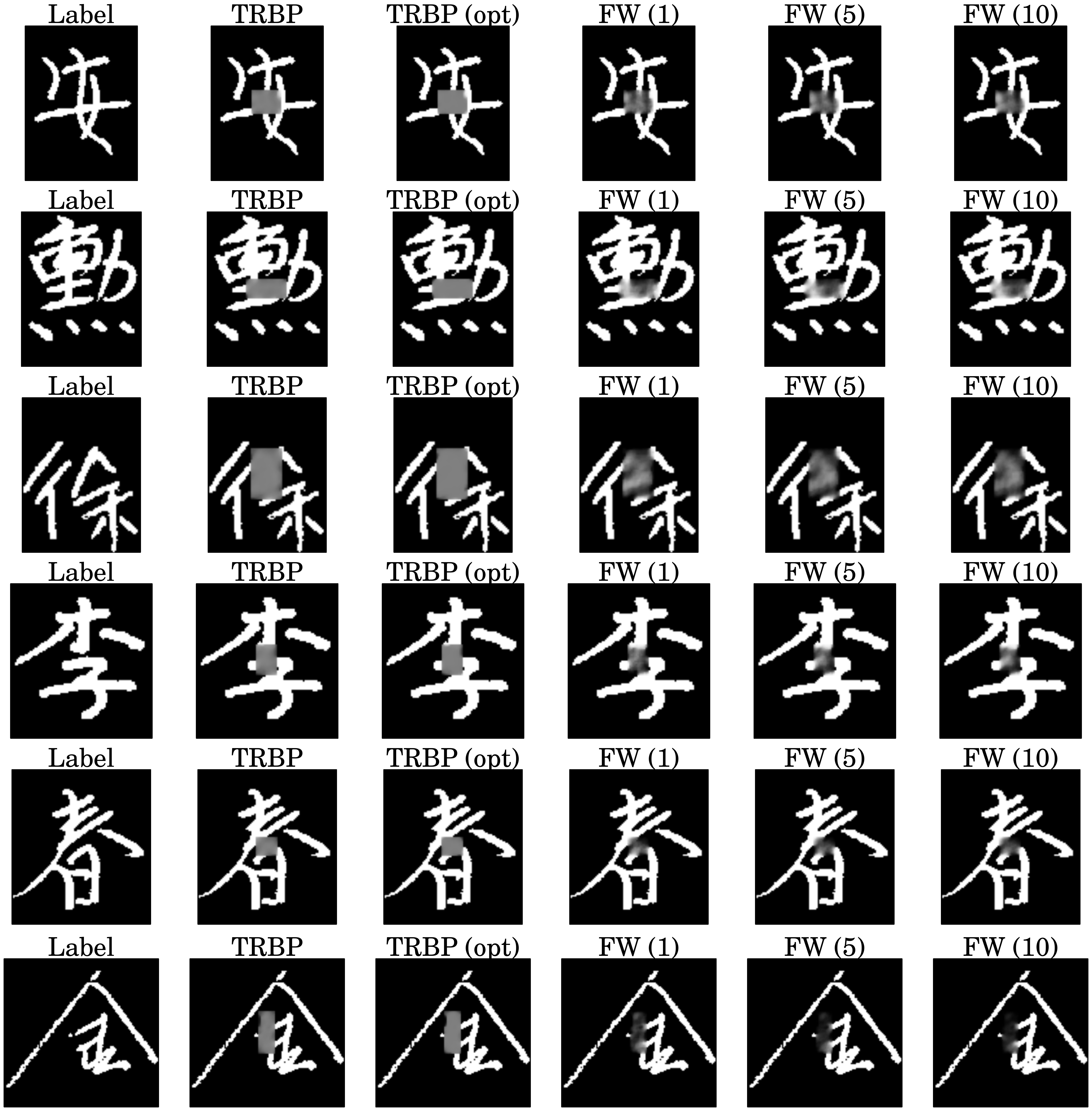}
\caption{Chinese Characters : Additional Experiments. TRBP (opt) denotes our implementation of tightening over $\TREEPOL$ using a wrapper over libDAI~\citep{Mooij_libDAI_10}}
\label{fig:chineseChar_appendix}
\end{figure}

\begin{figure}
\centering
\subfigure[\small{$\zeta_{\mu}$: $5\times5$ grid, $\MARG$ vs $\MARG_{\shrinkAmount}$}]{
	\includegraphics[height=6cm,width=4cm,keepaspectratio]{./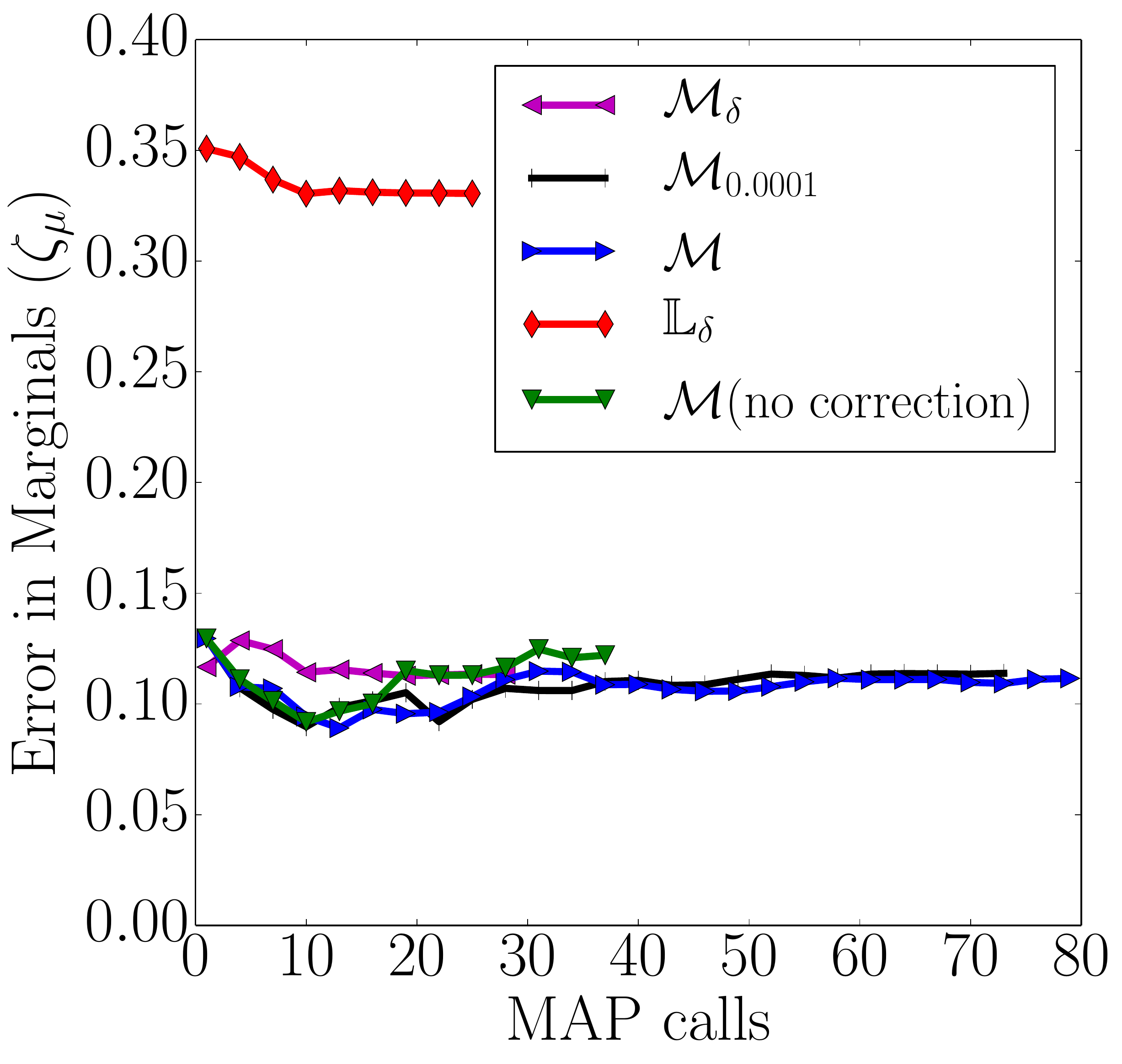}
\label{fig:MvsM_eps_l1_1}
}
\centering
\subfigure[\small{$\zeta_{\mu}$: $10$ node clique, $\MARG$ vs $\MARG_{\shrinkAmount}$}]{
\includegraphics[height=6cm,width=4cm,keepaspectratio]{./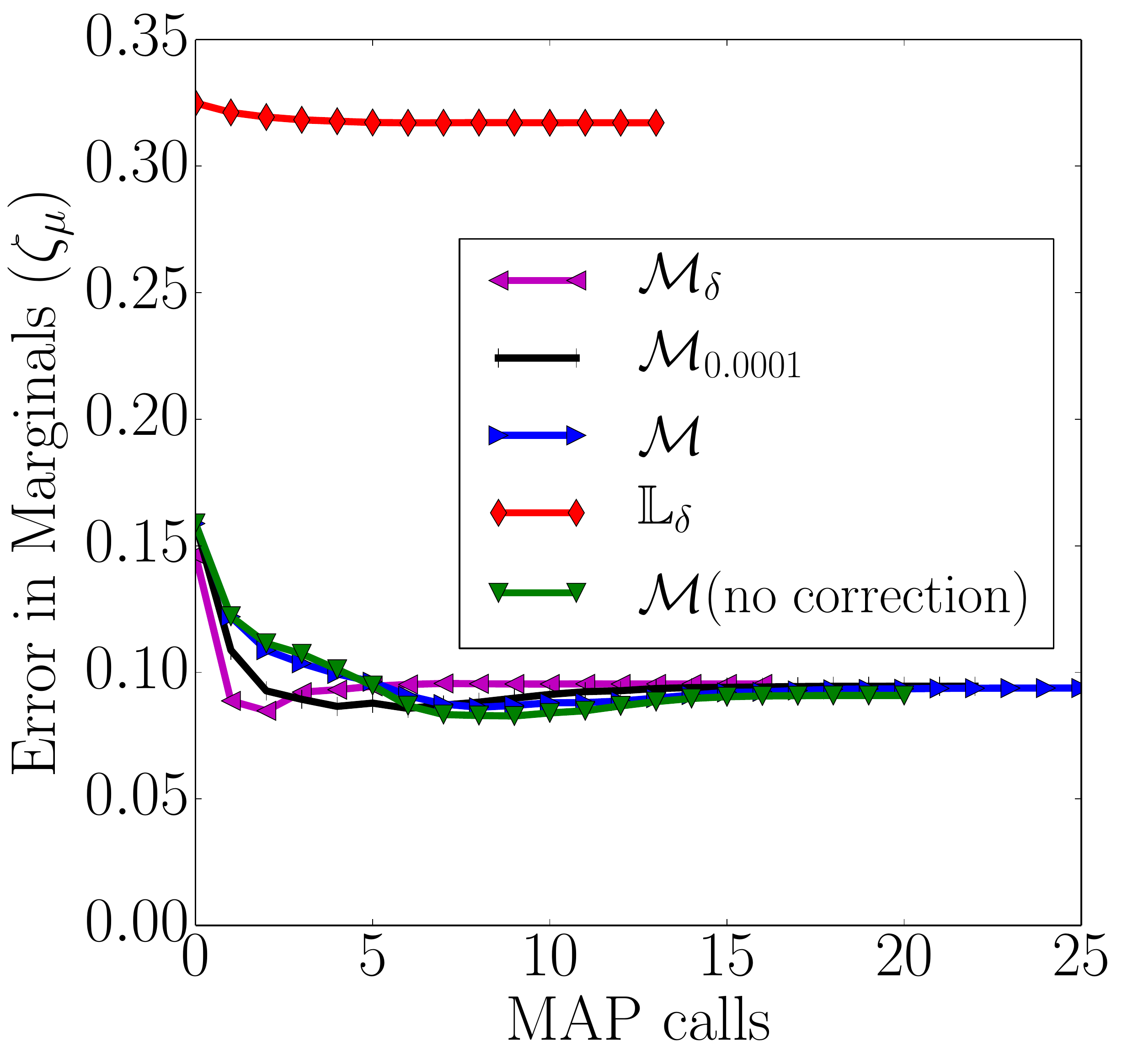}
\label{fig:MvsM_eps_l1_2}
}
\centering
\subfigure[\small{Average ILP calls versus $\theta$: $10$ node clique}]{
\includegraphics[height=4cm,width=5cm,keepaspectratio]{./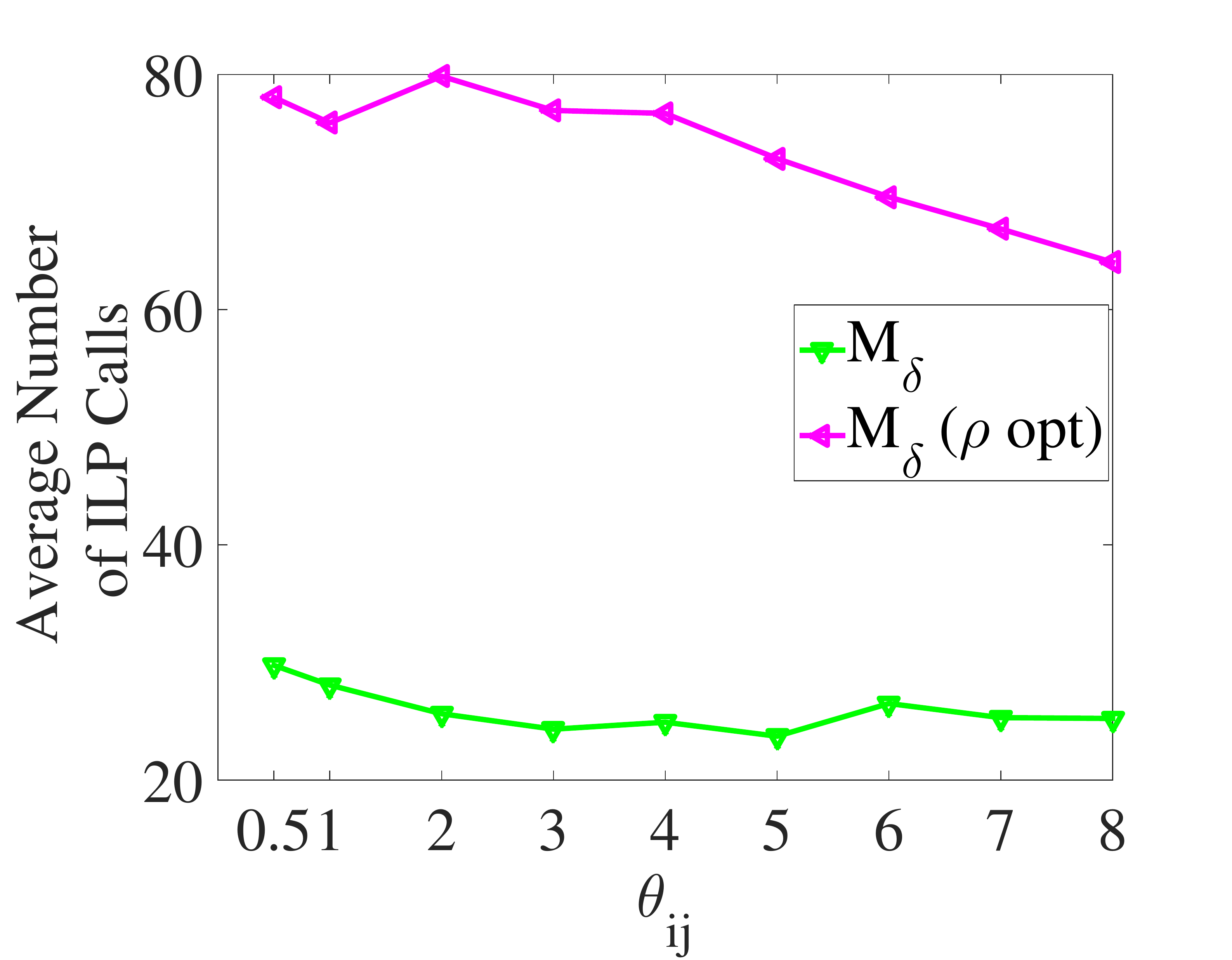}
\label{fig:syntheticILPvstheta}
}
\caption{Figure~\ref{fig:MvsM_eps_l1_1},~\ref{fig:MvsM_eps_l1_2} depict $\zeta_{\mu}$ corresponding to the experimental setup in Figure~\ref{fig:M_M_eps},~\ref{fig:M_M_eps2} respectively. Figure \ref{fig:syntheticILPvstheta} explores the average number of ILP calls taken to convergence with and without optimizing over $\vrho$} 
\vspace{-3mm}
\end{figure}

\section{Bounding $\log Z$ with Approximate MAP Solvers\label{sec:approx_map_logz}}

Suppose that we use an approximate MAP solver for line 7 of Algorithm \ref{alg:algInfadaptive}.
We show in this section that if the solver returns an {\em upper bound} on the value of the MAP
assignment (as do branch-and-cut solvers for integer linear programs), we can use this to get an upper bound on $\log Z$.
For notational consistency, we consider using Algorithm
\ref{alg:algInfadaptive} for $\min_{\x\in\DOM}f(\x)$, where
$f(\x) = -\TRW$ is convex, $\x=\vmu$, and $\DOM = \MARG$.

The property that the duality gap may be used as a certificate of optimality \citep{jaggi2013revisiting} gives us:
\begin{equation}
	\label{eqn:dual_gap_cert}
f(\bm{x}^*)\geq\fk-\gk\implies -f(\bm{x}^*)\leq -\fk + \gk.
\end{equation}
Adding the gap onto the TRW objective yields an upper bound on the optimum (which from Equation \ref{eqn:upperBoundPartition} is an
upper bound on $\log Z$), i.e. $\log Z\leq -f(\bm{x}^*)$. From our definition of the duality gap $\gk$ (line 8 in Algorithm \ref{alg:algInfadaptive}) and \eqref{eqn:dual_gap_cert}, we have:
\begin{align*}
\log Z \leq -f(\bm{x}^*)     &\leq -\fk + \innerProd{-\nabla \fk}{\sk-\xk} \\
&= -\fk + \underbrace{\innerProd{-\nabla \fk}{\sk}}_{\text{MAP call}} - \underbrace{\innerProd{-\nabla \fk}{\xk}}_{\text{Can be computed efficiently}},
\end{align*}
where $\sk = \argmin_{\vv\in\DOM}\innerProd{\nabla \fk}{\vv} =
\argmax_{\vv\in\DOM}\innerProd{-\nabla \fk}{\vv}$ (line~7 in Algorithm~\ref{alg:algInfadaptive}). Thus, if the approximate MAP solver returns
an upper bound $\kappa$ such that $\max_{\vv\in\DOM}\innerProd{-\nabla
  \fk}{\vv} \leq \kappa$, then we get the following upper bound on the
log-partition function:
\begin{equation}
\log Z \leq -\fk + \kappa - \innerProd{-\nabla \fk}{\xk}.
\end{equation}

For example, we could use a linear programming relaxation or a
message-passing algorithm based on dual decomposition such as
\citet{SontagEtAl_uai08} to obtain the upper bound $\kappa$.
There is a subtle but important point to note about this approach.
Despite the fact that we may use a relaxation
of $\MARG$ such as $\LOCAL$ or the cycle relaxation to compute the upper bound, we evaluate it at $\vmu^{(k)}$
that is {\em guaranteed} to be within $\MARG$. This should be
contrasted to instead optimizing over a relaxation such as $\LOCAL$ directly with Algorithm \ref{alg:algInfadaptive}. 
In the latter setting, the moment we move towards a fractional vertex (in line ~14)
we would immediately take $\vmu^{(k+1)}$ out of $\MARG$. Because
of this difference, we expect that this approach will typically result in
significantly tighter upper bounds on $\log Z$.

\bibliographystylesup{abbrvnat}
\small{
\bibliographysup{ref}
}

\end{document}